\setlist{nosep}
\newcommand{\norm}[1]{\left\Vert #1 \right\Vert}
\newcommand{\R}{\mathbb{R}}
\newcommand{\B}{\mathbb{B}}
\newcommand{\E}{\mathbb{E}}
\newcommand{\iprod}[1]{\langle #1 \rangle}
\newcommand{\cB}{\mathcal{B}}
\newcommand{\cA}{\mathcal{A}}
\newcommand{\argmin}{\mathop{\text{argmin}}}
\newcommand{\bad}{\mathcal{B}}
\newcommand{\ftrl}{\mathcal{A}_{\text{ftrl}}}
\newcommand{\ogd}{\mathcal{A}_\text{ogd}}
\newcommand{\less}{\mathcal{A}_\text{hints}}
\newcommand{\goodindex}{S}
\newcommand{\unconstrained}{\mathcal{A}_\text{unc}}
\newcommand{\unconstrainedoned}{\mathcal{A}_\text{unc-1D}}
\newtheorem{theorem}{Theorem}[section]
\newtheorem{lemma}[theorem]{Lemma}
\newtheorem{proposition}[theorem]{Proposition}
\newtheorem{defn}[theorem]{Definition}
\newcommand{\mnote}[1]{\textcolor{red}{[MP: #1]}}
\newcommand{\regret}{\mathcal{R}}
\newcommand{\cost}{\mathrm{cost}}
\newcommand{\query}{\mathcal{Q}}
\title{Logarithmic Regret from Sublinear Hints}
\author{
  Aditya Bhaskara \\
  Department of Computer Science\\
  University of Utah\\
  Salt Lake City, UT \\
  \texttt{bhaskaraaditya@gmail.com} \\
   \And
   Ashok Cutkosky \\
   Dept. of Electrical and Computer Engineering\\
   Boston University \\
   Boston, MA \\
   \texttt{ashok@cutkosky.com} \\
   \AND
   Ravi Kumar \\
   Google Research \\
   Mountain View, CA \\
   \texttt{ravi.k53@gmail.com} \\
   \And
   Manish Purohit \\
   Google Research \\
   Mountain View, CA \\
   \texttt{mpurohit@google.com} \\
}
\begin{document}

\maketitle

\begin{abstract}
    We consider the online linear optimization problem, where at every step the algorithm plays a point $x_t$ in the unit ball, and suffers loss $\iprod{c_t, x_t}$ for some cost vector $c_t$ that is then revealed to the algorithm. Recent work showed that if an algorithm  receives a {\em hint} $h_t$ that has non-trivial correlation with $c_t$ before it plays $x_t$, then it can achieve a regret guarantee of $O(\log T)$, improving on the bound of $\Theta(\sqrt{T})$ in the standard setting. In this work, we study the question of whether an algorithm really requires a hint at {\em every} time step. Somewhat surprisingly, we show that an algorithm can obtain $O(\log T)$ regret with just $O(\sqrt{T})$ hints under a natural query model; in contrast, we also show that $o(\sqrt{T})$ hints cannot guarantee better than $\Omega(\sqrt{T})$ regret. We give two applications of our result, to the well-studied setting of optimistic regret bounds and to the problem of online learning with abstention. 
\end{abstract}

\section{Introduction}

There has been a spate of work on improving the performance of online algorithms with the help of externally available hints.  The goal of these works is to circumvent worst-case bounds and exploit the capability of machine-learned models that can potentially provide these hints.  There have been two main lines of study.  The first is for combinatorial problems, where the goal has been to be improve the competitive ratio of online algorithms; problems considered here include ski-rental~\cite{gollapudi2019online,KPS18},  caching~\cite{jiang2020online,lykouris2018competitive,rohatgi2020near}, scheduling~\cite{BamasMRS20,im2021non,KPS18,mitzenmacher2020scheduling}, matching~\cite{KPSSV19,flow-prediction}, etc. The second is in the learning theory setting, where the goal has to been to improve the regret of online optimization algorithms.  A series of recent papers showed how to achieve better regret guarantees, assuming that we have a hint about the cost function before the algorithm makes a choice. For many variants of the online convex optimization problem, works such as~\cite{rakhlin2013online, dekel2017online, olo-hints, mhints} studied the power of having prior information about a cost function. Works such as~\cite{WeiLuo} have also studied improved regret bounds in partial information or bandit settings. In all these works, a desirable property is to ensure \emph{consistency}, which demands better performance with better quality hints, and \emph{robustness}, which guarantees a certain level of performance with poor quality or even adversarially bad hints.

Recall the standard online optimization model~\cite{zinkevich2003online}, which is a game between an algorithm and an adversary.  In each round, the algorithm plays a point and the adversary responds with a cost function that is visible to the algorithm, and the cost in this round is measured by evaluating the cost function on the played point. In online linear optimization, the cost function is linear.  The regret of an algorithm is the worst-case difference between the cost of the algorithm and the cost of an algorithm that plays a fixed point in each round.  A natural way to incorporate hints or prior information is to give the algorithm access to a hint about the cost function at a given round \emph{before} it chooses a point to play at that round. In this sense,  hints are present \emph{gratis}~\cite{rakhlin2013online}.

The availability of a hint in each round seems natural in some settings, e.g., when cost functions change gradually with time~\cite{rakhlin2013online, bubeck19b}). However, it can be prohibitive in many others, for instance, if hints are obtained by using expensive side information, or if they are generated by a running a computationally expensive ML model.  Furthermore, hints can also be wasteful when the problem instance, or even a large sub-instance, is such that the algorithm cannot really derive any substantial benefit from their presence.  This leads to the question of strengthening the online learning with hints model by making it parsimonious.  In this work, we pursue this direction, where we offer the algorithm the flexibility to \emph{choose} to ask for a hint before it plays the point.  It then becomes an onus on the algorithm to know when to ask for a hint and how to use it judiciously, while ensuring both  consistency and robustness.  The performance of such an algorithm is measured not only by its regret but also by the number of hints it uses.  

\vspace{-6pt}
\paragraph{Our contributions.}
We now present a high level summary of our results. For ease of exposition, we will defer the formal statements to the respective sections. All our results are for the problem of online {\em linear} optimization (OLO) when the domain is the unit $\ell_2$ ball; the cost function at every step is defined using a cost vector $c_t$ (and the cost or loss is the inner product of the point played with $c_t$). We call a hint \emph{perfect} if it is the same as the cost vector at that time step, \emph{good} if it is weakly correlated with the cost vector, and \emph{bad} otherwise.  

As our main result, we show that for OLO in which hints are guaranteed to be good whenever the algorithm asks for a hint, there is an efficient randomized algorithm that obtains $O(\log T)$ regret using only $O(\sqrt{T})$ hints. We extend our result to the case when $|\cB|$ hints can be bad (chosen in an oblivious manner, as we will discuss later), and give an algorithm that achieves a regret bound of $O(\sqrt{|\cB|} \log T)$, while still asking for $O(\sqrt{T})$ hints.
It is interesting to contrast our result with prior work: Dekel et al.~\cite{dekel2017online} obtained an algorithm with $O(\log T)$ regret, when a good hint is available in every round.  Bhaskara et al.~\cite{olo-hints} made this result robust, obtaining a regret bound of $O(\sqrt{|\cB|} \log T)$, when there are $|\cB|$ bad hints.  Our result improves upon these works by showing the same asymptotic regret bounds, but using only $O(\sqrt{T})$ hints.  Our result also has implications for optimistic regret bounds~\cite{rakhlin2013online} (where we obtain the same results, but with fewer hints) and for online learning with abstention~\cite{neu2020fast} (where we can bound the number of abstentions).
 
We also show two lower bounds that show the optimality of our algorithm.  The first is regarding the minimum number of hints needed to get $O(\log T)$ regret: we show that any (potentially randomized) algorithm that uses $o(\sqrt{T})$ hints will suffer a regret of $\Omega(\sqrt{T})$.  The second and more surprising result is the role of randomness: we show that any deterministic algorithm that obtains $O(\log T)$ regret must use $\Omega(T/\log(T))$ hints, even if each of them is perfect. This shows the significance of having a randomized algorithm and an oblivious adversary.

Finally we extend our results to the unconstrained OLO setting (see Section~\ref{sec:unconstrained}), where we design 
a  deterministic algorithm to obtain $O(\log^{3/2} T)$ regret (suitably defined for the unconstrained case) when all hints are good, and 
a randomized algorithm to obtain $O(\log T)$ regret, which can be extended to the presence of bad hints.

There are three aspects of our results that we find surprising.  The first is even the possibility of obtaining $O(\log T)$ regret using only a sublinear number of hints.  The second is the sharp threshold on the number of hints needed to obtain logarithmic regret; the regret does not gracefully degrade when the number of hints is below $O(\sqrt{T})$.  The third is the deterministic vs randomized separation between the constrained and the unconstrained cases, when all queried hints are good.

We now present some intuition why our result is plausible.  Consider the standard ``worst-case'' adversary for OLO: random mean-zero costs. This case is ``hard'' because the learner achieves zero expected cost, but the competitor achieves $-\sqrt{T}$ total cost. However, if we simply play a hint $-h_t$ on $O(\sqrt{T}/\alpha)$ rounds, each such round incurs $-\alpha$ cost, which is enough to cancel out the $\sqrt{T}$ regret while making only $O(\sqrt{T})$ hint queries. Thus, the standard  worst-case instances are actually \emph{easy} with hints. More details on this example and an outline of our algorithm are provided at the start of Section~\ref{sec:alg}. 

\paragraph{Organization.}
Section~\ref{sec:prelim} provides the necessary background.  The main algorithm and analysis for the constrained OLO case are in Section~\ref{sec:alg}.  The extensions and applications of this can be found in Section~\ref{sec:ext}.  Section~\ref{sec:lb} contains the lower bounds and Section~\ref{sec:unconstrained} contains the algorithms and analyses for the unconstrained case.  All missing proofs are in the Supplementary Material.  
\section{Preliminaries}
\label{sec:prelim}

Let $\|\cdot\|$ denote the $\ell_2$-norm and $\B^d = \{ x \in \R^d \mid \|x\| \leq 1\}$ denote the unit $\ell_2$ ball in $\R^d$.  We use the compressed sum notation and use $c_{1:t}$ to denote $\sum_{i=1}^t c_i$ and $\|c\|^2_{1:t}$ to denote $\sum_{i=1}^t \|c_i\|^2$.  Let $\vec{c} = c_1, \ldots, c_T$ be a sequence of cost vectors.  Let $[T] = \{1, \ldots, T\}$.

{\bf OLO problem and Regret.} The \emph{constrained online linear optimization} (OLO) problem is modeled as a game over $T$ rounds. At each time $t \in [T]$, an algorithm $\cA$ plays a vector $x_t \in \B^d$, and then an adversary responds with a \emph{cost vector} $c_t \in \B^d$. The algorithm incurs cost (or loss) $\iprod{x_t, c_t}$ at time $t$. The total cost incurred by the algorithm is $\cost_{\cA}(\vec{c}) = \sum_{t=1}^T \iprod{x_t, c_t}$. The regret of the algorithm $\cA$ with respect to a `comparator' or benchmark vector $u \in \B^d$ is 
\[
\regret_{\cA}(u, \vec{c}) = \cost_{\cA}(\vec{c}) - \cost_{\cA_u}(\vec{c}) = \sum_{t = 1}^T \iprod{x_t - u, c_t},
\]
where $\cA_u$ is the algorithm that always plays $u$ at every time step.  
The \emph{regret} of an algorithm $\cA$ is its worst-case regret with respect to all $u \in \B^d$:
\[
\regret_{\cA}(\vec{c}) = \sup_{u \in \B^d} \regret_{\cA}(u, \vec{c}).
\]

{\bf Hints and query cost.} Let $\alpha > 0$ be fixed and known. In this paper we consider the OLO setting where, at any round $t$ before choosing $x_t$, an algorithm $\cA$ is allowed to obtain a \emph{hint} $h_t \in \B^d$. If $\langle h_t, c_t\rangle \ge \alpha \|c_t\|^2$, we say that the hint is \emph{$\alpha$-good}. 
If $\cA$ opts to obtain a hint at time $t$, then it incurs a query cost of $\alpha \|c_t\|^2$; the query cost is $0$ if no hint was obtained at time $t$.  The definition of regret stays the same and we denote it by $\regret_{\cA, \alpha}(\cdot)$.  The total \emph{query cost} of $\cA$ is given by $\query_{\cA, \alpha}(\vec{c}) = \sum_{t = 1}^T \mathbbm{1}_{t} \cdot \alpha \|c_t\|^2$ where $\mathbbm{1}_{t}$ is an indicator function used to denote whether $\cA$ queried for a hint at time $t$. Note that the algorithm does not actually know the query cost for a round until the end of the round.
If $\cA$ is a randomized algorithm, the notions of expected regret and expected query cost follow naturally. We consider the setting when the adversarial choice of the hint $h_t$ and cost vector $c_t$ at time $t$ is oblivious to whether the algorithm queries for a hint at time $t$ but can depend adaptively on all previous decisions.

More generally, we consider the case that some subset of the hints are ``bad'' in the sense that $\langle c_t, h_t\rangle<\alpha \|c_t\|^2$; we let $\bad$ denote the set of such indices $t$. Although we assume $\alpha$ is known to our algorithms, we do \emph{not} assume any information about $\bad$. Further, our algorithm is charged $\alpha \|c_t\|^2$ for querying a hint \emph{even} if the hint was bad.

\section{Main algorithm}
\label{sec:alg}

\paragraph{Intuition and outline.} The high-level intuition behind our algorithm is the following: suppose for a moment that $\alpha = 1/2$ and each hint is $\alpha$-correlated with the corresponding cost. Now suppose the cost vectors $c_1, \dots, c_T$ are random unit vectors, as in the standard tight example for FTRL. In this case, if an algorithm were to make a hint query for the first $4 \sqrt{T}$ steps, set $x_t = -h_t$ in those steps, and play FTRL subsequently, then the cost incurred by the algorithm will be less than $-2\sqrt{T}$  in the first $4\sqrt{T}$ steps, and $0$ (in expectation) subsequently. On the other hand, for random vectors, we have $\norm{c_{1:T}} \le 2\sqrt{T}$ with high probability, and thus the best vector in hindsight achieves a total cost $- \norm{c_{1:T}} \ge -2\sqrt{T}$.  Thus the algorithm above actually incurs regret $\le 0$.

It turns out that the key to the above argument is $\norm{c_{1:T}}$ being small. In fact, suppose that $c_t$ are unit vectors, and assume that $\norm{c_{1:T}} \le T/4$. Now, suppose the algorithm makes a hint query at $10\sqrt{T}$ random indices, sets $x_t = -h_t$ in those steps, and uses FTRL in the other steps. One can show that the cost incurred by the algorithm is $-5 \sqrt{T}$ plus the cost of the FTRL steps. Since the cost in the FTRL steps is within $\sqrt{T}$ of the cost incurred by the competitor $u$, we can show that the regret is once again $\le 0$. The missing subtlety here is accounting for the cost of $u$ on the query steps, but using the bound on $\norm{c_{1:T}}$, this can be adequately controlled if the queries are done at random.

The above outline suggests that the difficult case is $\norm{c_{1:T}}$ being {\em large}. However, it turns out that prior work of Huang et al.~\cite{HuangEtAl} showed that when the domain is the unit ball, FTL achieves logarithmic regret if we have $\norm{c_{1:t}} \ge \Omega(t)$ for all $t$. 

Our algorithm can be viewed as a combination of the two ideas above. If we could identify the largest round $S$ for which $\norm{c_{1:S}}$ is small, then we can perform uniform sampling until round $S$ and use FTL subsequently, and hope that we have logarithmic regret overall (although it is not obvious that the two bounds {\em compose}). The problem with this is that we do not know the value of $S$. We thus view the problem of picking a sampling probability as a one-dimensional OLO problem in itself, and show that an online gradient descent (OGD) algorithm achieves low regret when compared to all sequences that have the structure of being $\delta$ until a certain time and $0$ thereafter (which captures the setting above).  The overall algorithm thus picks the querying probability using the OGD procedure, and otherwise resorts to FTRL, as in the outline above. 

We now present the details of the overall algorithm (Section~\ref{sec:full-alg}) and the two main subroutines it uses (Sections~\ref{sec:passiveftrl} and \ref{sec:dynamic}).



\subsection{A sharper analysis of FTRL}
\label{sec:passiveftrl}

In this section we consider the classic adaptive \emph{Follow the Regularized Leader} (FTRL) algorithm $\ftrl$ (Algorithm~\ref{alg:ftrl}) and show a regret bound that is better than the usual one, if the length of the aggregate cost vector grows ``rapidly'' after a certain initial period. 

%
%
For convenience, let $\sigma_t  = \norm{c_t}^2$.  Define the regularizer terms as $r_0 = 1$ and for $t \ge 1$, let
\begin{equation}  
r_t = \sqrt{1 + \sigma_{1:t}} - \sqrt{1 + \sigma_{1:t-1}}.
\label{eq:rt-definition}
\end{equation}
By definition, we have $r_{0:t} = \sqrt{1 + \sigma_{1:t}}$. Furthermore, we have $r_t < 1$ for all $t$, since $\sigma_t = \norm{c_t}^2 \le 1$.  The FTRL algorithm $\ftrl$ then plays the points $x_1, x_2, \dots$, which are defined as 
\begin{align}
    x_{t+1} &= \argmin_{\|x\|\le 1} \left\{ \langle c_{1:t},x\rangle  + \frac{r_{0:t}}{2}\|x\|^2 \right\}. \label{eq:ftrl-iterate}
\end{align}

%

\noindent
\begin{minipage}{\textwidth}
\begin{minipage}[t]{0.5\textwidth}
\begin{algorithm}[H]
   \caption{Adaptive FTRL $\ftrl$.}
   \label{alg:ftrl}
   \begin{algorithmic}
    \STATE $x_1 \leftarrow 0, r_0 \leftarrow 1$
    \FOR{$t=1,\dots,T$}
    	\STATE Play point $x_t$
    	\STATE Receive cost $c_t$
    	\STATE $r_{0:t} \leftarrow \sqrt{1 + \|c\|^2_{1:t}}$
    	\STATE $x_{t+1} \leftarrow \displaystyle{\argmin_{\|x\|\le 1}} \left\{ \langle c_{1:t},x\rangle  + \frac{r_{0:t}}{2}\|x\|^2 \right\}$
	\ENDFOR
\end{algorithmic}
\end{algorithm}
\end{minipage}
\begin{minipage}[t]{0.5\textwidth}
\begin{algorithm}[H]
   \caption{OGD with shrinking domain $\ogd$.}
   \label{alg:ogd}
   \begin{algorithmic}
   \REQUIRE Parameter $\lambda$
    \STATE $p_1 \gets 0, D_1 \gets [0, 1]$
    \FOR{$t=1,\dots,T$}
    	\STATE Play point $p_t$
    	\STATE Receive cost $z_t$ and $\sigma_t\le 1$ 
    	\STATE \COMMENT{$\sigma_t$ will eventually be set to $\|c_t\|^2$}
    	\STATE $D_t \gets [0, \min(1, \frac{\lambda}{\sqrt{1+\sigma_{1:t}}})]$
    	\STATE $\eta_t \gets \frac{\lambda}{1 + \sigma_{1:t}}$
    	\STATE $p_{t+1} \gets \Pi_{D_t} \left( p_t - \eta_t z_t \right)$, where $\Pi_{D_t}$ is the projection to the interval
	\ENDFOR
\end{algorithmic}
\end{algorithm}
\end{minipage}
\end{minipage}

We will show that $\ftrl$ satisfies the following refined regret guarantee:

\begin{theorem}\label{thm:ftrlregret}
Consider $\ftrl$ on a sequence $c_t$ of cost vectors and let $\alpha \in (0, 1)$ be any parameter.  Suppose that $S$ is an index such that for all $t > S$, $\norm{c_{1:t}} \ge \frac{\alpha}{4} (1 + \sigma_{1:t})$ (recall $\sigma_t=\|c_t\|^2$).  Then,
\begin{enumerate}
\item For all $N \in [T]$ and for any $\norm{u} \le 1$, we have 
\[ \sum_{t =1}^N \iprod{c_t, x_t - u} \le 4.5 \sqrt{1 + \sigma_{1:N}}.  \]
\item For the index $S$ defined above, we have the refined regret bound:  
\begin{equation}\label{eq:ftrl-to-show}	
\regret_{\ftrl}(\vec{c})
\le \frac{\sqrt{1 + \sigma_{1:\goodindex}}}{2} + \frac{18 + 8 \log(1 + \sigma_{1:T})}{\alpha} + \|c_{1:\goodindex}\| + \sum_{t=1}^\goodindex \iprod{c_t, x_t}.
\end{equation}
\end{enumerate}
\end{theorem}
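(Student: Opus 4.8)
The plan is to first establish a general FTRL regret bound in terms of the regularizer increments $r_t$ and the "stability" of the iterates, then specialize it using the hypothesis $\norm{c_{1:t}} \ge \tfrac{\alpha}{4}(1+\sigma_{1:t})$ for $t > S$ to convert the usual $O(\sqrt{1+\sigma_{1:T}})$ term into a logarithmic one. Part (1) is the standard adaptive-FTRL bound: with regularizer $\psi_t(x) = \tfrac{r_{0:t}}{2}\norm{x}^2$ and $r_{0:t} = \sqrt{1+\sigma_{1:t}}$, the textbook "follow-the-leader / be-the-leader" argument gives $\sum_{t=1}^N \iprod{c_t, x_t - u} \le \psi_N(u) + \sum_{t=1}^N \bigl(\iprod{c_t, x_t - x_{t+1}} - \text{(Bregman term)}\bigr)$, and bounding each per-step term by $\norm{c_t}^2 / r_{0:t} \le \sigma_t/\sqrt{1+\sigma_{1:t-1}}$ together with $\psi_N(u) \le \tfrac12\sqrt{1+\sigma_{1:N}}$ and the telescoping/integral estimate $\sum_t \sigma_t/\sqrt{1+\sigma_{1:t}} \le 2\sqrt{1+\sigma_{1:N}}$ yields a constant (I expect $\le 4.5$) times $\sqrt{1+\sigma_{1:N}}$. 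I would reuse whatever adaptive-FTRL lemma the paper has available; if none, I'd prove this inline via the standard argument.

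For Part (2), the idea is to split the horizon at $S$: write $\regret_{\ftrl}(\vec c) = \sum_{t=1}^S \iprod{c_t, x_t - u} + \sum_{t=S+1}^T \iprod{c_t, x_t - u}$ for the worst-case $u = -c_{1:T}/\norm{c_{1:T}}$. The first sum I bound crudely: $\sum_{t=1}^S \iprod{c_t, x_t} - \iprod{c_{1:S}, u} \le \sum_{t=1}^S \iprod{c_t, x_t} + \norm{c_{1:S}}$, which is exactly the last two terms of \eqref{eq:ftrl-to-show}. The second sum is where the refined analysis kicks in. Here I would run the FTRL regret argument only over rounds $S+1, \dots, T$, but crucially note that the comparator term and the regularizer at time $S$ combine: the "potential" $\tfrac{r_{0:S}}{2}\norm{x}^2 + \iprod{c_{1:S}, x}$ is already paid for, so the regret on the tail is controlled by $\tfrac{r_{0:S}}{2} + \sum_{t>S} (\text{per-step stability term})$, and the $\tfrac{r_{0:S}}{2} = \tfrac12\sqrt{1+\sigma_{1:S}}$ is the first term of \eqref{eq:ftrl-to-show}. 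The key improvement: for $t > S$, because $\norm{c_{1:t}}$ is large (at least $\tfrac{\alpha}{4}(1+\sigma_{1:t})$), the FTRL iterate $x_{t+1} = -c_{1:t}/\max(r_{0:t}, \norm{c_{1:t}})$ actually has $\norm{x_{t+1}} = 1$ once $\norm{c_{1:t}} \ge r_{0:t}$, i.e.\ the iterate lies on the boundary and is essentially $-c_{1:t}/\norm{c_{1:t}}$; this makes $x_t$ and $x_{t+1}$ close — the per-step movement $\norm{x_{t+1} - x_t}$ is $O(\norm{c_t}/\norm{c_{1:t}}) = O(\sigma_t^{1/2}/(\alpha(1+\sigma_{1:t})))$ roughly — and hence $\iprod{c_t, x_t - x_{t+1}} = O(\sigma_t/(\alpha(1+\sigma_{1:t})))$. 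Summing this over $t > S$ gives a telescoping-logarithmic bound $\sum_{t>S} \tfrac{\sigma_t}{\alpha(1+\sigma_{1:t})} = O\bigl(\tfrac{\log(1+\sigma_{1:T})}{\alpha}\bigr)$, matching the $\tfrac{8\log(1+\sigma_{1:T})}{\alpha}$ term; the additive constant $\tfrac{18}{\alpha}$ absorbs boundary effects near $t = S$ (where $\norm{c_{1:t}}$ may be comparable to, not much bigger than, $r_{0:t}$) and the first few tail steps.

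The main obstacle I anticipate is making the "iterate on the boundary $\Rightarrow$ small movement" estimate precise and uniform. One has to carefully compare $x_{t+1} = -c_{1:t}/\max(\sqrt{1+\sigma_{1:t}}, \norm{c_{1:t}})$ with $x_t = -c_{1:t-1}/\max(\sqrt{1+\sigma_{1:t-1}}, \norm{c_{1:t-1}})$; both numerator (adding $c_t$) and denominator change, and near $t = S$ the max can switch from one branch to the other, so the clean bound $\norm{x_{t+1}-x_t} \le \norm{c_t}/\norm{c_{1:t}} + \text{(denominator drift)}$ needs a short case analysis. The drift of the denominator $\max(r_{0:t}, \norm{c_{1:t}})$ is itself at most $\max(r_t, \norm{c_{1:t}} - \norm{c_{1:t-1}}) \le \norm{c_t}$, which divided by $\norm{c_{1:t}} \ge \tfrac{\alpha}{4}(1+\sigma_{1:t})$ gives another $O(\sigma_t^{1/2}/(\alpha(1+\sigma_{1:t})))$ contribution after multiplying by $\norm{c_t}$; so the bound survives, but the bookkeeping (and pinning down the exact constants $4.5$, $18$, $8$) is the tedious part. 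I would also double-check the corner case where $\norm{c_{1:t}} < r_{0:t}$ for some $t > S$ despite the hypothesis — the hypothesis $\norm{c_{1:t}} \ge \tfrac{\alpha}{4}(1+\sigma_{1:t}) \ge \tfrac{\alpha}{4}\sqrt{1+\sigma_{1:t}}\cdot\sqrt{1+\sigma_{1:t}}$ shows this can only happen when $1+\sigma_{1:t} \le 16/\alpha^2$, i.e.\ only for a bounded-length initial segment of the tail, whose total contribution is $O(1/\alpha)$ and folds into the $18/\alpha$ constant.
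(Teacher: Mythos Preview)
Your approach is essentially the paper's: split at $S$, use the boundary observation $\|x_{t+1}\|=1$ when $\|c_{1:t}\|\ge r_{0:t}$ to get the improved stability $\|x_t-x_{t+1}\|=O(\|c_t\|/\|c_{1:t}\|)$, sum to a logarithm, and absorb the initial segment (where $1+\sigma_{1:t}\le 16/\alpha^2$) into the $O(1/\alpha)$ constant. The paper packages the ``run FTRL only on $[S+1,T]$'' step as a generalized FTL lemma (induction stopped at $m=S$), which yields a bound on $\sum_{t\le S}\iprod{c_t,x_{S+1}-u}+\sum_{t>S}\iprod{c_t,x_t-u}$ rather than on the tail alone; this is what makes your vague ``potential already paid for'' step rigorous and avoids an extra $\|c_{1:S}\|$ you would otherwise pick up.

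One genuine omission in your sketch: the FTRL regret decomposition for the tail contains not only the stability terms $\iprod{c_t,x_t-x_{t+1}}$ but also the regularizer increments $\sum_{t>S}\tfrac{r_t}{2}(\|u\|^2-\|x_{t+1}\|^2)$. You write the tail bound as ``$\tfrac{r_{0:S}}{2}+\sum_{t>S}(\text{stability})$'' as if these increments were absent; left unchecked they sum to $\tfrac12(\sqrt{1+\sigma_{1:T}}-\sqrt{1+\sigma_{1:S}})$, which is \emph{not} logarithmic. The fix is the same boundary observation you already use for stability---once $\|x_{t+1}\|=1$, each increment is $\le 0$---but you need to invoke it here as well, and handle the finitely many rounds $S<t<M$ (where $\sqrt{1+\sigma_{1:t}}<4/\alpha$) by the crude bound $r_t/2$. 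This dual role of the boundary fact (killing both the regularizer penalty and tightening stability) is the heart of the refinement; your write-up should make it explicit.
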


Part (1) of the theorem follows from the standard analysis of FTRL; we include the proof in Appendix~\ref{sec:app:ftrlaux} for completeness.  Part (2) is a novel contribution, where we show that if $\norm{c_{1:t}}$ grows quickly enough, then the ``subsequent'' regret is small. It can be viewed as a generalization of a result of Huang et al.~\cite{HuangEtAl}, who proved such a regret bound for $S = 0$.

\subsection{Switch-once dynamic regret}\label{sec:dynamic}

In this section we show a regret bound against all time-varying comparators of a certain form.  More formally, we consider the one-dimensional OLO problem where the costs are $z_t$ and $\lambda \geq 1$ is a known parameter. We also assume that at time $t$, the algorithm is provided with a parameter $\sigma_t\in [0,1]$ that will give some extra information about the sequence of comparators of interest. Thus the modified OLO game can be described as follows:

\begin{itemize}
    \item For $t= 1, 2, \dots$, the algorithm first plays $p_t \in [0,1]$, and then $z_t, \sigma_t$ are revealed.
    \item $z_t$ always satisfies $z_t^2 \le 4 \sigma_t$.
    \item We wish to minimize the regret with respect to a class of comparator sequences $(q_t)_{t=1}^T$ (defined below), i.e., minimize $\sum_t z_t (p_t - q_t)$ over all sequences in the class.
\end{itemize}

We remark that for the purposes of this subsection, we can think of $\sigma_t$ as $z_t^2$.  (The more general setup is needed when we use this in
Algorithm~\ref{alg:lesshints}.)


\begin{defn}[Valid-in-hindsight sequences]\label{def:valid-seq}
We say that a sequence $(q_t)_{t=1}^T$ is {\em valid in hindsight} if there exists an $S \in [T]$ and a $\delta \in [0,1]$ such that
\begin{enumerate}
\item $q_t = \delta$ for all $t \le S$ and $q_t = 0$ for $t > S$.
\item At the switching point, we have $\delta^2 \le \frac{\lambda^2}{1 + \sigma_{1:S}}$.
\end{enumerate}
\end{defn}

We now show that a variant of online gradient descent (OGD) with a {\em shrinking domain} achieves  low regret with respect to all valid-in-hindsight sequences; we call this $\ogd$ (see Algorithm~\ref{alg:ogd}).

\begin{restatable}{theorem}{thmogd}\label{thm:ogd}
Let $\lambda\ge 1$ be a given parameter, and $(z_t)_{t=1}^T$ be any sequence of cost values satisfying $z_t^2\le 4\sigma_t$. Let $(q_t)_{t=1}^T$ be a valid-in-hindsight sequence. The points $p_t$ produced by $\ogd$ then satisfy:
\[ \sum_{t=1}^T z_t (p_t - q_t) \le   \lambda \left( 1+ 3 \log (1 + \sigma_{1:T}) \right).    \]
\end{restatable}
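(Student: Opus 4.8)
The plan is to run the standard OGD potential/telescoping argument, but carefully track (a) the shrinking domain $D_t = [0,\min(1,\lambda/\sqrt{1+\sigma_{1:t}})]$ and (b) the time-varying step size $\eta_t = \lambda/(1+\sigma_{1:t})$. First I would observe that, by Definition \ref{def:valid-seq}, the comparator value $\delta$ satisfies $\delta^2 \le \lambda^2/(1+\sigma_{1:S})$, hence $\delta \le \lambda/\sqrt{1+\sigma_{1:S}}$; combined with $\delta \le 1$, this means $\delta \in D_t$ for every $t \le S$ (since $D_t$ only shrinks over time, $q_t = \delta \in D_S \subseteq D_t$ for $t\le S$). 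For $t > S$ we have $q_t = 0 \in D_t$ trivially. So every $q_t$ lies in the domain $D_t$ that is active when $p_{t+1}$ is computed — this is exactly what makes the projection step non-expansive against the comparator: $|p_{t+1}-q_t| = |\Pi_{D_t}(p_t-\eta_t z_t) - \Pi_{D_t}(q_t)| \le |p_t - \eta_t z_t - q_t|$. (One should be slightly careful that $q_t$ is the comparator paired with round $t$, while $\Pi_{D_t}$ produces $p_{t+1}$; but since $q_{t+1} \le q_t$ in a valid-in-hindsight sequence and all relevant points lie in $D_t$, the standard argument still goes through — I would phrase the telescoping in terms of $(p_t - q_t)^2$ and handle the one round where $q$ drops from $\delta$ to $0$ by noting that replacing $q_{S+1}=0$ by $\delta$ only increases $|p_{S+1}-q|$, or simply absorb it since both lie in $D_S$.)

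Next I would expand the non-expansiveness bound: $(p_{t+1}-q_t)^2 \le (p_t - q_t)^2 - 2\eta_t z_t (p_t - q_t) + \eta_t^2 z_t^2$, which rearranges to
\[
z_t(p_t - q_t) \le \frac{(p_t-q_t)^2 - (p_{t+1}-q_t)^2}{2\eta_t} + \frac{\eta_t z_t^2}{2}.
\]
Summing over $t$, the second term is $\sum_t \frac{\eta_t z_t^2}{2} \le \sum_t \frac{\eta_t \cdot 4\sigma_t}{2} = 2\lambda \sum_t \frac{\sigma_t}{1+\sigma_{1:t}}$, using $z_t^2 \le 4\sigma_t$ and the definition of $\eta_t$. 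The sum $\sum_t \sigma_t/(1+\sigma_{1:t})$ is a standard "$\log$-type" sum bounded by $\log(1+\sigma_{1:T})$ (compare to $\int \frac{ds}{1+s}$; since each $\sigma_t \le 1$, one gets $\sum_t \sigma_t/(1+\sigma_{1:t}) \le \log(1+\sigma_{1:T})$, or at most a small constant factor of it). For the telescoping first term, the subtlety is that $\eta_t$ varies and $q_t$ changes once, so I would use Abel summation / the usual "telescoping with varying step size" trick: $\sum_t \frac{(p_t-q_t)^2 - (p_{t+1}-q_t)^2}{2\eta_t} \le \frac{(p_1-q_1)^2}{2\eta_1} + \sum_{t\ge 2}(p_t - q_t)^2\left(\frac{1}{2\eta_t} - \frac{1}{2\eta_{t-1}}\right) + (\text{boundary})$, and then bound each $(p_t - q_t)^2$ by $(\mathrm{diam}\, D_{\max})^2 \le (\min(1,\lambda/\sqrt{1+\sigma_{1:t}}))^2 \le \lambda^2/(1+\sigma_{1:t})$ (valid since all points at time $t$ lie in a domain no larger than $D_{t-1}$, and $\lambda \ge 1$). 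Because $1/\eta_t = (1+\sigma_{1:t})/\lambda$ is non-decreasing in $t$, the increments $\frac{1}{2\eta_t}-\frac{1}{2\eta_{t-1}} = \frac{\sigma_t}{2\lambda}$ are nonnegative, so the product $(p_t-q_t)^2(\frac{1}{2\eta_t}-\frac{1}{2\eta_{t-1}}) \le \frac{\lambda^2}{1+\sigma_{1:t}}\cdot\frac{\sigma_t}{2\lambda} = \frac{\lambda}{2}\cdot\frac{\sigma_t}{1+\sigma_{1:t}}$, giving another $\log(1+\sigma_{1:T})$-type sum times $\lambda/2$. The leading term $\frac{(p_1-q_1)^2}{2\eta_1}$ is bounded by $\frac{1 \cdot (1+\sigma_1)}{2\lambda} \le \frac{1}{\lambda}$-ish, i.e., $O(\lambda)$ after using $(p_1-q_1)^2 \le \lambda^2/(1+\sigma_1)$ again; it contributes the "$+\,1$" constant. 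The place where $q$ switches from $\delta$ to $0$ contributes one extra term of the form $(p_{S+1}-\delta)^2 - (p_{S+1}-0)^2$ over $2\eta_S$ or so, which is $\le 0$ or bounded by $\delta^2/\eta_S \le \lambda$, and can be folded into the constant.

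Combining: $\sum_t z_t(p_t-q_t) \le O(\lambda) + O(\lambda \log(1+\sigma_{1:T}))$, and a careful accounting of the constants gives the claimed $\lambda(1 + 3\log(1+\sigma_{1:T}))$. The main obstacle I expect is bookkeeping around the single switch in the comparator and the off-by-one between "the domain $D_t$ used to project into $p_{t+1}$" versus "the comparator $q_t$" — one must verify that $q_t$ (and $q_{t+1}$) genuinely lie in $D_t$ so that non-expansiveness applies, which is precisely where condition (2) of Definition \ref{def:valid-seq} ($\delta^2 \le \lambda^2/(1+\sigma_{1:S})$) is used and why the domain is defined to shrink at exactly that rate. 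Everything else is the textbook adaptive-OGD analysis with a $\sum \sigma_t/(1+\sigma_{1:t}) \le \log(1+\sigma_{1:T})$ bound.
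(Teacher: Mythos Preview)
Your proposal is correct and follows essentially the same route as the paper's proof: verify $q_t\in D_t$ via the valid-in-hindsight condition, use nonexpansiveness of the projection to get the per-round inequality $z_t(p_t-q_t)\le \tfrac{(p_t-q_t)^2-(p_{t+1}-q_t)^2}{2\eta_t}+\tfrac{\eta_t z_t^2}{2}$, bound the second sum by $2\lambda\log(1+\sigma_{1:T})$ via $z_t^2\le 4\sigma_t$ and Proposition~\ref{prop:loginequality}, and telescope the first sum with the varying-$\eta_t$ trick using $(p_t-q_t)^2\le \lambda^2/(1+\sigma_{1:t})$ and $\tfrac{1}{\eta_t}-\tfrac{1}{\eta_{t-1}}=\tfrac{\sigma_t}{\lambda}$. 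The paper handles the single comparator switch by splitting the telescoping sum at $S$ and bounding the resulting boundary term $\tfrac{p_{S+1}^2}{2\eta_S}\le \tfrac{|D_S|^2}{2\eta_S}\le \tfrac{\lambda}{2}$ (note your extra term has the opposite sign of what you wrote, but your bound $\le \lambda$ is still valid); with this and the initial term $\tfrac{1}{2\eta_0}\le \tfrac{\lambda}{2}$ you recover exactly $\lambda(1+3\log(1+\sigma_{1:T}))$.
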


\subsection{Full algorithm}\label{sec:full-alg}

In this section we present the full algorithm that utilizes the above two ingredients.

\begin{algorithm}[H]
   \caption{Algorithm with hints $\less$.}
   \label{alg:lesshints}
   \begin{algorithmic}
   \REQUIRE Parameter $\alpha$
   \STATE Initialize an instance of $\ftrl$ and an instance of $\ogd$ with parameter $\lambda=10/\alpha$
    \FOR{$t=1,\dots,T$}
        \STATE Receive $p_t$ from $\ogd$; Receive $x_t$ from $\ftrl$
        \STATE With probability $p_t$, get a hint $h_t$ and play $\hat x_t = -h_t$; otherwise, play $\hat x_t = x_t$
        \STATE Receive $c_t$
        \STATE Send $c_t$ to $\ftrl$ as $t$th cost; Send $z_t=-\alpha\|c_t\|^2 - \langle c_t,x_t\rangle$ and $\sigma_t=\|c_t\|^2$  to $\ogd$
	\ENDFOR
\end{algorithmic}
\end{algorithm}

\begin{theorem}
\label{thm:fullexpectedregret}
When $\bad=\emptyset$,
\[
    \E[\regret_{\less, \alpha}(\vec{c})] \le \frac{78+38\log(1+\|c\|^2_{1:T})}{\alpha}
    \mbox{ and }
\E[\query_{\less, \alpha}(\vec{c})] \le  20\sqrt{\|c\|^2_{1:T}}.
\]
\end{theorem}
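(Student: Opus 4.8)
The plan is to run $\ftrl$ and $\ogd$ in parallel exactly as in Algorithm~\ref{alg:lesshints} and show that their regret guarantees compose. I would first note that, since at step $t$ the adversary is oblivious to the coin used to decide whether to query, we may condition on the history $\mathcal F_{t-1}$ of that coin sequence; then $(c_t,h_t)$, the internal iterates $x_t$ of $\ftrl$ and $p_t$ of $\ogd$, and the value $z_t=-\alpha\|c_t\|^2-\iprod{c_t,x_t}$ fed to $\ogd$ are all $\mathcal F_{t-1}$-measurable (treating the adversary as deterministic, which is WLOG). Since $\bad=\emptyset$, the queried hint obeys $\iprod{-h_t,c_t}\le-\alpha\|c_t\|^2$, so $\E[\iprod{\hat x_t,c_t}\mid\mathcal F_{t-1}]=p_t\iprod{-h_t,c_t}+(1-p_t)\iprod{x_t,c_t}\le\iprod{x_t,c_t}+p_tz_t$. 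Summing over $t$, taking total expectation, and adding $\E[\|c_{1:T}\|]$ gives $\E[\regret_{\less,\alpha}(\vec c)]\le\E[\regret_{\ftrl}(\vec c)+\sum_t p_tz_t]$ (using that $\sum_t\iprod{x_t,c_t}+\|c_{1:T}\|=\regret_{\ftrl}(\vec c)$ for every realized sequence). So it suffices to bound $\regret_{\ftrl}(\vec c)+\sum_t p_tz_t$ pointwise, for each realization.

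Fix a realized sequence and let $S$ be the largest index in $\{0,\dots,T\}$ with $\|c_{1:S}\|<\tfrac\alpha4(1+\sigma_{1:S})$ ($S=0$ always qualifies). By maximality $\|c_{1:t}\|\ge\tfrac\alpha4(1+\sigma_{1:t})$ for all $t>S$, so Theorem~\ref{thm:ftrlregret} applies with this $S$. For $\ogd$ I use the comparator $q_t=\delta\cdot\mathbbm{1}[t\le S]$ with $\delta=\min(1,\lambda/\sqrt{1+\sigma_{1:S}})$, $\lambda=10/\alpha$; its hypotheses hold since $\lambda\ge1$, since $|z_t|\le\alpha\|c_t\|^2+\|c_t\|\le2\|c_t\|$ so $z_t^2\le4\sigma_t$, and since $(q_t)$ is valid in hindsight by construction. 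Adding Theorem~\ref{thm:ftrlregret}(2) to Theorem~\ref{thm:ogd} and substituting $\sum_{t\le S}z_t=-\alpha\sigma_{1:S}-\sum_{t\le S}\iprod{c_t,x_t}$ — so that the $\sum_{t\le S}\iprod{c_t,x_t}$ terms partially cancel — yields
\[
\regret_{\ftrl}(\vec c)+\sum_t p_t z_t \le \tfrac12\sqrt{1+\sigma_{1:S}} + \Big(\|c_{1:S}\|+(1-\delta)\!\!\sum_{t\le S}\!\!\iprod{c_t,x_t}\Big) - \delta\alpha\sigma_{1:S} + \frac{28+38\log(1+\sigma_{1:T})}{\alpha}.
\]

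The crux — and the step I expect to be the main obstacle, this being precisely the ``do the two bounds compose'' issue flagged in the outline — is bounding the bracketed quantity together with $-\delta\alpha\sigma_{1:S}$. I would apply Theorem~\ref{thm:ftrlregret}(1) with $N=S$ and comparator $u=-c_{1:S}/\|c_{1:S}\|$ (the degenerate case $c_{1:S}=0$ uses $u=0$ and is trivial), giving $\|c_{1:S}\|+\sum_{t\le S}\iprod{c_t,x_t}\le4.5\sqrt{1+\sigma_{1:S}}$, whence $\|c_{1:S}\|+(1-\delta)\sum_{t\le S}\iprod{c_t,x_t}\le\delta\|c_{1:S}\|+4.5\sqrt{1+\sigma_{1:S}}\le\tfrac{\delta\alpha}4(1+\sigma_{1:S})+4.5\sqrt{1+\sigma_{1:S}}$. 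Then split on $\delta$: if $\sqrt{1+\sigma_{1:S}}\le\lambda$ then $\delta=1$, the bracket minus $\delta\alpha\sigma_{1:S}$ is $\|c_{1:S}\|-\alpha\sigma_{1:S}<\tfrac\alpha4\le\tfrac14$, and $\tfrac12\sqrt{1+\sigma_{1:S}}\le\tfrac5\alpha$, so the whole right side is $\le\tfrac{34+38\log(1+\sigma_{1:T})}{\alpha}$; if $\sqrt{1+\sigma_{1:S}}>\lambda$ then $\delta=\lambda/\sqrt{1+\sigma_{1:S}}$, so $\tfrac{\delta\alpha}4(1+\sigma_{1:S})=2.5\sqrt{1+\sigma_{1:S}}$ while $\delta\alpha\sigma_{1:S}=10\sigma_{1:S}/\sqrt{1+\sigma_{1:S}}\ge10(\sqrt{1+\sigma_{1:S}}-1)$, so $(\tfrac12+2.5+4.5-10)\sqrt{1+\sigma_{1:S}}+10=-2.5\sqrt{1+\sigma_{1:S}}+10<0$ (as $\sqrt{1+\sigma_{1:S}}>\lambda>10$) and the right side is $\le\tfrac{28+38\log(1+\sigma_{1:T})}{\alpha}$. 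The value $\lambda=10/\alpha$ is exactly calibrated to make $\delta\alpha\sigma_{1:S}$ dominate the other $\sqrt{1+\sigma_{1:S}}$ terms in the second case. Either way $\regret_{\ftrl}(\vec c)+\sum_t p_tz_t\le\tfrac{34+38\log(1+\sigma_{1:T})}{\alpha}$, within the stated bound, and taking expectations gives the regret claim.

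For the query cost: the step-$t$ coin has mean $p_t$ and, given $\mathcal F_{t-1}$, is independent of $c_t$, so $\E[\query_{\less,\alpha}(\vec c)]=\alpha\,\E[\sum_t p_t\sigma_t]$. The shrinking domain of $\ogd$ forces $p_t\le\lambda/\sqrt{1+\sigma_{1:t-1}}\le\lambda/\sqrt{\sigma_{1:t}}$ (since $\sigma_t\le1$), so by the standard telescoping bound $\sum_t a_t/\sqrt{a_{1:t}}\le2\sqrt{a_{1:T}}$ we get $\sum_t p_t\sigma_t\le\lambda\sum_t\sigma_t/\sqrt{\sigma_{1:t}}\le2\lambda\sqrt{\sigma_{1:T}}=\tfrac{20}\alpha\sqrt{\|c\|^2_{1:T}}$ pointwise; multiplying by $\alpha$ and taking expectations yields $\E[\query_{\less,\alpha}(\vec c)]\le20\sqrt{\|c\|^2_{1:T}}$ (read, as in the statement, with $\|c\|^2_{1:T}$ the realized quantity under the expectation, or bounded crudely by $20\sqrt T$).
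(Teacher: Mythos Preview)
Your proof is correct and follows essentially the same route as the paper: the same decomposition $\E[\regret_{\less}]\le\E[\regret_{\ftrl}+\sum_t p_tz_t]$, the same choice of $S$, and the same pairing of Theorems~\ref{thm:ftrlregret} and~\ref{thm:ogd} against a switch-once comparator $q_t=\delta\mathbbm{1}[t\le S]$. The only substantive difference is in the endgame case analysis: the paper first reduces to the regime $\Delta>1$, $1+\sigma_{1:S}>100/\alpha^2$ and then proves an auxiliary claim $\alpha\sigma_{1:S}+\sum_{t\le S}\iprod{c_t,x_t}\ge\tfrac{\alpha}{2}(1+\sigma_{1:S})$ before choosing $\delta$; you instead fix $\delta=\min(1,\lambda/\sqrt{1+\sigma_{1:S}})$ up front and split on which branch of the min is active, using only Theorem~\ref{thm:ftrlregret}(1) and the definition of $S$. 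Your version is a bit cleaner and yields a smaller additive constant ($34$ in place of $78$), but the two arguments are the same in spirit.
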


\begin{proof}
Let us first bound the expected cost of querying the hints. From the description of $\ogd$ (Algorithm~\ref{alg:ogd}), because of the shrinking domain, we have $p_t\le |D_{t-1}| \le \frac{10}{\alpha \sqrt{1+\sigma_{1:t-1}}}$.  At time $t$, the expected query cost paid by the algorithm is $p_t \cdot \alpha \norm{c_t}^2$.  Using the above, we can bound this as
\[
\sum_{t=1}^T  p_t \alpha \norm{c_t}^2 \le \sum_{t=1}^T \frac{10 \sigma_t}{\sqrt{1+\sigma_{1:t-1}}} \le  \sum_{t=1}^T \frac{10 \sigma_t}{\sqrt{\sigma_{1:t}}}\\
\le 20\sqrt{\sigma_{1:T}}= 20\sqrt{\|c\|^2_{1:T}},
\]
where the last inequality follows from concavity of the square root function (e.g.,~\cite[Lemma 4]{mcmahan2017survey}).

Now we proceed to the more challenging task of bounding the expected regret.  We start by noting that the expected loss on any round $t$ is simply $\E[\iprod{c_t, \hat x_t}] = -p_t\langle c_t, h_t\rangle + (1-p_t)\langle c_t,x_t\rangle$. Therefore the expected regret for a fixed $u$ is:
\begin{align*}
    \sum_{t=1}^T \E[\langle c_t, \hat x_t - u\rangle]&=\sum_{t=1}^T p_t\langle c_t, -h_t-x_t\rangle +\langle c_t, x_t-u\rangle 
    \le \sum_{t=1}^T p_t(-\alpha \|c_t\|^2-\langle c_t, x_t\rangle) +\langle c_t, x_t-u\rangle,
\end{align*}
where we have used the fact that the hint $h_t$ is $\alpha$-good.  The main claim is then the following.

\begin{lemma}\label{lem:regret-main}
For the choice of $p_t, x_t$ defined in
$\less$ (Algorithm~\ref{alg:lesshints}), we have
\[
\sum_{t=1}^T p_t(-\alpha \|c_t\|^2-\langle c_t, x_t\rangle) +\langle c_t, x_t-u\rangle \le \frac{78 + 38\log(1+\|c\|^2_{1:T})}{\alpha}=O \left( \frac{1 + \log (1+\sigma_{1:T})}{\alpha} \right).
\]
\end{lemma}
Assuming this, the bound on expected regret easily follows, completing the proof.
\end{proof}
We now focus on proving Lemma~\ref{lem:regret-main}.
\begin{proof}[Proof of Lemma~\ref{lem:regret-main}]
The key idea is to prove the existence of a valid-in-hindsight sequence $(q_t)_{t=1}^T$ (Definition~\ref{def:valid-seq}) such that when $p_t$ on the LHS is replaced with $q_t$, the sum is $O(\log(T)/\alpha)$.  The guarantee of Algorithm~\ref{alg:ogd} (i.e., Theorem~\ref{thm:ogd}) then completes the proof. Specifically, since we set $\lambda = 10/\alpha$ and $z_t = (-\alpha \|c_t\|^2-\langle c_t, x_t\rangle)$, Theorem \ref{thm:ogd} guarantees that for any valid-in-hindsight sequence $(q_t)_{t=1}^T$, we have:
\begin{align}
    & \sum_{t=1}^T p_t(-\alpha \|c_t\|^2-\langle c_t, x_t\rangle) +\langle c_t, x_t-u\rangle \nonumber \\
    & \le \sum_{t=1}^T q_t(-\alpha \|c_t\|^2-\langle c_t, x_t\rangle) +\langle c_t, x_t-u\rangle +\frac{10}{\alpha}\left(1+3\log(1+\|c\|^2_{1:T})\right). \label{eq:replace-p-by-q}
\end{align}

Let us define $S$ to be the largest index in $[T]$ such that $\norm{c_{1:S}} \le \frac{\alpha}{4} (1+\sigma_{1:S})$. Firstly, by Theorem \ref{thm:ftrlregret} (part 2), for any such index $S$ we have: %
\begin{equation}
\sum_{t=1}^T \langle c_t, x_t- u\rangle \le  \frac{\sqrt{1+\sigma_{1:S}}}{2}+ \frac{18+8\log(1+\sigma_{1:T})}{\alpha} + \|c_{1:S}\|+\sum_{t=1}^S \langle c_t, x_t\rangle. \label{eq:excess-regret}
\end{equation}
Let $\Delta := \frac{\sqrt{1+\sigma_{1:S}}}{2} +  \|c_{1:S}\| + \sum_{t=1}^S \langle c_t, x_t\rangle$ denote the ``excess'' over the logarithmic term. Note that by Theorem~\ref{thm:ftrlregret} (part 1), for a vector $u = \frac{-c_{1:S}}{\|c_{1:S}\|}$, we have $\|c_{1:S}\| + \sum_{t=1}^S \langle c_t, x_t\rangle = \sum_{t=1}^S \iprod{c_t, x_t - u} \leq 4.5 \sqrt{1+\sigma_{1:S}}$. Thus, we have $\Delta \le 5 \sqrt{1+\sigma_{1:S}}$. 

Now, note that if $1 + \sigma_{1:S} \le \frac{100}{\alpha^2}$, we have $5 \sqrt{1+\sigma_{1:S}} \le 50/\alpha$. Thus, by setting $q_t=0$ for all $t$ (which is clearly valid-in-hindsight), the proof follows. Further, if $\Delta \le 1$, then clearly we can again set $q_t=0$ for all $t$ to complete the proof. Thus, we assume in the remainder of the proof that $1 + \sigma_{1:S} > \frac{100}{\alpha^2}$ and $\Delta >1$. 

Our goal now is to construct a valid-in-hindsight switch-once sequence that has value $q_t = \delta \in [0, 1]$ for $t \le S$ and $q_t = 0$ for $t > S$ such that we also have :
\begin{equation}  \delta \left( \sum_{t =1}^S \alpha \sigma_t + \iprod{c_t, x_t} \right) \ge 5 \sqrt{1 + \sigma_{1:S}}. \label{eq:delta-def}
\end{equation}

First, let us understand the term in the parentheses on the LHS above.  We bound this using the following claim.

\noindent {\em Claim. }  $\alpha \cdot \sigma_{1:S} + \sum_{t\le S} \iprod{c_t, x_t} \ge \frac{\alpha}{2}  (1 + \sigma_{1:S})$.  

\begin{proof}[Proof of claim]
Suppose that we have $\sum_{t \le S} \iprod{c_t, x_t} < \frac{\alpha}{2} ( 1- \sigma_{1:S})$.  By definition of $S$, we have
\begin{align*}
\Delta = \frac{\sqrt{1+\sigma_{1:S}}}{2} +  \|c_{1:S}\| + \sum_{t=1}^S \langle c_t, x_t\rangle &\le  \frac{\sqrt{1 + \sigma_{1:S}}}{2} + \frac{\alpha}{4} (1+\sigma_{1:S}) + \frac{\alpha}{2} (1-\sigma_{1:S}) \\
&= \frac{\sqrt{1 + \sigma_{1:S}}}{2} - \frac{\alpha}{4} (1 + \sigma_{1:S}) + \alpha. 
\end{align*}
From our assumption that $\sqrt{1+\sigma_{1:S}} \ge \frac{10}{\alpha}$, the RHS above is $\le \alpha$, which in turn is at most 1. Since we assumed $\Delta>1$, this is a contradiction so the claim holds.
\end{proof}

Thus in order to satisfy~\eqref{eq:delta-def}, we simply choose
\[ \delta = \frac{10}{\alpha \sqrt{1 + \sigma_{1:S}}}. \]
By assumption, this lies in $[0,1]$, and further, for $\lambda = 10/\alpha$, the $(q_t)$ defined above is a valid-in-hindsight sequence.   Combining the fact that $\Delta \le 5 \sqrt{1 + \sigma_{1:S}}$ with~\eqref{eq:delta-def}, we have that
\[ \sum_{t=1}^T q_t(-\alpha \|c_t\|^2-\langle c_t, x_t\rangle) +\langle c_t, x_t-u\rangle \le \frac{18 + 8\log (1+\sigma_{1:T})}{\alpha}.\]

Now appealing to the guarantee of Theorem~\ref{thm:ogd} as discussed earlier, we can replace $q_t$ with $p_t$ by suffering an additional logarithmic term on the RHS. Combining all these cases completes the proof of Lemma~\ref{lem:regret-main}, and thus also the proof of Theorem~\ref{thm:fullexpectedregret}.
\end{proof}

\section{Extensions and applications}
\label{sec:ext}
In the following subsections, we extend the analysis of Algorithm~\ref{alg:lesshints} to disparate settings: we consider robustness to uninformative or ``bad'' hints, the more classical ``optimistic'' regret setting, and online learning with abstention.
\subsection{Bad hints}

First, we extend Theorem \ref{thm:fullexpectedregret} to consider the case $\bad \neq \emptyset$  by carefully accounting for the regret incurred during rounds where $t \in \bad$ and making crucial use of the shrinking domain $D_t$.

\begin{restatable}{theorem}{thmlesshintswithbad}\label{thm:lesshintswithbad}
For any $\cB$, 
\begin{align*}
    \E[\regret_{\less, \alpha}(\vec{c})] &\le \frac{78+38\log(1+\|c\|^2_{1:T})}{\alpha}  + 40\sqrt{\sum_{t\in \bad }\|c_t\|^2} +\frac{20}{\alpha}\sqrt{\sum_{t\in \bad}\|h_t\|^2}\sqrt{\log(1+\|c\|^2_{1:T})}\\
    &= O\left(\frac{\sqrt{|\bad|} \log T}{\alpha}\right), \mbox{ and } \qquad \E[\query_{\less, \alpha}(\vec{c})] \le  20\sqrt{\|c\|^2_{1:T}}.
\end{align*}
\end{restatable}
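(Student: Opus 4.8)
The plan is to run through the proof of Theorem~\ref{thm:fullexpectedregret}, pinpoint the single place where $\bad=\emptyset$ was invoked, patch it, and absorb the resulting error. The crucial structural fact is that \emph{neither subroutine ever sees a hint}: the $\ftrl$ instance receives only the cost vectors $c_t$, and the $\ogd$ instance receives only $z_t=-\alpha\|c_t\|^2-\langle c_t,x_t\rangle$ and $\sigma_t=\|c_t\|^2$. Hence the iterates $x_t$ and $p_t$, the index $S$, and — most importantly — the entire conclusion of Lemma~\ref{lem:regret-main}, i.e.\ $\sum_t\bigl(p_tz_t+\langle c_t,x_t-u\rangle\bigr)\le\tfrac{78+38\log(1+\|c\|^2_{1:T})}{\alpha}$, should go through verbatim; I would only re-check that $z_t^2\le 4\sigma_t$ still holds (it does, since $|z_t|\le\alpha\|c_t\|^2+\|c_t\|\le 2\|c_t\|$) so that Theorem~\ref{thm:ogd} still applies. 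The query-cost bound $\E[\query_{\less,\alpha}(\vec c)]\le 20\sqrt{\|c\|^2_{1:T}}$ is likewise untouched, since a query is charged $\alpha\|c_t\|^2$ regardless of hint quality.

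The only step that used $\bad=\emptyset$ was bounding the expected per-round loss $-p_t\langle c_t,h_t\rangle+(1-p_t)\langle c_t,x_t\rangle$ by $p_tz_t+\langle c_t,x_t\rangle$, which needed $\langle c_t,h_t\rangle\ge\alpha\|c_t\|^2$. For $t\in\bad$ I would instead use the trivial bound $\langle c_t,h_t\rangle\ge-\|c_t\|\|h_t\|$, giving
\[
-p_t\langle c_t,h_t\rangle+(1-p_t)\langle c_t,x_t\rangle-\langle c_t,u\rangle\;\le\;p_tz_t+\langle c_t,x_t-u\rangle+\mathbbm{1}_{t\in\bad}\cdot p_t\bigl(\alpha\|c_t\|^2+\|c_t\|\|h_t\|\bigr).
\]
Summing over $t$ and invoking Lemma~\ref{lem:regret-main} on the first two terms, the task reduces to bounding $\sum_{t\in\bad}p_t\bigl(\alpha\|c_t\|^2+\|c_t\|\|h_t\|\bigr)$.

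This is where the shrinking domain enters: $p_t\le|D_{t-1}|\le\tfrac{10}{\alpha\sqrt{1+\sigma_{1:t-1}}}$. For the $\alpha\|c_t\|^2$ piece, I would use $\sigma_t\le 1$ to get $1+\sigma_{1:t-1}\ge\sum_{s\in\bad,\,s\le t}\sigma_s$, so that $\sum_{t\in\bad}p_t\alpha\|c_t\|^2\le\sum_{t\in\bad}\tfrac{10\sigma_t}{\sqrt{\sum_{s\in\bad,\,s\le t}\sigma_s}}\le 20\sqrt{\sum_{t\in\bad}\|c_t\|^2}$ by the standard concavity/telescoping inequality (\cite[Lemma 4]{mcmahan2017survey}), which is in particular $\le 40\sqrt{\sum_{t\in\bad}\|c_t\|^2}$. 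For the $\|c_t\|\|h_t\|$ piece, Cauchy--Schwarz over $t\in\bad$ gives $\tfrac{10}{\alpha}\bigl(\sum_{t\in\bad}\tfrac{\|c_t\|^2}{1+\sigma_{1:t-1}}\bigr)^{1/2}\bigl(\sum_{t\in\bad}\|h_t\|^2\bigr)^{1/2}$, and I would bound $\sum_{t\in\bad}\tfrac{\|c_t\|^2}{1+\sigma_{1:t-1}}\le\sum_{t=1}^T\tfrac{\sigma_t}{1+\sigma_{1:t-1}}\le 2\sum_{t=1}^T\tfrac{\sigma_t}{1+\sigma_{1:t}}\le 2\log(1+\|c\|^2_{1:T})$ (using $\sigma_t\le 1+\sigma_{1:t-1}$ and the elementary $\tfrac{x}{a+x}\le\log\tfrac{a+x}{a}$, which telescopes), producing the term $\tfrac{20}{\alpha}\sqrt{\sum_{t\in\bad}\|h_t\|^2}\sqrt{\log(1+\|c\|^2_{1:T})}$. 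Adding the three contributions and using $\|c_t\|,\|h_t\|\le 1$ and $\|c\|^2_{1:T}\le T$ yields the stated $O\!\bigl(\sqrt{|\bad|}\log T/\alpha\bigr)$ bound.

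I expect the only genuinely delicate point to be the first paragraph's claim that Lemma~\ref{lem:regret-main} transfers unchanged: one must check that the valid-in-hindsight comparator $(q_t)$ built inside its proof — its switch time $S$ and level $\delta=\tfrac{10}{\alpha\sqrt{1+\sigma_{1:S}}}$ — is a function of $(c_t,x_t)$ alone, hence unaffected by bad hints, and that the several case splits there never silently used goodness of hints. Everything after that is the routine telescoping above.
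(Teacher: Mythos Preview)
Your proposal is correct and follows essentially the same route as the paper: isolate the extra error $\sum_{t\in\bad}p_t(\alpha\|c_t\|^2+\|c_t\|\|h_t\|)$, invoke Lemma~\ref{lem:regret-main} unchanged on the remainder, then bound the error via the shrinking-domain inequality $p_t\le|D_{t-1}|$ together with the square-root telescoping sum for the $\alpha\|c_t\|^2$ part and Cauchy--Schwarz plus the log telescoping sum for the $\|c_t\|\|h_t\|$ part. The only cosmetic difference is that the paper first passes from $|D_{t-1}|$ to $2|D_t|$ before doing these estimates, whereas you work directly with $|D_{t-1}|$ (yielding marginally sharper intermediate constants that you then relax to match the stated bound).
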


\subsection{Optimistic bounds}

Next, we show that our results have implications for \emph{optimistic} regret bounds (e.g., \cite{hazan2010extracting, chiang2012online, rakhlin2013online, steinhardt2014eg, mohri2016accelerating}). The standard optimistic regret bound takes the form $O(\sqrt{\sum_{t=1}^T \|c_t-h_t\|^2})$. We will show that the same result can be obtained (up to logarithmic factors) even while only looking at $O(\sqrt{T})$ hints. The approach is very simple: if we set $\alpha =\frac{1}{4}$, then a little calculation shows that for $t\in \bad$, $\|c_t\|^2 + \|h_t\|^2=O(\|c_t-h_t\|^2)$, so that Theorem \ref{thm:lesshintswithbad} directly implies the desired result.

\begin{restatable}{theorem}{thmoptimistic}\label{thm:optimistic}
Set  $\alpha=\frac{1}{4}$.  Then
\begin{align*}
    \E[\regret_{\less, \alpha}(\vec{c})] &\le 312+152\log(1+\|c\|^2_{1:T}) +80\left(1+   \sqrt{\log(1+\|c\|^2_{1:T})}\right)\sqrt{\sum_{t\in \bad}\|c_t-h_t\|^2}\\
    &= O\left(\log(T) + \sqrt{\sum_{t=1}^T \|c_t-h_t\|^2\log(T)}\right),
    \mbox{ and } \quad
    \E[\query_{\less, \alpha}(\vec{c})] \le  20\sqrt{\|c\|^2_{1:T}}.
\end{align*}
\end{restatable}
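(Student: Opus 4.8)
The plan is to obtain Theorem~\ref{thm:optimistic} as a direct corollary of Theorem~\ref{thm:lesshintswithbad}: with $\alpha$ fixed to $\tfrac14$, the only remaining work is to replace the quantities $\sum_{t\in\bad}\|c_t\|^2$ and $\sum_{t\in\bad}\|h_t\|^2$ that appear there by the optimistic quantity $\sum_{t\in\bad}\|c_t-h_t\|^2$. First I would instantiate Theorem~\ref{thm:lesshintswithbad} with $\alpha=\tfrac14$; this turns each prefactor $\tfrac1\alpha$ into $4$, giving $\E[\regret_{\less,\alpha}(\vec c)] \le 312 + 152\log(1+\|c\|^2_{1:T}) + 40\sqrt{\sum_{t\in\bad}\|c_t\|^2} + 80\sqrt{\log(1+\|c\|^2_{1:T})}\,\sqrt{\sum_{t\in\bad}\|h_t\|^2}$, while the query-cost bound $\E[\query_{\less,\alpha}(\vec c)] \le 20\sqrt{\|c\|^2_{1:T}}$ carries over verbatim since it has no $\alpha$-dependence.

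The key step is a pointwise inequality valid for every $t\in\bad$. By definition of $\bad$ we have $\langle c_t,h_t\rangle < \alpha\|c_t\|^2 = \tfrac14\|c_t\|^2$, so expanding the square, $\|c_t-h_t\|^2 = \|c_t\|^2 + \|h_t\|^2 - 2\langle c_t,h_t\rangle > \|h_t\|^2 + \tfrac12\|c_t\|^2$. In particular $\|h_t\|^2 \le \|c_t-h_t\|^2$ and $\|c_t\|^2 \le 2\|c_t-h_t\|^2$ for all $t\in\bad$; summing over $t\in\bad$ and taking square roots yields $\sqrt{\sum_{t\in\bad}\|h_t\|^2} \le \sqrt{\sum_{t\in\bad}\|c_t-h_t\|^2}$ and $\sqrt{\sum_{t\in\bad}\|c_t\|^2} \le \sqrt2\,\sqrt{\sum_{t\in\bad}\|c_t-h_t\|^2}$.

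Plugging these into the regret bound, the $\|c_t\|$ term becomes at most $40\sqrt2\,\sqrt{\sum_{t\in\bad}\|c_t-h_t\|^2} \le 80\sqrt{\sum_{t\in\bad}\|c_t-h_t\|^2}$, and the $\|h_t\|$ term becomes at most $80\sqrt{\log(1+\|c\|^2_{1:T})}\,\sqrt{\sum_{t\in\bad}\|c_t-h_t\|^2}$; adding them produces the stated coefficient $80\bigl(1+\sqrt{\log(1+\|c\|^2_{1:T})}\bigr)$ on $\sqrt{\sum_{t\in\bad}\|c_t-h_t\|^2}$. For the $O(\cdot)$ form I would then bound $\sum_{t\in\bad}\|c_t-h_t\|^2 \le \sum_{t=1}^T\|c_t-h_t\|^2$, use $\log(1+\|c\|^2_{1:T}) = O(\log T)$ (which holds since each $\|c_t\|\le1$), and split $\sqrt{\log T\cdot\sum_{t=1}^T\|c_t-h_t\|^2}$ as one of the two claimed summands; the query bound needs no further work.

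Nothing here is genuinely hard; the one point requiring care is the bookkeeping of constants, in particular using the sharp estimate $\|c_t-h_t\|^2 > \|h_t\|^2 + \tfrac12\|c_t\|^2$ rather than the weaker symmetric bound $\|c_t-h_t\|^2 \ge \tfrac12(\|c_t\|^2+\|h_t\|^2)$: the former converts $\|h_t\|$ to $\|c_t-h_t\|$ with constant exactly $1$, which is precisely what keeps the final coefficient at $80$ rather than $80\sqrt2$.
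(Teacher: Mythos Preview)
Your proposal is correct and follows essentially the same route as the paper: instantiate Theorem~\ref{thm:lesshintswithbad} at $\alpha=\tfrac14$, use the pointwise inequality $\|c_t-h_t\|^2 \ge \tfrac12\|c_t\|^2 + \|h_t\|^2$ for $t\in\bad$ (from $\langle c_t,h_t\rangle < \tfrac14\|c_t\|^2$), and substitute to obtain the stated constants. Your closing remark about needing the sharp asymmetric estimate rather than the symmetric one is a nice observation that the paper does not make explicit.
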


\subsection{Online learning with abstention}

Finally, we apply our algorithm to the problem of online learning \emph{with abstentions}. In this variant of the OLO game, instead of being provided with hints, the learner is allowed to ``abstain'' on any given round. When the learner abstains, it receives a loss of $-\alpha \|c_t\|^2$ for some known $\alpha$ but pays a query cost of $\alpha \|c_t\|^2$. The regret is again the total loss suffered by the learner minus the total loss suffered by the best fixed adversary, which does not abstain. This setting is very similar to the scenario studied by \cite{neu2020fast} in the expert setting, but in addition to moving from the simplex to the unit ball, we ask for a more detailed guarantee from the learner: it is not allowed to abstain too often, as measured by the query cost. In this setting, our Algorithm \ref{alg:lesshints} works essentially out-of-the-box: every time the algorithm queries a hint, we instead simply choose to abstain. This procedure then guarantees:
\begin{restatable}{theorem}{thmabstaining}\label{thm:abstaining}
In the online learning with abstention model, the variant of Algorithm~\ref{alg:lesshints} that abstains whenever the original algorithm would ask for a hint guarantees expected regret at most:
\begin{align*}
    \frac{78 + 38\log(1+\|c\|^2_{1:T})}{\alpha}=O \left( \frac{1 + \log (1+\sigma_{1:T})}{\alpha} \right).
\end{align*}
Further, the expected query cost is at most $20\sqrt{\|c\|^2_{1:T}}$.
\end{restatable}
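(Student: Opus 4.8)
The plan is to observe that, from the learner's perspective, the abstention model is literally the $\bad=\emptyset$ instance of OLO-with-hints already handled in Theorem~\ref{thm:fullexpectedregret}, so both bounds transfer with no new analysis. The variant algorithm is exactly Algorithm~\ref{alg:lesshints} with ``query a hint $h_t$ and play $-h_t$'' replaced by ``abstain''. I would first note that abstaining is equivalent to being handed the virtual hint $h_t=\alpha c_t$ and playing $-h_t$: indeed $\|h_t\|=\alpha\|c_t\|\le\alpha\le 1$ so $h_t\in\B^d$, we have $\langle c_t,h_t\rangle=\alpha\|c_t\|^2$ so it is $\alpha$-good, and the resulting loss $\langle c_t,-h_t\rangle=-\alpha\|c_t\|^2$ matches the loss charged for abstaining, while the query cost $\alpha\|c_t\|^2$ is the same in both models. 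Hence every abstention round is a good-hint round and $\bad=\emptyset$.

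Given this, I would reuse essentially verbatim the two computations in the proof of Theorem~\ref{thm:fullexpectedregret}. For the query cost: the per-round cost is $\alpha\|c_t\|^2$ incurred with probability $p_t$, $\ogd$'s shrinking domain still gives $p_t\le|D_{t-1}|\le\frac{10}{\alpha\sqrt{1+\sigma_{1:t-1}}}$, and the same concavity-of-square-root telescoping yields $\E[\query_{\less,\alpha}(\vec c)]\le 20\sqrt{\|c\|^2_{1:T}}$. For the regret: since $c_t$, $p_t$, $x_t$ are all determined before the round-$t$ abstention coin is flipped and the adversary is oblivious to that coin, $\E[\langle c_t,\hat x_t-u\rangle]=p_t(-\alpha\|c_t\|^2-\langle c_t,x_t\rangle)+\langle c_t,x_t-u\rangle$, which is exactly the quantity bounded in Lemma~\ref{lem:regret-main} (now with equality rather than the inequality used there, since the virtual hint is ``perfect''). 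Summing over $t$ and invoking Lemma~\ref{lem:regret-main} gives $\E[\regret_{\less,\alpha}(\vec c)]\le\frac{78+38\log(1+\|c\|^2_{1:T})}{\alpha}$, as claimed.

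There is essentially no hard step here; the only things to state with care are (i) the obliviousness assumption --- the adversary picks $c_t$ (and hence the loss $-\alpha\|c_t\|^2$ it would assign on an abstention) independently of the learner's round-$t$ abstention decision --- which is the analogue of the oblivious-hint assumption and is exactly what makes the expectation identity above valid, and (ii) that the benchmark in the abstention model, a fixed point $u\in\B^d$ that never abstains, incurs cost $\sum_t\langle c_t,u\rangle$, so that the regret coincides with $\sup_{u\in\B^d}\sum_t\langle c_t,\hat x_t-u\rangle$ exactly as in Theorem~\ref{thm:fullexpectedregret}. With (i) and (ii) noted, the theorem follows immediately from Theorem~\ref{thm:fullexpectedregret} (equivalently, from the two displayed estimates in its proof).
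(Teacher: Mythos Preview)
Your proposal is correct and matches the paper's proof essentially verbatim: both compute the expected per-round regret as $p_t(-\alpha\|c_t\|^2-\langle c_t,x_t\rangle)+\langle c_t,x_t-u\rangle$ and invoke Lemma~\ref{lem:regret-main}, with the query-cost bound reused from Theorem~\ref{thm:fullexpectedregret}. Your virtual-hint framing $h_t=\alpha c_t$ is a harmless conceptual wrapper that the paper skips, but it leads to the same one-line reduction.
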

\begin{proof}
Since we abstain with probability $p_t$ and otherwise play $x_t$, the expected regret is $\sum_{t=1}^T -p_t \alpha \|c_t\|^2 + (1-p_t)\langle c_t, x_t\rangle - \langle c_t, u\rangle$. Thus the regret bound follows directly from Lemma \ref{lem:regret-main}. The query cost bound follows the identical argument as Theorem \ref{thm:fullexpectedregret}.
\end{proof}
\section{Lower bounds}
\label{sec:lb}

In this section we first show that the regret bound obtained in Theorem~\ref{thm:fullexpectedregret} is essentially tight.  Next, we show that randomness is necessary in our algorithms.  

\begin{restatable}{theorem}{thmlbone}
\label{thm:lb1}
Let $\alpha \in (0,1]$ be any parameter, and suppose $\cA$ is an algorithm for OLO with hints that makes $o\left( \frac{\sqrt{T}}{\alpha} \right)$ hint queries. Then there exists a sequence of cost vectors $c_t$ and hints $h_t$ of unit length, such that (a) in any round $t$ where a hint is asked, $\iprod{c_t, h_t} \ge \alpha \norm{c_t}^2$, and (b) the regret of $\cA$ on this input sequence is $\Omega(\sqrt{T})$.
\end{restatable}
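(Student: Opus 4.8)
The plan is to reduce to the classical $\Omega(\sqrt{T})$ lower bound for hint-free OLO, after engineering a hint sequence that is $\alpha$-good in every round yet carries only ``$\alpha$ worth'' of useful information. I would work in dimension $d=2T$ and use an oblivious randomized adversary (which only strengthens the claim): for each $t\in[T]$ draw independent uniform signs $\epsilon_t,\epsilon'_t\in\{-1,+1\}$ and set
\[
c_t \;=\; \alpha\,\epsilon_t\,e_t + \sqrt{1-\alpha^2}\;\epsilon'_t\,e_{T+t}, \qquad h_t \;=\; \epsilon_t\,e_t .
\]
Then $\|c_t\|=\|h_t\|=1$ and $\langle c_t,h_t\rangle = \alpha = \alpha\|c_t\|^2$ in every round, establishing (a) (indeed regardless of whether a hint was asked). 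The point of this decomposition: the ``hint-aligned'' part $\alpha\epsilon_t e_t$ is the only thing $h_t$ reveals, but it lives on a fresh coordinate with a random sign, so it contributes a random walk --- not a drift --- to $c_{1:T}$; and since all $2T$ coordinates are distinct, $\|c_{1:T}\|^2 = \alpha^2 T + (1-\alpha^2)T = T$ exactly, so the comparator $u^\star = -c_{1:T}/\sqrt{T}$ incurs total loss $-\sqrt{T}$.

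I would then bound the algorithm's loss one round at a time, conditioning on the history and on whether round $t$ is a query round. On a non-query round, $x_t$ is independent of the fresh signs $\epsilon_t,\epsilon'_t$, so $\E[\langle c_t,x_t\rangle \mid \text{history}] = \langle \E[c_t],x_t\rangle = 0$. On a query round the algorithm additionally sees $h_t$, and since $\E[c_t\mid h_t]=\alpha h_t$ (the orthogonal component stays hidden), $\E[\langle c_t,x_t\rangle \mid \text{history},h_t] = \alpha\langle h_t,x_t\rangle \ge -\alpha\|h_t\|\,\|x_t\| \ge -\alpha$. Letting $N$ be the number of queries --- by hypothesis $N=o(\sqrt{T}/\alpha)$ on every input --- summation gives $\E[\cost_{\cA}(\vec c)] \ge -\alpha\,\E[N]$. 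Comparing against $u^\star$, $\regret_{\cA,\alpha}(\vec c) \ge \cost_{\cA}(\vec c) + \|c_{1:T}\| = \cost_{\cA}(\vec c) + \sqrt{T}$, so $\E[\regret_{\cA,\alpha}(\vec c)] \ge \sqrt{T} - \alpha\,\E[N] = \sqrt{T} - o(\sqrt{T}) = \Omega(\sqrt{T})$. A Yao-style averaging over the random instance then extracts a fixed realization of the signs --- a deterministic sequence $\vec c,\vec h$ --- on which the (expected, if $\cA$ is randomized) regret remains $\Omega(\sqrt{T})$, proving (b).

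The main obstacle is the design of the hint: it must be worth at most $\alpha$ per query while giving no advantage to the comparator. A fixed hint direction (equivalently, deterministic $\epsilon_t$) fails on both counts --- then $\E[c_t]\ne 0$ on non-query rounds, so the algorithm gains $\approx\alpha$ per round for free without querying, and the resulting predictable drift shrinks the comparator's loss to $\approx -\alpha T$, collapsing the regret gap to $O(1/\alpha)$. Randomizing the sign $\epsilon_t$ is exactly what simultaneously keeps the per-round loss mean-zero on non-query rounds and the comparator's loss at $-\sqrt{T}$, forcing $\Omega(\sqrt{T}/\alpha)$ queries. A secondary subtlety is the order of quantifiers: the mean-zero step needs the instance randomness to lie outside the algorithm's randomness, so ``there exists a sequence'' is extracted only at the end, using that the query budget is a worst-case per-input guarantee. (If one insisted on a constant dimension, the exact identity $\|c_{1:T}\|=\sqrt{T}$ would be replaced by a Khintchine-type anti-concentration estimate $\E\|c_{1:T}\| = \Omega(\sqrt{T})$, with the rest unchanged.)
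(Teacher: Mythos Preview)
Your proof is correct and follows the same high-level strategy as the paper: build a randomized oblivious adversary in which each $c_t$ decomposes as $\alpha h_t$ plus a fresh orthogonal mean-zero component, so that a query buys at most $\alpha$ in expected loss and a non-query buys nothing; then invoke Yao to extract a fixed instance. The instantiations differ. The paper works in dimension $2$, taking $h_t$ uniform on the unit circle and $c_t = \alpha h_t \pm \sqrt{1-\alpha^2}\,h_t^\perp$; there $\|c_{1:T}\|$ is random, and a Paley--Zygmund anti-concentration step is needed to show $\E\|c_{1:T}\| = \Omega(\sqrt{T})$. Your $2T$-dimensional construction with a fresh pair of coordinates per round makes the comparator side trivial --- $\|c_{1:T}\| = \sqrt{T}$ deterministically --- at the price of letting the ambient dimension grow with $T$. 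As you already observe, collapsing to constant dimension would put you back on exactly the anti-concentration route the paper takes.
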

\begin{proof}
We will construct a distribution over inputs $\{c_t, h_t\}$ and argue that any deterministic algorithm has an expected regret $\Omega(\sqrt{T})$ for inputs drawn from this distribution. By the minmax theorem, we then have a lower bound for any (possibly randomized) algorithm $\cA$.

We consider two-dimensional inputs. At each step, $h_t$ is chosen to be a uniformly random vector on the unit circle (in $\R^2$). The cost $c_t$ is set to be $\alpha h_t \pm \sqrt{1-\alpha^2} ~h_t^\perp$, where $h_t^\perp$ is a unit vector orthogonal to $h_t$, and the sign is chosen uniformly at random. Now for any deterministic algorithm, if $\cA$ queries $h_t$ at time $t$, then it can play a point $a h_t + b h_t^\perp $, for scalars $a, b$. In expectation, this has inner product $a \alpha$ with $c_t$, and thus the expected cost incurred by $\cA$ in this step is $\ge -\alpha$ (since $|a| \le 1$). If $\cA$ does not query $h_t$, then $c_t$ is simply a random unit vector, and thus the expected cost incurred by $\cA$ in this step is 0.

Next, consider the value $\min_{\norm{u} \le 1} \sum_t \iprod{c_t, u}$, i.e., the best cost in hindsight;  this is clearly $-\norm{\sum_t c_t}$. By construction, $c_t$ is a uniformly random vector on the unit circle in $\R^2$ (and the choices are independent for different $t$). Thus we have $\E[ \norm{\sum_t c_t}] \ge \Omega(\sqrt{T})$. (This follows from properties of sums of independent random unit vectors; see Supplementary Material for a proof.)

Thus, if the algorithm makes $K$ queries, then the expected regret is at least $-K\alpha + \Omega(\sqrt{T})$. This quantity is $\Omega(\sqrt{T})$ as long as $K = o\left( \frac{\sqrt{T}}{\alpha} \right)$, thus completing the proof.
\end{proof}

We next show that for {\em deterministic} algorithms, making $O(\sqrt{T})$ hint queries is insufficient for obtaining $o(\sqrt{T})$ regret, even if the hints provided are always $1$-good (i.e., hints are perfect). 
\begin{restatable}{theorem}{thmlowerbounddet}\label{thm:lowerbound_det}
Let $\cA$ be any deterministic algorithm for OLO with hints that makes at most $ C \sqrt{T} < T/2$ queries, for some parameter $C > 0$. Then there is a sequence  cost vectors $c_t$ and hints $h_t$ of unit length such that (a) $h_t = c_t$ whenever $\cA$ makes a hint query, and (b) the regret of $\cA$ on this input sequence is at least $\frac{\sqrt{T}}{2(1+C)}$.  
\end{restatable}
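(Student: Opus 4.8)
The plan is to exploit determinism: since $\cA$ is deterministic we may build the sequence $(c_t,h_t)$ \emph{adaptively while simulating $\cA$}, so the round-$t$ choice of $c_t$ can use knowledge of the whole past and of whether $\cA$ queries round $t$. Work in $\R^2$ and maintain the running sum $G_{t-1}=c_{1:t-1}$; set $h_t=c_t$ on every round. At round $t$: (i) if $\cA$ queries, let $c_t=G_{t-1}/\norm{G_{t-1}}$ (an arbitrary unit vector if $G_{t-1}=0$); $\cA$ then sees the hint, but whatever it plays its cost that round is $\ge-\norm{c_t}\norm{x_t}\ge-1$; (ii) if $\cA$ does not query, the simulation tells us the point $x_t$ it will play, and we pick any unit vector $c_t$ with $\iprod{c_t,x_t}\ge0$ and $\iprod{c_t,G_{t-1}}\ge0$. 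Such a vector always exists in $\R^2$ (the two half-planes through the origin defined by the constraints always share a ray; when $x_t\parallel-G_{t-1}$ the shared set is the line orthogonal to $x_t$).

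Two consequences follow at once. First, because $\iprod{c_t,G_{t-1}}\ge0$ on every round, $\norm{G_t}^2=\norm{G_{t-1}}^2+1+2\iprod{c_t,G_{t-1}}\ge\norm{G_{t-1}}^2+1$, so writing $g_t:=\norm{G_t}$ the sequence $g_t$ is nondecreasing and $g_T\ge\sqrt T$; moreover on a query round $c_t=G_{t-1}/g_{t-1}$, hence $g_t=g_{t-1}+1$ and $\iprod{c_t,G_{t-1}}=g_{t-1}$. Second, $\cA$'s total cost is at least $-Q$, where $Q\le C\sqrt T$ is the number of queries: each query round costs $\cA$ at least $-1$, and each non-query round costs $\cA$ exactly $\iprod{c_t,x_t}\ge0$. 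Since the best fixed comparator achieves cost $-\norm{G_T}=-g_T$, we obtain $\regret_\cA(\vec c)\ge -Q+g_T$.

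It remains to show $g_T$ is substantially larger than $\sqrt T$ when $Q$ is large. Expanding the telescope, $g_T^2=\sum_{t=1}^T(\norm{G_t}^2-\norm{G_{t-1}}^2)=T+2\sum_{t=1}^T\iprod{c_t,G_{t-1}}\ge T+2\sum_{t\ \mathrm{query}}g_{t-1}$, where we dropped the nonnegative non-query terms. If the $j$-th query occurs at round $\tau_j$, then among rounds $1,\dots,\tau_j-1$ there are exactly $j-1$ earlier query rounds, each of which increased $g$ by $1$, so $g_{\tau_j-1}\ge j-1$; hence $\sum_{t\ \mathrm{query}}g_{t-1}\ge\sum_{j=1}^Q(j-1)=\tfrac12Q(Q-1)$ and $g_T^2\ge T+Q^2-Q$. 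Therefore
\[
\regret_\cA(\vec c)\ \ge\ \sqrt{T+Q^2-Q}\;-\;Q\ =\ \frac{T-Q}{\sqrt{T+Q^2-Q}+Q}\ \ge\ \frac{\sqrt T}{2(1+C)},
\]
the last step being a routine estimate using $Q=C\sqrt T<T/2$ and $\sqrt{Q^2+T}\le Q+\sqrt T$ (splitting into the cases $C\le1$ and $C>1$ if desired, and using the cruder $\regret_\cA(\vec c)\ge\sqrt T-Q$ in the former). The one genuinely non-mechanical point is verifying this bound cannot be beaten: the extremal behaviour of $\cA$ is to make all its queries first (when $g$ is smallest) and thereafter always play $x_t=-G_{t-1}/g_{t-1}$, which forces the non-query cost to $0$ and $g_T$ down to exactly $\sqrt{T+Q^2-Q}$; one must check that no other schedule or sequence of plays lets $\cA$ simultaneously keep its cost near $-Q$ \emph{and} keep $\norm{G_T}$ near $\sqrt T$, since any play with a positive component along $G_{t-1}$ both raises $\cA$'s cost and inflates $\norm{G_T}$. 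That trade-off is the main obstacle; everything else is bookkeeping.
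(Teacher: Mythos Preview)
Your construction is correct and reaches the same endpoint as the paper, but by a genuinely different route. The paper works in $\R^4$: on query rounds it sets $c_t=e_0$ for a fixed unit vector $e_0$, and on non-query rounds it chooses $c_t$ in the three-dimensional subspace $e_0^\perp$ orthogonal to both $x_t$ and $c_{1:t-1}$; exact orthogonality then gives $\|c_{1:T}\|^2=K^2+(T-K)$ directly by Pythagoras. Your two-dimensional construction instead aligns $c_t$ with the running sum $G_{t-1}$ on query rounds and only asks for $\iprod{c_t,G_{t-1}}\ge 0$ on non-query rounds, then extracts the same lower bound $g_T^2\ge T+Q^2-Q$ via the telescoping identity for $\|G_t\|^2$ together with the observation $g_{\tau_j-1}\ge j-1$. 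Your approach buys a lower ambient dimension; the paper's buys a one-line computation of $\|c_{1:T}\|$ with no induction on query indices. Both arrive at the identical expression $\sqrt{T+Q^2-Q}-Q$ for the regret lower bound, and both hand-wave the final algebraic step in the same way.

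Two minor remarks. First, your closing paragraph is unnecessary and slightly misleading: once you have established separately that the algorithm's cost is $\ge -Q$ and that $g_T\ge\sqrt{T+Q^2-Q}$, the regret bound follows immediately; there is no remaining ``trade-off'' to verify, and the discussion of extremal behaviour of $\cA$ concerns tightness of the construction, not the theorem. Second, the last inequality $\frac{T-Q}{\sqrt{T+Q^2-Q}+Q}\ge\frac{\sqrt T}{2(1+C)}$ does not actually hold for all admissible $C$ (try $T=100$, $Q=40$); the clean bound one gets is $\frac{\sqrt T}{2(1+2C)}$, using $\sqrt{T+Q^2-Q}\le\sqrt T+Q$. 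This is exactly the same slip present in the paper's own proof and affects only the constant, not the substance.
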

We remark that by setting $C$ appropriately, we can also show that for a deterministic algorithm to achieve logarithmic regret, it needs to make $\Omega \left( \frac{T}{\log T} \right)$ queries.

\section{Unconstrained setting}
\label{sec:unconstrained}

\begin{algorithm}[H]
   \caption{Algorithm with hints (unconstrained case).}
   \label{alg:unconstrained}
   \begin{algorithmic}
   \REQUIRE Parameters $\epsilon$, $\alpha$, $K$, $d$-dimensional unconstrained OLO algorithm $\unconstrained$, one-dimensional unconstrained OLO algorithm $\unconstrainedoned$ guaranteeing (\ref{eqn:unconstrainedstandardbound})
    \FOR{$t=1,\dots,T$}
        \STATE \COMMENT{Randomized version}
        $\mathbbm{1}_{t} \gets 1$ with probability $\min\left(1,\frac{K}{\alpha\sqrt{1+\|c\|^2_{1:t-1}}}\right)$; 0 otherwise.
        \STATE \COMMENT{Deterministic version}
        $\mathbbm{1}_{t} \gets 1$ iff $1+\sum_{\tau=1}^{t-1}i_\tau \langle c_\tau, h_\tau\rangle \le K\sqrt{1+\|c\|^2_{1:t-1}}$.
        \STATE Receive $w_t\in \R^d$ from $\unconstrained$; Receive $y_t\in \R$ from $\unconstrainedoned$.
        \STATE If $\mathbbm{1}_{t} = 1$, get hint $h_t$
        \STATE Play $x_t = w_t - \mathbbm{1}_{t} h_t y_t$; Receive cost $c_t$.
        \STATE Send $c_t$ to $\unconstrained$ as $t$th cost; Send $g_t=-\mathbbm{1}_{t}\langle h_t, c_t\rangle\in \R$ to $\unconstrainedoned$ as $t$th cost.
	\ENDFOR
\end{algorithmic}
\end{algorithm}

In this section, we consider \emph{unconstrained} online learning in which the domain is all of $\R^d$. In this setting, it is unreasonable to define the regret using a supremum over all comparison points $u\in  \R^d$ as this will invariably lead to infinite regret. Instead, we bound the regret as a function of $u$. For example, when hints are \emph{not} available, standard results provide bounds of the form \cite{cutkosky2018black,kempka2019adaptive, van2019user,mhammedi2020lipschitz, chen2021impossible}:
\begin{align}
    \sum_{t=1}^T \langle c_t, x_t-u\rangle &\le \epsilon+A\|u\|\sqrt{\sum_{t=1}^T \|c_t\|^2\log(\|u\|T/\epsilon+1)} + B\|u\|\log(\|u\|T/\epsilon+1),
    \label{eqn:unconstrainedstandardbound}
\end{align}
for constants $A$ and $B$ and any user-specified $\epsilon$. Using such algorithms as building blocks, we design an algorithm with $O(\sqrt{T})$ expected query cost and for all comparators $u$, regret is $\tilde O(\|u\|/\alpha)$.

The algorithm is somewhat simpler than in the constrained case: we take an ordinary algorithm that does not use hints and subtract the hints from its predictions. Intuitively, each subtraction decreases the regret by $\alpha \|c\|^2$, so we need only $O(\sqrt{T})$ such events. With constraints, this is untenable because subtracting the hint might violate the constraint, but there is no problem in the unconstrained setting. Instead, the primary difficulty is that we need the regret to decrease not by $\alpha \|c\|^2$ but by $\alpha\|u\| \|c\|^2$ for some \emph{unknown} $\|u\|$. This is accomplished by learning a scaling factor $y_t$ that is applied to the hints.

Moreover, in the case that all hints are \emph{guaranteed} to be $\alpha$-good, we devise a \emph{deterministic} algorithm for the unconstrained setting. In light of Theorem \ref{thm:lowerbound_det}, this establishes a surprising separation between the unconstrained and constrained settings. For the deterministic approach, we directly measure the total query cost and simply query a hint whenever the cost is less than our desired budget. Note that this strategy fails if we allow bad hints as the adversary could provide a bad hint every time we ask for a hint. The full algorithm is presented in Algorithm \ref{alg:unconstrained}, with the randomized and deterministic analyses provided by Theorems \ref{thm:unconstrained} and \ref{thm:unconstraineddeterministic}. 

\begin{restatable}{theorem}{thmunconstrained}\label{thm:unconstrained}
The randomized version of Algorithm \ref{alg:unconstrained} guarantees an expected regret at most:
\begin{align*}
2\epsilon+\tilde O\left(\frac{\|u\|\sqrt{\log(\|u\|T/\epsilon)}\left[K+\frac{\log(\|u\|T/\epsilon)\log\log(T\|u\|/\epsilon)}{K}+\sqrt{\sum_{t\in \bad}\|h_t\|^2\log(T)}\right]}{\alpha}\right),
\end{align*}
with expected query cost at most $2K\sqrt{\|c\|^2_{1:T}}$.
\end{restatable}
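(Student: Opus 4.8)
The plan is to reduce the regret of Algorithm~\ref{alg:unconstrained} to that of its two black-box sub-learners, and then use the hint-subtraction term to cancel the dominant $\sqrt{\|c\|^2_{1:T}}$ contribution. Write $g_t=-\mathbbm{1}_t\langle h_t,c_t\rangle$ for the scalar cost handed to $\unconstrainedoned$. Since $x_t=w_t-\mathbbm{1}_t h_t y_t$, for every comparator $u$,
\[
\sum_{t=1}^T\langle c_t,x_t-u\rangle=\sum_{t=1}^T\langle c_t,w_t-u\rangle+\sum_{t=1}^T g_t y_t .
\]
I bound the first sum by~\eqref{eqn:unconstrainedstandardbound} applied to $\unconstrained$ with comparator $u$; for the second I introduce a scalar comparator $y^\star\ge0$ (fixed in the analysis) and write $\sum_t g_t y_t\le y^\star\sum_t g_t+\mathrm{Reg}_{\unconstrainedoned}(y^\star)=-y^\star\sum_t\mathbbm{1}_t\langle h_t,c_t\rangle+\mathrm{Reg}_{\unconstrainedoned}(y^\star)$, bounding $\mathrm{Reg}_{\unconstrainedoned}(y^\star)$ by~\eqref{eqn:unconstrainedstandardbound} in terms of $|y^\star|$ and $\sum_t g_t^2$. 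I then take expectations: conditioned on the history before round $t$ (the adversary is oblivious to $\mathbbm{1}_t$ and $\bad$ is fixed in advance), $\E[\mathbbm{1}_t]=p_t:=\min(1,\tfrac{K}{\alpha\sqrt{1+\|c\|^2_{1:t-1}}})$, so the ``linear'' term becomes $-y^\star\sum_t p_t\langle h_t,c_t\rangle$.

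Splitting this over $t\notin\bad$ and $t\in\bad$ is the crux. For $t\notin\bad$, $\langle h_t,c_t\rangle\ge\alpha\|c_t\|^2$, so the good part is $\le-\alpha y^\star\sum_{t\notin\bad}p_t\|c_t\|^2$; letting $S$ be the threshold beyond which $p_t=\tfrac{K}{\alpha\sqrt{1+\|c\|^2_{1:t-1}}}$ is unclipped, this is $\le-Ky^\star\sum_{t>S}\tfrac{\|c_t\|^2}{\sqrt{1+\|c\|^2_{1:t-1}}}\le-Ky^\star\bigl(\sqrt{1+\|c\|^2_{1:T}}-\sqrt{1+\|c\|^2_{1:S}}\bigr)=-Ky^\star\sqrt{1+\|c\|^2_{1:T}}+O(K^2y^\star/\alpha)$, using $\sqrt{1+\|c\|^2_{1:S}}=O(K/\alpha)$. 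Taking $y^\star=c_0\|u\|\sqrt{\log(\|u\|T/\epsilon)}/K$ for a suitable absolute constant $c_0$ larger than the constant $A$ of~\eqref{eqn:unconstrainedstandardbound} makes $-Ky^\star\sqrt{1+\|c\|^2_{1:T}}$ dominate the $A\|u\|\sqrt{\|c\|^2_{1:T}\log(\|u\|T/\epsilon)}$ term of $\unconstrained$, while the $O(K^2y^\star/\alpha)$ boundary correction furnishes the stand-alone $K$ in the bracket of the stated bound. For $t\in\bad$ I use $|\langle h_t,c_t\rangle|\le\|h_t\|\,\|c_t\|$ and Cauchy--Schwarz, $\sum_{t\in\bad}p_t\|h_t\|\,\|c_t\|\le\sqrt{\sum_{t\in\bad}\|h_t\|^2}\cdot\sqrt{\sum_t p_t^2\|c_t\|^2}$, together with the logarithmic telescoping $\sum_t p_t^2\|c_t\|^2\le\tfrac{K^2}{\alpha^2}\sum_t\tfrac{\|c_t\|^2}{1+\|c\|^2_{1:t-1}}=O\bigl(\tfrac{K^2}{\alpha^2}\log(1+\|c\|^2_{1:T})\bigr)$ (which holds because $\|c_t\|^2\le1$ keeps $1+\|c\|^2_{1:t-1}\ge1$ and bounds consecutive partial sums by a factor $2$); multiplying by $y^\star$ yields the term $\tfrac{\|u\|}{\alpha}\sqrt{\log(\|u\|T/\epsilon)}\cdot\sqrt{\sum_{t\in\bad}\|h_t\|^2\log T}$, and the bad-hint part of $\sum_t g_t^2$, at most $\sum_{t\in\bad}\mathbbm{1}_t\|h_t\|^2$, enters $\mathrm{Reg}_{\unconstrainedoned}$ at the same order.

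The step I expect to be the real obstacle is the residual $A|y^\star|\sqrt{\E[\sum_t g_t^2]\log(|y^\star|T/\epsilon)}$ from the black-box bound for $\unconstrainedoned$: since $\E[\sum_t g_t^2]\le\sum_t p_t\|c_t\|^2+\sum_{t\in\bad}\|h_t\|^2=\Theta\bigl(\tfrac{K}{\alpha}\sqrt{\|c\|^2_{1:T}}\bigr)+\cdots$ is \emph{not} logarithmic, Jensen's inequality only converts it to a $\Theta\bigl((\|c\|^2_{1:T})^{1/4}\bigr)$-type quantity, which is absent from the target bound. Rather than case-splitting on $\|c\|^2_{1:T}$, I keep together \emph{all} terms depending on $V:=\|c\|^2_{1:T}$: after substituting $y^\star$ they read $-(c_0-A)\|u\|\sqrt{L}\,\sqrt{V}+\tilde O\bigl(\tfrac{\|u\|}{\sqrt{K\alpha}}\sqrt{LL'}\,V^{1/4}\bigr)$ with $L=\log(\|u\|T/\epsilon)$ and $L'=\log(|y^\star|T/\epsilon)$; this is an elementary $-a\sqrt V+bV^{1/4}$ expression, whose supremum over $V\ge0$ is $b^2/(4a)=\tilde O\bigl(\tfrac{\|u\|\sqrt{L}\,L'}{K\alpha}\bigr)$. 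Since $L'=L+O(\log\log(\|u\|T/\epsilon))$ once $y^\star\propto\|u\|\sqrt{L}/K$ is plugged in, this lies within the $\tfrac{\log(\|u\|T/\epsilon)\log\log(T\|u\|/\epsilon)}{K}$ term of the bracket; together with the leftover $B\|u\|L$ from $\unconstrained$, the $B|y^\star|L'$ from $\unconstrainedoned$, and the two additive $\epsilon$'s, this assembles the claimed regret. The query-cost bound is immediate: $\E[\query_{\cA,\alpha}(\vec c)]=\sum_t p_t\alpha\|c_t\|^2\le K\sum_t\tfrac{\|c_t\|^2}{\sqrt{1+\|c\|^2_{1:t-1}}}\le K\sum_t\tfrac{\|c_t\|^2}{\sqrt{\|c\|^2_{1:t}}}\le 2K\sqrt{\|c\|^2_{1:T}}$, using $1+\|c\|^2_{1:t-1}\ge\|c\|^2_{1:t}$ and concavity of $\sqrt{\cdot}$.
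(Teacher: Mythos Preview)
Your proof is correct and follows essentially the same route as the paper: the paper packages the introduction of the scalar comparator $y^\star$, the hint-generated negative $-\Theta(Ky^\star\sqrt{\|c\|^2_{1:T}})$ term, and the sup-over-$X$ trick that absorbs the $(\|c\|^2_{1:T})^{1/4}$ residual from $\unconstrainedoned$ into Lemma~\ref{lem:unconstrainedquality}, and then sources the stand-alone $K$ term via a case split on $1+\|c\|^2_{1:T}\le K^2/\alpha^2$, whereas you obtain that same $K$ term from the clipping threshold $S$ boundary correction instead. Your ``$-a\sqrt{V}+bV^{1/4}$'' maximization is exactly the paper's $\sup_X\{Ay_\star\sqrt{FX\log(\cdot)}-\tfrac{y_\star}{2}MX\}$ step inside the lemma.
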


\begin{restatable}{theorem}{thmunconstraineddeterministic}\label{thm:unconstraineddeterministic}
If $\bad=\emptyset$, then the deterministic version of Algorithm \ref{alg:unconstrained} guarantees:
\begin{align*}
    \sum_{t=1}^T \langle c_t, x_t - u\rangle&\le2\epsilon + O\left(\frac{\|u\|\sqrt{\log(\|u\|T/\epsilon+1)}}{\alpha} + \frac{\|u\|\log^{3/2}(\|u\|T/\epsilon)\log\log(\|u\|T/\epsilon)}{K}\right),
\end{align*}
with a query cost at most $2K\sqrt{\|c\|^2_{1:T}}$.
\end{restatable}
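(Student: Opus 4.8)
The plan is to decompose the regret along the two subroutines of Algorithm~\ref{alg:unconstrained} and reduce everything to controlling the scalar $Q := \sum_{t=1}^T \mathbbm{1}_t\langle h_t,c_t\rangle$, the total hint correlation harvested by the deterministic query rule; write also $Q_s := \sum_{\tau=1}^s \mathbbm{1}_\tau\langle h_\tau,c_\tau\rangle$ and $\sigma_t := \|c_t\|^2$. Since the algorithm plays $x_t = w_t - \mathbbm{1}_t y_t h_t$ (with $w_t$ from $\unconstrained$, $y_t$ from $\unconstrainedoned$), we have $\langle c_t, x_t - u\rangle = \langle c_t, w_t-u\rangle - \mathbbm{1}_t y_t\langle h_t,c_t\rangle$. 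Summing, the first part is exactly the regret of $\unconstrained$ against $u$, bounded by \eqref{eqn:unconstrainedstandardbound}. For the second part, note $\mathbbm{1}_t\langle h_t,c_t\rangle = -g_t$, the cost fed to $\unconstrainedoned$, so $-\sum_t\mathbbm{1}_t y_t\langle h_t,c_t\rangle = \sum_t g_t y_t$; applying \eqref{eqn:unconstrainedstandardbound} to $\unconstrainedoned$ against an arbitrary scalar comparator $v\ge 0$ gives $\sum_t g_t y_t \le -vQ + \epsilon + Av\sqrt{(\sum_t g_t^2)L_v} + BvL_v$ with $L_v := \log(vT/\epsilon+1)$. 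The key observation is that, since $\bad=\emptyset$, $0\le\langle h_t,c_t\rangle\le\|h_t\|\,\|c_t\|\le 1$, hence $g_t^2 = \mathbbm{1}_t\langle h_t,c_t\rangle^2\le\mathbbm{1}_t\langle h_t,c_t\rangle$ and so $\sum_t g_t^2 \le Q$; i.e., the one-dimensional overhead is driven by $\sqrt{Q}$, not the far larger $\sqrt{\sigma_{1:T}}$. Putting the pieces together, for every $v\ge 0$ and $L_u := \log(\|u\|T/\epsilon+1)$:
\[
\sum_{t=1}^T\langle c_t, x_t-u\rangle \;\le\; 2\epsilon + A\|u\|\sqrt{\sigma_{1:T}\,L_u} + B\|u\|L_u \;-\; vQ + Av\sqrt{Q\,L_v} + BvL_v.
\]

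Next I analyze the deterministic query rule, which sets $\mathbbm{1}_t=1$ iff $1 + Q_{t-1}\le K\sqrt{1+\sigma_{1:t-1}}$. For the query cost: at the last query round $t^\dagger$ the rule fired, so $Q_{t^\dagger-1}\le K\sqrt{1+\sigma_{1:T}}-1$, whence $Q = Q_{t^\dagger-1}+\langle h_{t^\dagger},c_{t^\dagger}\rangle\le K\sqrt{1+\sigma_{1:T}}$; since $\bad=\emptyset$ the query cost is $\alpha\sum_{t:\mathbbm{1}_t=1}\|c_t\|^2 \le Q \le 2K\sqrt{\sigma_{1:T}}$ (the tiny-$\sigma_{1:T}$ regime being bounded directly by $\alpha\sigma_{1:T}\le\sqrt{\sigma_{1:T}}$, as in Theorem~\ref{thm:fullexpectedregret}). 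For the regret I need a matching \emph{lower} bound on $Q$. If every round is queried, $Q\ge\alpha\sigma_{1:T}$ trivially. Otherwise let $t_0$ be the last non-queried round; then the rule did not fire at $t_0$, so $Q_{t_0}=Q_{t_0-1}>K\sqrt{1+\sigma_{1:t_0-1}}-1\ge K\sqrt{\sigma_{1:t_0}}-1$, and rounds $t_0+1,\dots,T$ are all queried, each contributing $\langle h_t,c_t\rangle\ge\alpha\|c_t\|^2$, so $Q\ge K\sqrt{\sigma_{1:t_0}}-1+\alpha(\sigma_{1:T}-\sigma_{1:t_0})$. The right-hand side is a concave function of $\sigma_{1:t_0}\in[0,\sigma_{1:T}]$, hence minimized at an endpoint, giving $Q\ge\min(\alpha\sigma_{1:T},\,K\sqrt{\sigma_{1:T}})-1$.

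Finally I combine the two. If $\sigma_{1:T}=O(1/\alpha^2)$, then $A\|u\|\sqrt{\sigma_{1:T}L_u}$ is already $O(\|u\|\sqrt{L_u}/\alpha)$ and $v=0$ finishes. Otherwise $Q\gtrsim\min(\alpha\sigma_{1:T},K\sqrt{\sigma_{1:T}})$ is bounded well away from zero; I pick $v\asymp\|u\|\sqrt{\sigma_{1:T}L_u}/Q$ so that $-vQ$ absorbs the $A\|u\|\sqrt{\sigma_{1:T}L_u}$ term. Since $\sum_t g_t^2\le Q$, the residual $Av\sqrt{Q\,L_v}$ telescopes against half of $-vQ$ once $Q$ exceeds a polylogarithmic threshold (which it does in this regime), leaving only $B\|u\|L_u+BvL_v$. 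Substituting the lower bound on $Q$: when $\alpha\sqrt{\sigma_{1:T}}\le K$ one gets $v=O(\|u\|\sqrt{L_u}/(\alpha\sqrt{\sigma_{1:T}}))=O(\|u\|\sqrt{L_u}/\alpha)$, while when $\alpha\sqrt{\sigma_{1:T}}>K$ one gets $v=O(\|u\|\sqrt{L_u}/K)$; in both cases $L_v=O(L_u+\log\log(\|u\|T/\epsilon))$, so $BvL_v = O(\|u\|L_u^{3/2}\log\log(\|u\|T/\epsilon)/K + \|u\|L_u/\alpha)$, which is the claimed bound after collecting logarithmic factors.

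The step I expect to be the real obstacle is the lower bound $Q\ge\min(\alpha\sigma_{1:T},K\sqrt{\sigma_{1:T}})-O(1)$: it asserts that the purely greedy ``query while under budget'' rule, which never looks ahead, still accumulates enough hint correlation to cancel the $\sqrt{\sigma_{1:T}}$ growth of the hint-free regret. Together with the identity $\sum_t g_t^2\le Q$ --- which is what stops the one-dimensional scaling subroutine from reintroducing a $\sqrt{\sigma_{1:T}}$ dependence --- the rest is a fiddly but routine balancing of $v$ across the two regimes of $\sigma_{1:T}$, plus the edge cases ($\sigma_{1:T}$ tiny; $Q$ below the polylog threshold).
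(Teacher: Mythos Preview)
Your proof follows the paper's overall architecture: the same regret decomposition along $\unconstrained$ and $\unconstrainedoned$, the same crucial observation that $\sum_t g_t^2 \le Q$ (which is exactly what the paper exploits when it sets $F=K$), and the same idea of choosing a scalar comparator $v$ so that $-vQ$ absorbs the $\sqrt{\sigma_{1:T}}$ term. Where you genuinely diverge is in the lower bound on $Q$: the paper proves by induction (Lemma~\ref{lem:det_unc_enough_hints}) that $Q \ge K\sqrt{\sigma_{1:T}} - O(K/\alpha)$, while your ``last non-queried round plus concavity'' argument gives $Q \ge \min(\alpha\sigma_{1:T}, K\sqrt{\sigma_{1:T}}) - 1$. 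Your argument is shorter and more transparent, and your upper bound on $Q$ (for the query cost) is also cleaner than the paper's.

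There is, however, a real imprecision in your final step. The claim that ``$Av\sqrt{QL_v}$ telescopes against half of $-vQ$ once $Q$ exceeds a polylogarithmic threshold (which it does in this regime)'' is not justified: just above your $v=0$ boundary $\sigma_{1:T}\asymp 1/\alpha^2$ one only has $Q\gtrsim 1/\alpha$, which need not dominate $4A^2 L_v$ when $\alpha$ is close to $1$. The paper sidesteps this threshold entirely by the elementary $\sup_X$ trick inside Lemma~\ref{lem:unconstrainedquality}: one uses $Av\sqrt{QL_v}-\tfrac12 vQ \le \sup_{X\ge 0}\bigl(Av\sqrt{XL_v}-\tfrac12 vX\bigr)=\tfrac12 A^2 v L_v$, which holds for all $Q$ and simply contributes another $O(vL_v)$ residual. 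Also, the paper does not pick $v$ adaptively in $Q$: it fixes $y_\star = 2A\|u\|\sqrt{L_u}/K$ once and for all and lets the additive slack $N=O(K/\alpha)$ in the lower bound on $Q$ produce the $\|u\|\sqrt{L_u}/\alpha$ term. Your weaker $\min(\alpha\sigma_{1:T},K\sqrt{\sigma_{1:T}})$ bound, combined with adaptive $v$ and casework, makes the bookkeeping in the intermediate regime $\sigma_{1:T}\in[\Theta(1/\alpha^2),K^2/\alpha^2]$ delicate; if you want to keep your elementary lower bound, you should at least replace the threshold claim by the $\sup_X$ inequality and then carry the resulting $O(vL_v)$ residual through the two subcases explicitly.
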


\section{Conclusions}
\label{sec:conc}

In this paper, we consider OLO where an algorithm has query access to hints, in both the constrained and unconstrained settings.  Surprisingly, we show that it is possible to obtain logarithmic expected regret by querying for hints at only $O(\sqrt{T})$ time steps. Our work also demonstrates an intriguing separation between randomized and deterministic algorithms for constrained online learning.  While our algorithms need to know $\alpha$, an  open question is to obtain an algorithm that can operate without knowing $\alpha$.  Extending our model to the bandit setting is also an interesting research direction. 

\begin{ack}
Aditya Bhaskara acknowledges the support from NSF awards 2047288 and 2008688. 
\end{ack}

\bibliographystyle{plain}
\bibliography{shints}

\newpage
\appendix
\section*{Supplementary Material}

\section{Missing proofs}

\begin{proposition}
\label{prop:loginequality}
For any arbitrary non-negative real numbers $a_1, \ldots, a_T$ , we have
\[\sum_{t=1}^T \frac{a_t}{1 + a_{1:t}} \leq \log(1 + a_{1:T}).\]
\end{proposition}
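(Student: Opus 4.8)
The plan is to prove the bound term-by-term via a telescoping argument. Writing $a_{1:0} := 0$ for convenience, the claim will follow if I can show that for each $t \in [T]$,
\[
\frac{a_t}{1 + a_{1:t}} \le \log(1 + a_{1:t}) - \log(1 + a_{1:t-1}),
\]
since summing this over $t = 1, \dots, T$ telescopes the right-hand side to $\log(1 + a_{1:T}) - \log(1 + a_{1:0}) = \log(1 + a_{1:T})$.

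To establish the per-term inequality, I would use an integral comparison: since the function $x \mapsto \tfrac{1}{1+x}$ is non-increasing on $[0,\infty)$, its value at the right endpoint of the interval $[a_{1:t-1}, a_{1:t}]$ (whose length is exactly $a_t \ge 0$) under-estimates the average of $\tfrac{1}{1+x}$ over that interval. Concretely,
\[
\frac{a_t}{1 + a_{1:t}} \;=\; \int_{a_{1:t-1}}^{a_{1:t}} \frac{dx}{1 + a_{1:t}} \;\le\; \int_{a_{1:t-1}}^{a_{1:t}} \frac{dx}{1 + x} \;=\; \log(1 + a_{1:t}) - \log(1 + a_{1:t-1}),
\]
where the middle inequality is valid because $x \le a_{1:t}$ throughout the range of integration. (Alternatively, one can avoid integrals entirely by invoking the elementary inequality $\log(1+y) \ge \tfrac{y}{1+y}$ for $y \ge 0$ with the choice $y = \tfrac{a_t}{1 + a_{1:t-1}}$, which gives the same per-term bound after simplifying $\tfrac{y}{1+y} = \tfrac{a_t}{1 + a_{1:t}}$.)

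This argument is entirely routine and I do not anticipate any real obstacle; the only points requiring a modicum of care are fixing the convention $a_{1:0} = 0$ so that the telescoping sum has the stated value, and noting that the monotonicity step uses only $a_t \ge 0$ (so that $a_{1:t-1} \le x \le a_{1:t}$ on the interval). No non-negativity beyond what is assumed is needed, and the bound is tight in the limit of many small increments, matching the $\log$ on the right-hand side.
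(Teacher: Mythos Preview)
Your proposal is correct and essentially mirrors the paper's own proof: the paper also establishes the per-term inequality $\frac{a}{b+a}\le \log(b+a)-\log(b)$ via the same integral comparison (writing $\frac{a}{b+a}=\int_0^a\frac{1}{b+a}\,dx\le \int_0^a\frac{1}{b+x}\,dx$) and then combines the terms by induction, which is just your telescoping sum written recursively. The only cosmetic differences are the shift of integration variable and induction versus explicit summation.
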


\begin{proof}
For any $a, b > 0$, we have
\begin{equation}
\frac{a}{b+a} = \int_{x=0}^a \frac{1}{b+a} dx \leq \int_{x=0}^a \frac{1}{b+x} dx = \log(b+a) - \log(b).
\label{eq:logineq}
\end{equation}

The proof now follows from induction. The base case of $t=1$ follows directly from \eqref{eq:logineq} with $a$ set to $a_1$ and $b$ set to 1. Assuming that the inequality holds for $T-1$, let us consider the induction step. \begin{align*}
\sum_{t=1}^T \frac{a_t}{1 + a_{1:t}} &= \frac{a_T}{1+a_{1:T}} + \sum_{t=1}^{T-1} \frac{a_t}{1 + a_{1:t}} 
\leq \frac{a_T}{1+a_{1:T}} + \log(1 + a_{1:T-1})
\leq \log(1 + a_{1:T}),
\end{align*}
where the last inequality again follows from \eqref{eq:logineq} with $a$ set to $a_T$ and $b$ set to $1 + a_{1:T-1}$.
\end{proof}

\begin{proposition}
\label{prop:minimizer-closed-form}
Consider any $c \in \R^d$ and $r \geq 0$ and let $y = \argmin_{\|x\| \leq 1} \frac{r}{2}\|x\|^2 + \iprod{c, x}$. Then, if $\|c\| \geq r$, we have $y = \frac{-c}{\|c\|}$.
\end{proposition}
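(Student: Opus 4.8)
The plan is to exploit the structure of the unconstrained minimizer and then argue that projecting it onto the unit ball lands exactly on the boundary. First I would observe that the objective $f(x) = \frac{r}{2}\|x\|^2 + \iprod{c,x}$ is strictly convex (when $r > 0$; the case $r=0$ I would handle separately, or note it follows by a limiting/direct argument since then $f$ is linear and its minimizer over the ball is $-c/\|c\|$ whenever $c\neq 0$), so the constrained minimizer $y$ is unique and is characterized by the first-order optimality (KKT) condition: there exists a multiplier $\mu \ge 0$ with $rx + c + \mu x = 0$ and $\mu(\|x\|^2 - 1) = 0$. Solving the stationarity equation gives $y = -c/(r+\mu)$, so $y$ is a nonnegative scalar multiple of $-c$; write $y = -t c$ with $t = 1/(r+\mu) \ge 0$.

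Next I would pin down $t$. Since $y$ points in the direction $-c$, its norm is $t\|c\|$, and feasibility requires $t\|c\| \le 1$. The key step is to show that under the hypothesis $\|c\| \ge r$, the constraint is active, i.e. $t\|c\| = 1$, equivalently $t = 1/\|c\|$, which yields $y = -c/\|c\|$. To see the constraint must be active, suppose for contradiction it is slack, so $\mu = 0$ and $y = -c/r$ with $\|y\| = \|c\|/r \ge 1$; this contradicts strict feasibility unless $\|c\| = r$, in which case $\|y\| = 1$ and $y = -c/r = -c/\|c\|$ anyway. Hence in all cases $y = -c/\|c\|$.

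Alternatively, and perhaps more cleanly, I would give a direct comparison argument avoiding KKT: for any feasible $x$ with $\|x\|\le 1$, compare $f(x)$ to $f(-c/\|c\|)$. We have $f(-c/\|c\|) = \frac{r}{2} - \|c\|$, and $f(x) = \frac{r}{2}\|x\|^2 + \iprod{c,x} \ge \frac{r}{2}\|x\|^2 - \|c\|\,\|x\|$ by Cauchy--Schwarz. So it suffices to show $\frac{r}{2}s^2 - \|c\| s \ge \frac{r}{2} - \|c\|$ for all $s \in [0,1]$ (with $s = \|x\|$); rearranging, this is $\frac{r}{2}(s^2-1) \ge \|c\|(s-1)$, i.e. (dividing by $s-1 \le 0$, which flips the inequality) $\frac{r}{2}(s+1) \le \|c\|$ for $s\in[0,1)$, which holds since $\frac{r}{2}(s+1) \le r \le \|c\|$. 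Equality in Cauchy--Schwarz forces $x$ antiparallel to $c$, and then the scalar inequality is strict for $s<1$ (when $r < \|c\|$) so the minimum is uniquely at $s=1$, giving $y = -c/\|c\|$.

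The main obstacle is simply being careful about the boundary/degenerate cases: the equality case $\|c\| = r$ (where both the interior critical point and the boundary point coincide), the case $c = 0$ (where $r \le \|c\| = 0$ forces $r=0$ and every point is a minimizer, so the statement is vacuous or should be read with $\|c\| > 0$ implicit), and the case $r = 0$. None of these is hard, but the cleanest writeup should either use the comparison argument above, which handles $r=0$ uniformly, or explicitly note that $\|c\|\ge r \ge 0$ with $\|c\|>0$ is the intended regime.
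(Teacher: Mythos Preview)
Your proposal is correct, and your ``direct comparison'' alternative is essentially identical to the paper's proof: the paper bounds $f(x)\ge \tfrac{r}{2}\|x\|^2-\|c\|\|x\|$ via Cauchy--Schwarz, observes this scalar function is minimized at $\|x\|=1$ when $\|c\|\ge r$ (giving value $\tfrac{r}{2}-\|c\|$), and checks $f(-c/\|c\|)$ attains it. Your KKT route is a valid alternative but not needed; the paper also does not dwell on the degenerate cases you flag.
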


\begin{proof}
Consider $f(x) = \frac{r}{2}\|x\|^2 + \iprod{c, x}$. For any $\|x\| \leq 1$, we have the following.
\begin{align*}
    f(x) &\geq \frac{r}{2}\|x\|^2 - \|c\|\|x\| 
    \geq \min_{\|z\| \leq 1} \left(\frac{r}{2}\|z\|^2 - \|c\|\|z\|\right),
\intertext{since $\|c\| \geq r$, it is an easy exercise to verify that the RHS is minimized at $\|z\| = 1$ and thus}
f(x) &\geq \frac{r}{2} - \|c\|.
\end{align*}
On the other hand, substituting $y = \frac{-c}{\|c\|}$, we have $f(y) = \frac{r}{2} - \|c\|$ and the proposition follows.
\end{proof}

\begin{lemma}\label{lem:app:paleyzig}
Let $c_1, \dots, c_n$ be independent random unit vectors in $\R^d$ (distributed uniformly on the sphere), for some parameters $n, d$, and let $Z = \sum_{t=1}^n c_t$  Then we have $\E[ \norm{Z}] \ge \Omega(\sqrt{n})$.
\end{lemma}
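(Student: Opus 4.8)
\textbf{Proof proposal for Lemma~\ref{lem:app:paleyzig}.}

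The plan is to reduce the $d$-dimensional statement to a one-dimensional computation by projecting onto a fixed coordinate, and then to lower bound $\E[\norm{Z}]$ by controlling a single coordinate of $Z$. Write $Z = (Z^{(1)}, \dots, Z^{(d)})$ where $Z^{(j)} = \sum_{t=1}^n c_t^{(j)}$ is the $j$th coordinate. Then $\norm{Z} \ge |Z^{(1)}|$, so it suffices to show $\E[|Z^{(1)}|] \ge \Omega(\sqrt{n})$. The coordinates $c_t^{(1)}$ are i.i.d.\ mean-zero (by symmetry of the uniform distribution on the sphere), bounded in $[-1,1]$, and have variance $\E[(c_t^{(1)})^2] = 1/d$ (since $\sum_{j=1}^d \E[(c_t^{(j)})^2] = 1$ by symmetry). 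Hence $\E[(Z^{(1)})^2] = n/d$, which is already $\Omega(\sqrt n)$ only when $d = O(1)$; so this crude bound is not quite enough when $d$ is large, and I need to be more careful.

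The cleaner route is to avoid the coordinate projection and instead lower bound $\E[\norm{Z}]$ directly via a Paley--Zygmund-type argument, which is presumably why the authors named the lemma as they did. First, compute the second moment exactly: $\E[\norm{Z}^2] = \sum_{t=1}^n \E[\norm{c_t}^2] + \sum_{s \ne t} \E[\iprod{c_s, c_t}] = n$, using that the $c_t$ are unit vectors and independent with mean $\E[c_t] = 0$. Next, I would bound the fourth moment $\E[\norm{Z}^4]$. Expanding, $\E[\norm{Z}^4] = \sum_{t_1,t_2,t_3,t_4} \E[\iprod{c_{t_1},c_{t_2}}\iprod{c_{t_3},c_{t_4}}]$; by independence and mean-zero-ness, the only surviving terms are those in which the indices pair up, giving a bound of the form $\E[\norm{Z}^4] \le 3 (\E[\norm{Z}^2])^2 = 3n^2$ (one can check the diagonal $t_1=\dots=t_4$ terms and the cross terms $\E[\iprod{c_s,c_t}^2] = 1/d \le 1$ contribute only lower-order or absorb into the constant). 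Then Paley--Zygmund gives $\Pr[\norm{Z}^2 \ge n/2] \ge \frac{(n/2)^2}{\E[\norm{Z}^4]} \ge \frac{1}{12}$, whence $\E[\norm{Z}] \ge \sqrt{n/2} \cdot \Pr[\norm{Z}^2 \ge n/2] \ge \frac{1}{12}\sqrt{n/2} = \Omega(\sqrt n)$.

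I expect the only delicate step to be the fourth-moment estimate: one must carefully enumerate which of the $O(n^4)$ index patterns survive taking expectations (using $\E[c_t]=0$ to kill any pattern with a singleton index) and verify that the surviving mixed terms $\E[\iprod{c_s,c_t}^2]$ — which equal $1/d$ and hence are at most $1$ — combine to give a clean $O(n^2)$ bound with an explicit constant, independently of $d$. Everything else (the second moment, the Paley--Zygmund inequality, and the final conversion from a lower bound on $\Pr[\norm Z^2 \ge n/2]$ to a lower bound on $\E[\norm Z]$) is routine. An alternative to the fourth-moment bound, if one wishes to avoid the combinatorics, is to invoke a standard Khintchine-type inequality for sums of independent bounded vector-valued random variables, but the self-contained Paley--Zygmund computation is short enough that I would include it directly.
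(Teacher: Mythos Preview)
Your proposal is correct and follows essentially the same route as the paper: compute $\E[\norm{Z}^2] = n$, bound $\E[\norm{Z}^4] = O(n^2)$, and apply Paley--Zygmund to $\norm{Z}^2$ to get $\Pr[\norm{Z} \ge \Omega(\sqrt n)] = \Omega(1)$. The paper's write-up is slightly terser---it uses the decomposition $\norm{Z}^2 = n + \sum_{i\ne j}\iprod{c_i,c_j}$ to get the fourth-moment bound $\le 2n^2$ (versus your $3n^2$) and omits the coordinate-projection detour---but the core argument is identical.
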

\begin{proof}
First, we note that since $c_t$ are independent, we have
\[\E[ \norm{Z}^2] = \sum_{t=1}^n \norm{c_t}^2 = n. \]

We also have
\[ \E[ (\norm{Z}^2)^2 ] = \E \big[ \big( \sum_i \norm{c_i}^2 + \sum_{i \ne j} \iprod{c_i, c_j} \big)^2 \big] \le n^2 + \sum_{i \ne j} \E[ \iprod{c_i, c_j}^2 ] \le 2n^2.   \]

Thus by applying the Paley--Zygmund inequality to the random variable $\norm{Z}^2$, we have  $\Pr[\norm{Z}^2 \geq n/4] = \Omega(1)$, and thus $\Pr[ \norm{Z} \ge \sqrt{n}/2] = \Omega(1)$. Thus the expected value is $\Omega(\sqrt{n})$.
\end{proof}

\section{A sharper analysis of FTRL}\label{app:ftrl}
Our goal in this section is to prove Theorem~\ref{thm:ftrlregret}. As a first step, let us define $\psi_t (x) = \iprod{c_t, x} + \frac{r_t}{2} \norm{x}^2$, (with the understanding that $c_0 = 0$) so that by definition, we have \[ x_{t+1} = \argmin_{ \norm{x} \le 1} \psi_{0:t} (x). \]

\begin{lemma}\label{lem:ftl-lemma}
Let $\psi_t, x_t$ be as defined above. Then for any $m \in [T]$ and any vector $u$ with $\norm{u} \le 1$, we have
\[ \psi_{0:m} (x_{m+1}) + \sum_{t = m+1}^T \psi_t (x_{t+1}) \le \psi_{0:T} (u). \]
\end{lemma}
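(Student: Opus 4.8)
The statement is a standard ``follow-the-leader / be-the-leader'' type inequality, generalized to the situation where we stop after round $m$ and then add up the per-step regularized losses $\psi_t(x_{t+1})$ for $t>m$. The natural approach is induction on $T$, peeling off the last term. The base case $T=m$ asks that $\psi_{0:m}(x_{m+1})\le\psi_{0:m}(u)$ for all $\|u\|\le 1$, which is exactly the definition of $x_{m+1}$ as the minimizer of $\psi_{0:m}$ over the unit ball. So the content is entirely in the inductive step.

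For the inductive step, suppose the claim holds for $T-1$ (with the same $m$), i.e.
\[
\psi_{0:m}(x_{m+1}) + \sum_{t=m+1}^{T-1}\psi_t(x_{t+1}) \le \psi_{0:T-1}(v)
\]
for every $\|v\|\le 1$. I would apply this with the particular choice $v = x_{T+1}$ (which is feasible since $\|x_{T+1}\|\le 1$), giving
\[
\psi_{0:m}(x_{m+1}) + \sum_{t=m+1}^{T-1}\psi_t(x_{t+1}) \le \psi_{0:T-1}(x_{T+1}).
\]
Adding $\psi_T(x_{T+1})$ to both sides yields
\[
\psi_{0:m}(x_{m+1}) + \sum_{t=m+1}^{T}\psi_t(x_{t+1}) \le \psi_{0:T}(x_{T+1}).
\]
Finally, since $x_{T+1} = \argmin_{\|x\|\le 1}\psi_{0:T}(x)$, we have $\psi_{0:T}(x_{T+1})\le\psi_{0:T}(u)$ for every $\|u\|\le 1$, which closes the induction.

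I do not anticipate a genuine obstacle here; the only things to be careful about are: (i) that the induction is on the upper limit $T$ while $m$ stays fixed, so the statement being inducted on is ``for all $u$'' at each stage (needed to legally plug in $v=x_{T+1}$), and (ii) confirming the degenerate case $m=T$ and $m=0$ (if relevant) are covered by the same argument — for $m=0$ the first term is $\psi_0(x_1)=\frac{r_0}{2}\|x_1\|^2=0$ with the convention $c_0=0$, $x_1=0$, and the sum runs over all of $[T]$, reducing to the classical be-the-leader bound. There are no real calculations to grind through; the whole proof is the three displayed inequalities above plus the base case.
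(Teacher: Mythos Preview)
Your proposal is correct and is essentially the same argument as the paper's. The paper writes it as an explicit unrolling starting from $\psi_{0:T}(u)\ge\psi_{0:T}(x_{T+1})$ and repeatedly peeling off $\psi_t(x_{t+1})$ while using $\psi_{0:t-1}(x_{t+1})\ge\psi_{0:t-1}(x_t)$ until the remaining term is $\psi_{0:m}(x_{m+1})$; your formal induction on $T$ with base case $T=m$ is the same chain of inequalities read in the other direction.
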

When $m=0$, the lemma is usually referred to as the FTL lemma (see e.g., \cite{kalai2005efficient}), and is proved by induction. Our proof follows along the same lines.
\begin{proof}
From the definition of $x_{T+1}$ (as the minimizer), we have
\[ \psi_{0:T}(u) \ge \psi_{0:T} (x_{T+1}). \]
Now, we can clearly write $\psi_{0:T}(x_{T+1}) = \psi_{T} (x_{T+1}) + \psi_{0:T-1} (x_{T+1})$.  Next, observe that from the definition of $x_T$, we have $\psi_{0:T-1} (x_{T+1}) \ge \psi_{0:T-1} (x_{T})$.  Plugging this above,
\[  \psi_{0:T}(u) \ge \psi_{T} (x_{T+1}) + \psi_{0:T-1} (x_{T}). \]
Once again, writing $\psi_{0:T-1}( x_{T} ) = \psi_{T-1} (x_T) + \psi_{0:T-2} (x_{T})$ and now using the definition of $x_{T-1}$, we obtain
\[ \psi_{0:T}(u) \ge \psi_{T} (x_{T+1}) + \psi_{T-1} (x_{T}) + \psi_{0:T-2} (x_{T-1}). \]
Using the same reasoning again, and continuing until we reach the subscript $0$:$m$ in the last term of the RHS, we obtain the desired inequality.
\end{proof}

We are now ready to prove Theorem~\ref{thm:ftrlregret}.

\begin{proof}[Proof of Theorem~\ref{thm:ftrlregret}]
Let us focus on Part 2 for now (see Lemma~\ref{lem:ftrl-basic-regret} for Part 1).  Note that we can rearrange the bound we wish to prove, i.e.,~\eqref{eq:ftrl-to-show}, as follows.  Let $z$ be the unit vector in the direction of $-c_{1:S}$, so that $ -\norm{c_{1:S}} = \sum_{t = 1}^S \iprod{c_t, z}$.  Then~\eqref{eq:ftrl-to-show} can be rewritten as
\begin{equation*}
\sum_{t=1}^S \iprod{c_t, z - u} + \sum_{t > S} \iprod{c_t, x_t - u} \le \frac{ \sqrt{1 + \sigma_{1:S}}}{2} + \frac{18 + 8 \log(1 + \sigma_{1:T})}{\alpha}.
\end{equation*}

As a first step, we observe that $\iprod{c_{1:S}, z} \le \iprod{c_{1:S}, x_{S+1}}$; indeed, $\norm{x_{S+1}} \le 1$ by definition.  Thus, it will suffice to prove that
\begin{equation}
\sum_{t=1}^S \iprod{c_t, x_{S+1} - u} + \sum_{t > S} \iprod{c_t, x_t - u} \le \frac{\sqrt{1 + \sigma_{1:S}}}{2} + \frac{18 + 8 \log(1 + \sigma_{1:T})}{\alpha}. \label{eq:ftrl-toshow2}
\end{equation}
For proving this, we first appeal to Lemma~\ref{lem:ftl-lemma}.  Instantiating the lemma with $m=S$ and plugging in the definition of $\psi$, we get
\[ \iprod{c_{0:S}, x_{S+1}} + \frac{r_{0:S}}{2} \norm{x_{S+1}}^2 + \sum_{t > S} \iprod{c_t, x_{t+1}} + \frac{r_t}{2} \norm{x_{t+1}}^2 \le \iprod{c_{0:T}, u} + \frac{r_{0:T}}{2} \norm{u}^2. \]
Noting that $c_0 = 0$ and rearranging, we get:
\begin{align*}
\sum_{t=1}^S \iprod{c_t, & x_{S+1} - u} + \sum_{t > S} \iprod{c_t, x_t - u} \\
&\le \frac{r_{0:S}}{2} (\norm{u}^2 - \norm{x_{S+1}}^2) + \sum_{t > S} \left( \frac{r_t}{2} (\norm{u}^2 - \norm{x_{t+1}}^2) + \iprod{c_t, x_t - x_{t+1}} \right) \\
&\le \frac{r_{0:S}}{2} +  \sum_{t > S} \left( \frac{r_t}{2} (\norm{u}^2 - \norm{x_{t+1}}^2) + \iprod{c_t, x_t - x_{t+1}} \right).
\end{align*}
The LHS matches the quantity we wish to bound in~\eqref{eq:ftrl-toshow2}, and thus let us analyze the RHS quantity, which we denote by $\mathscr{Q}$. 

The next observation is that if $t >S$ and $\sqrt{1 + \sigma_{1:t}} \ge \frac{4}{\alpha}$, then the vector $x_{t+1}$ has norm exactly $1$.  This can be shown as follows. If $t > S$, by the definition of $S$, we have $\norm{c_{1:t}} > \frac{\alpha}{4} (1+\sigma_{1:t})$.  Thus, the vector $- c_{1:t} / \sqrt{1+\sigma_{1:t}}$ has norm $\ge 1$.  From the definition of $x_{t+1}$ (see \eqref{eq:ftrl-iterate}), this means that the global minimizer (without the constraint $\norm{x} \le 1$) of the quadratic form is a point outside the ball, and thus the minimizer of the constrained problem is its projection, which is thus a unit vector. See Proposition~\ref{prop:minimizer-closed-form} for further details.  We next have the following claim.

{\em Claim.}  Let $M$ be the smallest index $>S$ for which $\sqrt{1+\sigma_{1:M}} \ge \frac{4}{\alpha}$.  Then 
\[ \sqrt{1 + \sigma_{1:M-1}} \le \max \left\{ \sqrt{1 + \sigma_{1:S}},  \frac{4}{\alpha} \right\}. \]
The claim follows by a simple case analysis. If $M = S+1$, then clearly the LHS is $\sqrt{1+\sigma_{1:S}}$.  Otherwise, from the definition of $M$, we have the desired bound.

Let us get back to bounding the quantity $\mathscr{Q}$ defined above.  We split the sum into indices $\le M-1$ and $\ge M$.  The nice consequence of the observation above is that for all $t \ge M$, as $\norm{x_{t+1}} =1$, we have $\norm{u}^2 - \norm{x_{t+1}}^2 \le 0$, thus the term disappears.  Also, for $t < M$, we use the simple bound $\frac{r_t}{2} (\norm{u}^2 - \norm{x_{t+1}}^2) \le \frac{r_t}{2}$.  This gives
\[ \mathscr{Q} \le \frac{r_{0:M-1}}{2} + \sum_{t=S+1}^T \iprod{c_t, x_t - x_{t+1}}. \]
Thus we only need to analyze the summation on the RHS.  To bound the summation $\sum_{t=S+1}^T \iprod{c_t, x_t - x_{t+1}}$ consider two cases for $M$ separately: either $M=S+1$ or $M>S+1$. If $M=S+1$, then by Proposition~\ref{prop:stabilityfromlength},   $\sum_{t=S+1}^T \iprod{c_t, x_t - x_{t+1}}\le \frac{8}{\alpha} \log (1 + \sigma_{1:T})$. Alternatively, if $M>S+1$, let us break the summation into terms with $t \le M-1$ and terms with $t \ge M$.  Proposition~\ref{prop:stabilityfromr} lets us bound the sum of the terms corresponding to $t \le M-1$ by $4 \sqrt{\sigma_{1:M-1}} < 4 r_{0:M-1}\le \frac{16}{\alpha}$, where the last step is by definition of $M$ and using the fact that $M-1>S$. Then Proposition~\ref{prop:stabilityfromlength} lets us bound the sum of the terms with $t \ge M$ by $\frac{8}{\alpha} \log (1 + \sigma_{1:T})$. Thus in all cases we have:
\begin{align*}
    \mathscr{Q} \le  \frac{r_{0:M-1}}{2} + \frac{16}{\alpha} + \frac{8}{\alpha}\log(1+\sigma_{1:T})\le \frac{\sqrt{1+\sigma_{1:S}}}{2} + \frac{18}{\alpha} + \frac{8}{\alpha}\log(1+\sigma_{1:T}),
\end{align*}
where in the last step we used the claim and bounded the maximum with a sum.
\end{proof}

\subsection{Auxiliary lemmas}\label{sec:app:ftrlaux}
\begin{proposition}\label{prop:stabilityfromr}
For any time step $t \leq T$, the iterates of the FTRL procedure satisfy:
\begin{align*}
    \|x_t-x_{t+1}\|\le \frac{2\|c_t\|}{\sqrt{1+\sigma_{1:t-1}}}.
\end{align*}
Furthermore, in any time interval $[A, B]$ with $1 \le A \le B \le T$, we have
\[  \sum_{t=A}^B \iprod{c_t, x_t - x_{t+1}} \le 4 \left( \sqrt{\sigma_{1:B}} - \sqrt{\sigma_{1:A-1}} \right). \]
\end{proposition}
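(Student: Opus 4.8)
The plan is to prove the pointwise bound $\|x_t - x_{t+1}\| \le 2\|c_t\|/\sqrt{1+\sigma_{1:t-1}}$ first, and then sum it with a careful telescoping-type estimate to get the interval bound. For the pointwise bound, recall that $x_{t+1} = \argmin_{\|x\|\le 1} \psi_{0:t}(x)$ where $\psi_{0:t}(x) = \langle c_{1:t}, x\rangle + \tfrac{r_{0:t}}{2}\|x\|^2$, and similarly $x_t$ minimizes $\psi_{0:t-1}$. Both objectives are strongly convex: $\psi_{0:t}$ is $r_{0:t}$-strongly convex and $\psi_{0:t-1}$ is $r_{0:t-1}$-strongly convex, and $r_{0:t-1} = \sqrt{1+\sigma_{1:t-1}}$. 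The standard argument: by strong convexity and optimality of $x_t$ on the ball, $\psi_{0:t-1}(x_{t+1}) \ge \psi_{0:t-1}(x_t) + \tfrac{r_{0:t-1}}{2}\|x_{t+1}-x_t\|^2$, and by optimality of $x_{t+1}$, $\psi_{0:t}(x_t) \ge \psi_{0:t}(x_{t+1}) + \tfrac{r_{0:t}}{2}\|x_t - x_{t+1}\|^2$. Adding these and noting $\psi_{0:t} - \psi_{0:t-1} = \psi_t(x) = \langle c_t, x\rangle + \tfrac{r_t}{2}\|x\|^2$, the quadratic terms in $\psi_t$ mostly cancel and one is left with something like $r_{0:t-1}\|x_t - x_{t+1}\|^2 \le \langle c_t, x_t - x_{t+1}\rangle + (\text{a nonpositive curvature term})$, which by Cauchy--Schwarz gives $\|x_t - x_{t+1}\| \le \|c_t\|/r_{0:t-1} \le \|c_t\|/\sqrt{1+\sigma_{1:t-1}}$; the factor of $2$ is slack (possibly needed if one is sloppier about the $r_t$ terms or uses $r_{0:t}$ in one place). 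I would just carry the clean version and keep the $2$ as a safe constant.

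For the interval bound, apply Cauchy--Schwarz termwise: $\langle c_t, x_t - x_{t+1}\rangle \le \|c_t\| \cdot \|x_t - x_{t+1}\| \le 2\|c_t\|^2/\sqrt{1+\sigma_{1:t-1}} = 2\sigma_t/\sqrt{1+\sigma_{1:t-1}}$. Then I need to bound $\sum_{t=A}^B \frac{2\sigma_t}{\sqrt{1+\sigma_{1:t-1}}}$ by $4(\sqrt{\sigma_{1:B}} - \sqrt{\sigma_{1:A-1}})$. Since $\sigma_t \le 1$, we have $1 + \sigma_{1:t-1} \ge \sigma_{1:t-1}$... actually it is cleaner to note $1 + \sigma_{1:t-1} \ge \sigma_{1:t}$ is false in general, but $2(1+\sigma_{1:t-1}) \ge \sigma_{1:t}$ when... hmm, let me instead use the concavity trick directly: $\frac{\sigma_t}{\sqrt{1+\sigma_{1:t-1}}} \le \frac{\sigma_t}{\sqrt{\sigma_{1:t-1}} } $ is the wrong direction if $\sigma_{1:t-1}$ is small, so I keep the $1+$ and use $\frac{\sigma_t}{\sqrt{1+\sigma_{1:t-1}}} \le 2(\sqrt{1+\sigma_{1:t}} - \sqrt{1+\sigma_{1:t-1}})$, which follows from $\sqrt{b}-\sqrt{a} = \frac{b-a}{\sqrt b + \sqrt a} \ge \frac{b-a}{2\sqrt b}$ applied with $a = 1+\sigma_{1:t-1}$, $b = 1+\sigma_{1:t}$. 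Summing telescopes to $2(\sqrt{1+\sigma_{1:B}} - \sqrt{1+\sigma_{1:A-1}})$, and then I convert $\sqrt{1+x}-\sqrt{1+y}$ to $\sqrt{x}-\sqrt{y}$ up to the constant — indeed $\sqrt{1+\sigma_{1:B}} - \sqrt{1+\sigma_{1:A-1}} \le \sqrt{\sigma_{1:B}} - \sqrt{\sigma_{1:A-1}} + (\text{correction})$; more safely, $\sqrt{1+x} - \sqrt{1+y} \le \sqrt{x} - \sqrt{y}$ fails, but $\sqrt{1+x}-\sqrt{1+y} = \frac{x-y}{\sqrt{1+x}+\sqrt{1+y}} \le \frac{x-y}{\sqrt{x}+\sqrt{y}} = \sqrt x - \sqrt y$ when $y \ge 0$ — wait, $\sqrt{1+x}+\sqrt{1+y} \ge \sqrt x + \sqrt y$, so this division makes the fraction \emph{smaller}, giving exactly $\sqrt{1+x}-\sqrt{1+y} \le \sqrt x - \sqrt y$ provided $x \ge y \ge 0$. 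So with the factor $2$ out front times $2$ from the pointwise bound we get $4(\sqrt{\sigma_{1:B}} - \sqrt{\sigma_{1:A-1}})$ as claimed.

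The main obstacle is the pointwise stability bound: the objectives $\psi_{0:t}$ and $\psi_{0:t-1}$ differ not just by a linear term but also by the regularizer increment $\tfrac{r_t}{2}\|x\|^2$, so the two-sided strong-convexity argument does not immediately collapse as cleanly as in the fixed-regularizer case, and one has to check that the leftover curvature terms have the right sign (they should, since adding a convex term to the objective only moves the minimizer "inward" in a controlled way). Making sure the constant is genuinely $\le 2$ and not something worse is the place to be careful; everything after that is routine algebra with the concavity-of-square-root inequality.
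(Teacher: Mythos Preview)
Your overall plan matches the paper's: prove the stability bound, then sum with a square-root telescoping estimate. The pointwise part is essentially fine; the paper just invokes a known FTRL stability lemma (McMahan 2017, Lemma~7), and your two-sided strong-convexity argument is an unpacking of the same idea. The leftover term is not literally ``nonpositive'' as you guess, but it is bounded by $r_t\|x_t-x_{t+1}\|\le \|c_t\|\,\|x_t-x_{t+1}\|$ (since $r_t \le \sigma_t/2 \le \|c_t\|$), which is exactly where the factor~$2$ comes from.

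The interval bound, however, has a genuine gap. You claim
\[
\frac{\sigma_t}{\sqrt{1+\sigma_{1:t-1}}}\;\le\;2\bigl(\sqrt{1+\sigma_{1:t}}-\sqrt{1+\sigma_{1:t-1}}\bigr),
\]
justified via $\sqrt{b}-\sqrt{a}\ge \frac{b-a}{2\sqrt b}$. But that inequality only gives $\frac{\sigma_t}{\sqrt{1+\sigma_{1:t}}}$ on the left, not $\frac{\sigma_t}{\sqrt{1+\sigma_{1:t-1}}}$; the displayed bound is equivalent to $\sqrt{1+\sigma_{1:t-1}}\ge \sqrt{1+\sigma_{1:t}}$, which is false whenever $\sigma_t>0$. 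Ironically, the step you dismissed --- ``$1+\sigma_{1:t-1}\ge \sigma_{1:t}$ is false in general'' --- is precisely what is true (because $\sigma_t=\|c_t\|^2\le 1$) and is exactly what the paper uses: first pass from $\sqrt{1+\sigma_{1:t-1}}$ to $\sqrt{\sigma_{1:t}}$ in the denominator, and only \emph{then} apply the concavity/integral bound $\frac{\sigma_t}{\sqrt{\sigma_{1:t}}}\le 2(\sqrt{\sigma_{1:t}}-\sqrt{\sigma_{1:t-1}})$, which telescopes directly to $4(\sqrt{\sigma_{1:B}}-\sqrt{\sigma_{1:A-1}})$ with no further conversion needed.
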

\begin{proof}
Let us first show the first part.  Define $\psi_t(x) = \langle c_t, x\rangle + \frac{r_t}{2}\|x\|^2$. We will invoke  \cite[Lemma 7]{mcmahan2017survey}, using $\phi_1=\psi_{0:t-1}$ and $\phi_2=\psi_{0:t}$. We have that $\phi_1$ is 1-strongly convex with respect to the norm given by $\|x\|_{t-1}^2 = r_{0:t-1}\|x\|^2$ and $\psi_t=\phi_2-\phi_1$ is convex and $2\|c_t\|$ Lipschitz. Then, since $x_t=\argmin \phi_1$ and $x_{t+1}=\argmin\phi_2$, ~\cite[Lemma 7]{mcmahan2017survey} implies:
\begin{align*}
    \|x_t-x_{t+1}\|\le \frac{2\|c_t\|}{r_{0:t-1}} = \frac{2 \norm{c_t}}{\sqrt{1 + \sigma_{1:t-1}}}.
\end{align*}

We can then use this to show the ``furthermore'' part as follows.  For any $t$ in the range, we have
\[ \iprod{c_t, x_t - x_{t+1}} \le \norm{c_t} \norm{x_t - x_{t+1}} \le \frac{2 \sigma_t}{\sqrt{1+\sigma_{1:t-1}}} \le \frac{2\sigma_t}{\sqrt{\sigma_{1:t}}} \le 2 \int_{\sigma_{1:t-1}}^{\sigma_{1:t}} \frac{dy}{\sqrt{y}}, \]
where in the third inequality, we used the fact that $\sigma_t \le 1$, and in the last inequality, we upper bounded the term via an integral over an interval of length $\sigma_t$.  Summing this over $t$ in the interval $[A, B]$ thus gives 
\[ \sum_{t=A}^B \iprod{c_t, x_t -x_{t+1}} \le 2 \int_{\sigma_{1:A-1}}^{\sigma_{1:B}} \frac{dy}{\sqrt{y}} = 4 \left( \sqrt{\sigma_{1:B}} - \sqrt{\sigma_{1:A-1}} \right). 
\qedhere
\]
\end{proof}

\begin{proposition}\label{prop:stabilityfromlength}
Let $S$ be an index such that for all $t > S$, $\norm{c_{1:t}} \ge \frac{\alpha}{4}(1+c_{1:t})$, and let $t > S$ be an index for which the iterates $x_t$ and $x_{t+1}$ of the FTRL procedure are both unit vectors.  Then, 
\begin{align*}
    \|x_t-x_{t+1}\|\le \frac{8\|c_t\|}{\alpha( 1+\sigma_{1:t})}.
\end{align*}
Furthermore, let $M > S$ be an index such that $\norm{x_t} =1$ for all $t \ge M$.  Then,
\[  \sum_{t=M}^T \iprod{c_t, x_t - x_{t+1}} \le \frac{8}{\alpha} \log (1 + \sigma_{1:T}). \]
\end{proposition}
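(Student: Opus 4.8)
The plan is to first establish the per-step bound $\norm{x_t-x_{t+1}}\le 8\norm{c_t}/(\alpha(1+\sigma_{1:t}))$ and then sum it using Proposition~\ref{prop:loginequality}. For the per-step bound, fix $t>S$ with $\norm{x_t}=\norm{x_{t+1}}=1$. The observation I would use is that a unit-norm FTRL iterate must be the normalization of the relevant cumulative cost vector: since $x_t=\argmin_{\norm{x}\le 1}\{\iprod{c_{1:t-1},x}+\tfrac{r_{0:t-1}}{2}\norm{x}^2\}$, the unconstrained minimizer of this strictly convex objective is $-c_{1:t-1}/r_{0:t-1}$, and were its norm $<1$ it would also solve the constrained problem, contradicting $\norm{x_t}=1$; hence $\norm{c_{1:t-1}}\ge r_{0:t-1}$ (in particular $c_{1:t-1}\neq 0$), and Proposition~\ref{prop:minimizer-closed-form} gives $x_t=-c_{1:t-1}/\norm{c_{1:t-1}}$. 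The same reasoning gives $x_{t+1}=-c_{1:t}/\norm{c_{1:t}}$ with $c_{1:t}\neq 0$.

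Thus $x_t$ and $x_{t+1}$ are the normalizations of $-c_{1:t-1}$ and $-c_{1:t}$, vectors whose difference is exactly $-c_t$. I would then apply the elementary inequality
\[ \Bigl\Vert \tfrac{a}{\norm{a}}-\tfrac{b}{\norm{b}} \Bigr\Vert \;\le\; \frac{2\norm{a-b}}{\norm{a}} \qquad (a,b\neq 0), \]
which follows by writing $\tfrac{a}{\norm{a}}-\tfrac{b}{\norm{b}}=\tfrac{a-b}{\norm{a}}+b\cdot\tfrac{\norm{b}-\norm{a}}{\norm{a}\norm{b}}$ and using $|\,\norm{a}-\norm{b}\,|\le\norm{a-b}$. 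Taking $a=-c_{1:t}$ and $b=-c_{1:t-1}$, and then using that $t>S$ implies $\norm{c_{1:t}}\ge\tfrac{\alpha}{4}(1+\sigma_{1:t})$, I obtain $\norm{x_t-x_{t+1}}\le 2\norm{c_t}/\norm{c_{1:t}}\le 8\norm{c_t}/(\alpha(1+\sigma_{1:t}))$, which is the first claim.

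For the ``furthermore'' part: since $\norm{x_t}=1$ for every $t\ge M$ and $M>S$, every index $t\in[M,T]$ satisfies $t>S$ and $\norm{x_t}=\norm{x_{t+1}}=1$, so the first part applies; combined with Cauchy--Schwarz this gives $\iprod{c_t,x_t-x_{t+1}}\le\norm{c_t}\norm{x_t-x_{t+1}}\le 8\sigma_t/(\alpha(1+\sigma_{1:t}))$. Summing over $t=M,\dots,T$, discarding the nonnegative terms with $t<M$, and applying Proposition~\ref{prop:loginequality} with $a_t=\sigma_t$, I get $\sum_{t=M}^T\iprod{c_t,x_t-x_{t+1}}\le\tfrac{8}{\alpha}\sum_{t=1}^T\tfrac{\sigma_t}{1+\sigma_{1:t}}\le\tfrac{8}{\alpha}\log(1+\sigma_{1:T})$, as required.

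I do not expect a serious obstacle here. The one step that needs care is the identification of a unit-norm iterate with the normalized cumulative cost vector; this is precisely where the hypothesis that $x_t$ and $x_{t+1}$ are unit vectors (rather than merely $t>S$) is used, since for a general index past $S$ the iterate need not lie on the boundary of the ball. Everything after that is the normalized-vector inequality together with a routine logarithmic telescoping estimate.
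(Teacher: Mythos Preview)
Your proof is correct and follows essentially the same route as the paper: identify the unit-norm iterates with the normalizations $x_t=-c_{1:t-1}/\norm{c_{1:t-1}}$ and $x_{t+1}=-c_{1:t}/\norm{c_{1:t}}$, bound the difference of normalized vectors by $2\norm{c_t}/\norm{c_{1:t}}$, and then sum using the logarithmic estimate. The only cosmetic differences are that you justify the identification of unit-norm iterates explicitly via the unconstrained minimizer and Proposition~\ref{prop:minimizer-closed-form} (the paper simply asserts it), and that you invoke Proposition~\ref{prop:loginequality} for the sum where the paper uses the equivalent integral bound $\sigma_t/(1+\sigma_{1:t})\le\int_{1+\sigma_{1:t-1}}^{1+\sigma_{1:t}}dy/y$.
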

\begin{proof}
For simplicity, let us denote $g_t = c_{1:t-1}$ and $g_{t+1} = c_{1:t}$.  If the iterates of FTRL are unit vectors, we have
\[ x_t = -\frac{g_t}{\norm{g_t}} ~;~ x_{t+1} = - \frac{g_{t+1}}{\norm{g_{t+1}}}. \]
Thus their difference can be bounded as
\[ x_{t+1} - x_t = \left( \frac{g_t}{\norm{g_t}} - \frac{g_t}{\norm{g_{t+1}}} \right) + \left( \frac{g_t}{\norm{g_{t+1}}} - \frac{g_{t+1}}{\norm{g_{t+1}}} \right). \]
The second term clearly has norm $\le \frac{\norm{c_t}}{\norm{g_{t+1}}}$.  Let us bound the first term:
\[ \norm{g_t} \left| \frac{1}{\norm{g_t}} -\frac{1}{\norm{g_{t+1}}} \right| = \frac{|\norm{g_{t+1}} - \norm{g_t}|}{\norm{g_{t+1}}} \le \frac{\norm{c_t}}{\norm{g_{t+1}}}. \]
Note that in the last step, we used the triangle inequality.  Combining the two, we get
\[ \norm{x_{t+1} - x_{t}} \le \frac{2\norm{c_t}}{\norm{c_{1:t}}} \le \frac{8 \norm{c_t}}{\alpha (1+\sigma_{1:t})},\]
as desired.  Let us now show the ``furthermore'' part.  From our assumptions about $M$, we can appeal to the first part of the proposition, and as before, we have for any $t \ge M$,
\[ \iprod{c_t, x_t - x_{t+1}} \le \norm{c_t} \norm{x_t - x_{t+1}} \le \frac{8 \sigma_t}{\alpha(1 + \sigma_{1:t})} \le \frac{8}{\alpha} \int_{1 + \sigma_{1:t-1}}^{1+\sigma_{1:t}} \frac{dy}{y}. \]

Now, summing this inequality over $t \in [M, T]$ gives us
\[ \sum_{t=M}^T \iprod{c_t, x_t - x_{t+1}} \le \frac{8}{\alpha} \int_{1+\sigma_{1:M-1}}^{1+\sigma_{1:M}} \frac{dy}{y} \le \frac{8}{\alpha} \log (1 + \sigma_{1:T}).
\qedhere
\]
\end{proof}

The next lemma is a consequence of the standard FTRL analysis. We include its proof for completeness. This is also Part (1) of Theorem~\ref{thm:ftrlregret}.
\begin{lemma}\label{lem:ftrl-basic-regret}
For the FTRL algorithm described earlier,  for all $N \in [T]$ and for any vector $u$ with $\norm{u} \le 1$, we have 
\[ \sum_{t =1}^N \iprod{c_t, x_t - u} \le 4.5 \sqrt{1 + \sigma_{1:N}}.  \]
\end{lemma}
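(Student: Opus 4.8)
The plan is a textbook adaptive-FTRL argument, which is now essentially immediate given Lemma~\ref{lem:ftl-lemma} and Proposition~\ref{prop:stabilityfromr}. Since the iterates $x_1, \dots, x_{N+1}$ produced by $\ftrl$ depend only on $c_1, \dots, c_N$, I would reason as if the horizon were $N$ and apply Lemma~\ref{lem:ftl-lemma} with $m = 0$. Recalling $c_0 = 0$ and $r_0 = 1$, we have $\psi_0(x) = \tfrac12\norm{x}^2$, and since $x_1 = 0$ the term $\psi_{0:0}(x_1)$ vanishes, so the lemma gives $\sum_{t=1}^N \psi_t(x_{t+1}) \le \psi_{0:N}(u)$ for every $\norm{u}\le 1$.

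Next I would expand both sides using $\psi_t(x) = \iprod{c_t, x} + \tfrac{r_t}{2}\norm{x}^2$, substitute $\iprod{c_t, x_{t+1}} = \iprod{c_t, x_t} - \iprod{c_t, x_t - x_{t+1}}$, and rearrange to isolate the regret:
\[ \sum_{t=1}^N \iprod{c_t, x_t - u} \le \frac{r_{0:N}}{2}\norm{u}^2 - \sum_{t=1}^N \frac{r_t}{2}\norm{x_{t+1}}^2 + \sum_{t=1}^N \iprod{c_t, x_t - x_{t+1}}. \]
The middle sum is nonnegative, so it can be dropped; the first term is at most $\tfrac{r_{0:N}}{2} = \tfrac12\sqrt{1+\sigma_{1:N}}$ since $\norm{u}\le 1$ and $r_{0:N} = \sqrt{1+\sigma_{1:N}}$ by \eqref{eq:rt-definition}.

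Finally I would bound the stability sum $\sum_{t=1}^N \iprod{c_t, x_t - x_{t+1}}$ directly via the ``furthermore'' part of Proposition~\ref{prop:stabilityfromr} applied to the interval $[1,N]$, which gives $4\bigl(\sqrt{\sigma_{1:N}} - \sqrt{\sigma_{1:0}}\bigr) = 4\sqrt{\sigma_{1:N}} \le 4\sqrt{1+\sigma_{1:N}}$. Adding the two contributions yields $\tfrac12\sqrt{1+\sigma_{1:N}} + 4\sqrt{1+\sigma_{1:N}} = 4.5\sqrt{1+\sigma_{1:N}}$, as claimed. There is no genuine obstacle here; the only points needing mild care are invoking Lemma~\ref{lem:ftl-lemma} at the truncated horizon $N$ (legitimate because the FTRL iterates are prefix-consistent), the bookkeeping around the $\psi_0$ term, and tracking the constants so they sum to $\tfrac12 + 4 = 4.5$.
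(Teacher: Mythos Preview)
Your proposal is correct and follows essentially the same route as the paper's proof: invoke Lemma~\ref{lem:ftl-lemma} with $m=0$ at the truncated horizon $N$, expand $\psi_t$, drop the nonpositive $-\sum_t \tfrac{r_t}{2}\|x_{t+1}\|^2$ term, bound the regularizer contribution by $\tfrac{r_{0:N}}{2}$, and bound the stability sum by $4\sqrt{\sigma_{1:N}}$ via Proposition~\ref{prop:stabilityfromr}. The only cosmetic difference is that the paper keeps the $t=0$ term in the sum (writing $\sum_{t=0}^N \psi_t(x_{t+1})$) rather than observing $\psi_0(x_1)=0$ up front, but the arithmetic and constants are identical.
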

\begin{proof}
Suppose we use Lemma~\ref{lem:ftl-lemma} with $m=0$ and $T=N$, then we get:
\[ \sum_{t=0}^N \psi_t (x_{t+1}) \le \psi_{0:N} (u). \]
Plugging in the value of $\psi_t$,
\[ \sum_{t=1}^N \iprod{c_t, x_t - u} \le \sum_{t=0}^N \frac{r_t}{2} (\norm{u}^2 - \norm{x_{t+1}}^2) + \sum_{t=1}^N \iprod{c_t, x_t - x_{t+1}}. \]
Now, we use the naive bound of $r_{0:N}$ for the first summation on the RHS, and use Proposition~\ref{prop:stabilityfromr}  to bound the second summation by $r_{0:N}$.  This completes the proof.
\end{proof}

\section{Switch-once dynamic regret}
\thmogd*
\begin{proof}
The proof is analogous to that of OGD (e.g.,~\cite{zinkevich2003online}), but we need fresh ideas specific to our setup.  First, observe that since $q$ is a valid-in-hindsight sequence, we have $q_t \in D_t$ for all $t$.

Thus, we have
\begin{align}
(p_{t+1} - q_t)^2 &\le (p_t - \eta_t z_t - q_t)^2 \qquad \text{ (since projection only shrinks distances)} \notag \\
&= (p_t - q_t)^2 - 2\eta_t z_t (p_t - q_t) + \eta_t^2 z_t^2. \notag \\
\implies z_t (p_t - q_t) &\le \frac{(p_t - q_t)^2 - (p_{t+1} - q_t)^2}{2\eta_t} + \frac{\eta_t}{2} z_t^2. \label{eq:ogd-regret1}
\end{align}
We now need to sum (\ref{eq:ogd-regret1}) over $t$.  Note that the second term is easier to bound:
\begin{equation} \sum_{t=1}^T \frac{\eta_t}{2} z_t^2 \le \frac{\lambda}{2}  \sum_{t=1}^T  \frac{ 4\sigma_t}{1+ \sigma_{1:t}} \le2\lambda ~ \log (1+\sigma_{1:T}),\label{eq:ogd-regret2}
\end{equation}
where the last inequality uses Proposition \ref{prop:loginequality}. 
Suppose $S$ is the time step at which the switch occurs in the sequence $q$, and let $\delta$ be $q_1$ (i.e., the value in the non-zero segment). We split the first term as:
\begin{align}
\sum_{t=1}^T \frac{(p_t - q_t)^2 - (p_{t+1} - q_t)^2}{2\eta_t} &=  \sum_{t \le S} \frac{(p_t - \delta)^2 - (p_{t+1} - \delta)^2}{2\eta_t} + \sum_{t > S}  \frac{p_t^2 - p_{t+1}^2}{2\eta_t}. \label{eq:ogd-regret}
\end{align}
Next, by setting $\eta_0=\lambda$, writing 
\[  \frac{(p_t - \delta)^2 - (p_{t+1} - \delta)^2}{2\eta_t} = \frac{(p_t - \delta)^2}{2 \eta_{t-1}} - \frac{(p_{t+1} - \delta)^2}{2 \eta_{t}} + \frac{(p_{t}-\delta)^2}{2} \left( \frac{1}{\eta_{t}} - \frac{1}{\eta_{t-1}} \right), \]
and noting that $ \frac{1}{\eta_{t}} - \frac{1}{\eta_{t-1}} = \frac{\sigma_{t}}{\lambda}$, we can make the summation telescope. Doing a similar manipulation for the sum over $t > S$, the RHS of~\eqref{eq:ogd-regret} simplifies to:
\begin{align}
& \frac{(p_1 - \delta)^2}{2\eta_0} - \frac{(p_{S+1} - \delta)^2}{2\eta_{S}} + \frac{p_{S+1}^2}{2\eta_{S}} - \frac{p_{T+1}^2}{2\eta_{T}} + \sum_{t\le S} \frac{(p_{t}-\delta)^2 \sigma_{t}}{2\lambda} + \sum_{t> S} \frac{p_{t}^2 \sigma_{t}}{2\lambda} \notag \\
&\le \frac{1}{2\eta_0} +  \frac{|D_{S}|^2}{2\eta_{S}} + \sum_{t=1}^T \frac{|D_t|^2 \sigma_{t}}{2\lambda},\label{eq:telescope}
\end{align}
where $|D_t|$ is the length/diameter of the domain at time $t$, i.e., $|D_t|^2 = \min(1, \frac{\lambda^2}{1+\sigma_{1:t}})$.  The inequality holds because for all $t$, both $p_t$ and $q_t$ are in $D_t$.  Plugging in the values of $|D_t|$ and $\eta_t$, the first two terms in~\eqref{eq:telescope} are  at most $\lambda/2$ (because $\lambda\ge 1$). Thus plugging this back into~\eqref{eq:ogd-regret}, we get
\[  \sum_{t=1}^T \frac{(p_t - q_t)^2 - (p_{t+1} - q_t)^2}{2\eta_t} \le \lambda \left( 1 + \sum_{t=1}^T \frac{\sigma_{t}}{2 (1+\sigma_{1:t})} \right) .\]

Finally, using Proposition \ref{prop:loginequality}, the RHS above can be upper bounded by $\lambda \left( 1 + \frac{1}{2} \log (1+ \sigma_{1:T}) \right)$. 

Plugging this back into~\eqref{eq:ogd-regret1}, summing over $t$, and using~\eqref{eq:ogd-regret2}, we get
\[ \sum_t z_t (p_t - q_t)  \le \lambda \left( 1+ 3 \log (1 + \sigma_{1:T}) \right).
\qquad
\qedhere
\]
\end{proof}

\section{Proofs for Section \ref{sec:ext}}

\thmlesshintswithbad*
\begin{proof}
In the proof of Theorem \ref{thm:fullexpectedregret}, we exploited the fact that Lemma \ref{lem:regret-main} actually bounds the expected regret when $\bad=\emptyset$. However, when $\bad\ne\emptyset$, we have a more complicated relationship:
\begin{align*}
    \sum_{t=1}^T \E[\langle c_t, \hat x_t - u\rangle]&=\sum_{t=1}^T p_t\langle c_t, - h_t-x_t\rangle  + \langle c_t, x_t-u\rangle\\
    &\le \sum_{t\notin \bad}  p_t(-\alpha \|c_t\|^2 -\langle c_t,x_t\rangle) + \langle c_t, x_t-u\rangle + \sum_{t\in \bad}p_t\langle c_t, - h_t-x_t\rangle  + \langle c_t, x_t-u\rangle\\
    &=\sum_{t=1}^T p_t(-\alpha \|c_t\|^2 -\langle c_t,x_t\rangle) +\langle c_t, x_t-u\rangle+ \sum_{t\in \bad}-p_t(\langle c_t,h_t\rangle-\alpha\|c_t\|^2)\\
    &\le \sum_{t=1}^T p_t(-\alpha \|c_t\|^2 -\langle c_t,x_t\rangle) +\langle c_t, x_t-u\rangle+ \sum_{t\in \bad}|D_{t-1}|(\|c_t\|\|h_t\|+\alpha\|c_t\|^2),
\end{align*}
where $|D_{t-1}| = \frac{10}{\alpha\sqrt{1+\|c\|^2_{1:t-1}}}$, and the last line follows from the restrictions on $p_t$ in Algorithm \ref{alg:ogd}. The first sum in the above expression is already controlled by Lemma \ref{lem:regret-main}. For the second sum,

\begin{align*}
    \sum_{t\in \bad }|D_{t-1}|(\|c_t\|\|h_t\|+\alpha\|c_t\|^2)&\le2\sum_{t\in \bad }|D_t|(\|c_t\|\|h_t\|+\alpha\|c_t\|^2)\\
    &\le 2\sum_{t\in \bad }\frac{10\|c_t\|^2}{\sqrt{1+\sum_{\tau \in \bad, \tau\le t}\|c_\tau\|^2}}+ |D_t|\|c_t\|\|h_t\|\\
    &\le 40\sqrt{\sum_{t\in \bad}  \|c_t\|^2 }+ 2\sum_{t\in \bad }|D_t|\|c_t\|\|h_t\| \\
    (\text{by Cauchy--Schwarz}) &\le 40\sqrt{\sum_{t\in \bad }  \|c_t\|^2 } +2\sqrt{\sum_{t\in \bad }\|h_t\|^2}\sqrt{\sum_{t\in \bad}  \|c_t\|^2 |D_t|^2}\\
    &\le 40\sqrt{\sum_{t\in \bad }\|c_t\|^2} +\frac{20}{\alpha}\sqrt{\sum_{t\in \bad}\|h_t\|^2}\sqrt{\log(1+\|c\|^2_{1:T})}.
    \qquad \qedhere
    \end{align*}


\end{proof}

\thmoptimistic*
\begin{proof}
The idea is to get a bound in terms of $\|c_t-h_t\|^2$.  Since $\alpha=\frac{1}{4}$, $t\in \bad$ is equivalent  to $\langle c_t, h_t\rangle \le \frac{\|c_t\|^2}{4}$. Thus if $t\in \bad$:
\begin{align*}
    \|c_t-h_t\|^2=&\|c_t\|^2-2\langle c_t,h_t\rangle  + \|h_t\|^2
    \ge \frac{\|c_t\|^2}{2} + \|h_t\|^2.
\end{align*}
Therefore, we have:
\begin{align*}
    40\sqrt{\sum_{t\in \bad} \|c_t\|^2}+80\sqrt{\sum_{t\in \bad}\|h_t\|^2\log(1+\|c\|^2_{1:T})}&\le 80(1+   \sqrt{\log(1+\|c\|^2_{1:T})})\sqrt{\sum_{t\in \bad}\|c_t-h_t\|^2}.
\end{align*}
Now, by Theorem \ref{thm:lesshintswithbad} we have:
\begin{align*}
    \E[\regret_{\cA, \alpha}(\vec{c})] &\le \frac{78+38\log(1+\|c\|^2_{1:T})}{\alpha}  + 40\sqrt{\sum_{t\in \bad }\|c_t\|^2} +\frac{20}{\alpha}\sqrt{\sum_{t\in \bad}\|h_t\|^2}\sqrt{\log(1+\|c\|^2_{1:T})}\\
    &\le \frac{78+38\log(1+\|c\|^2_{1:T})}{\alpha}+80\left(1+   \sqrt{\log(1+\|c\|^2_{1:T})}\right)\sqrt{\sum_{t\in \bad}\|c_t-h_t\|^2}.
    \quad \qedhere
    \end{align*}
\end{proof}

\section{Proofs for Section \ref{sec:lb}}

\thmlowerbounddet*
\begin{proof}
The main limitation of a deterministic algorithm $\cA$ is that even if it adapts to the costs seen so far, the adversary always knows if $\cA$ is going to make a hint query in the next step, and in steps where a query will not be made, the adversary knows which $x_t$ will be played by $\cA$.  

Using this intuition, we define the following four-dimensional instance. For convenience, let $e_0$ be a unit vector in $\R^4$, and let $S$ be the space orthogonal to $e_0$. The adversary constructs the instance iteratively, doing the following for $t=1, 2, \dots$:
\begin{enumerate}
    \item If the algorithm makes a hint query at time $t$, set $h_t = c_t = e_0$.
    \item If the algorithm does not make a hint query, then if $x_t$ is the point that will be played by the algorithm, set $c_t$ to be a unit vector in $S$ that is orthogonal to $x_t$ and to $c_1 + \dots + c_{t-1}$.  (Note that since $S$ is a three-dimensional subspace of $\R^4$, this is always feasible.)
\end{enumerate}
For convenience, define $I_t$ to be the set of indices $\le t$ in which the algorithm has asked for a hint. Then we first observe that for all $t$,
\begin{equation}
    \norm{\sum_{j \in [t] \setminus I_t} c_j}^2 = t - |I_t|. \label{eq:sum-lengths}
\end{equation}  
This is easy to see, because $c_t$ is always orthogonal to $e_0$, and thus is also orthogonal to $\sum_{j \in [t-1] \setminus I_{t-1}} c_j$. The equality~\eqref{eq:sum-lengths} then follows from the Pythagoras theorem.

Thus, suppose the algorithm makes $K$ queries in total (over the course of the $T$ steps). By assumption $K \le C\sqrt{T} < T/2$.  Then we have that
\[ \norm{\sum_{j \in [T]} c_j}^2 = K^2 + \norm{\sum_{j \in [T] \setminus I_T} c_j}^2 = K^2 + T - K. \]
Thus the optimal vector in hindsight (say $u$) achieves $\sum_{j \in [T]} \iprod{c_j, u} = - \sqrt{T - K + K^2}$. 

Let us next look at the cost of the algorithm. In every step where it makes a hint query, the best cost that $\cA$ can achieve is $-1$ (by playing $-e_0$). In the other steps, the construction ensures that the cost is $0$.  Thus the regret is at least
\[ -K + \sqrt{T - K + K^2} = \frac{ T- K}{K + \sqrt{T - K + K^2}} > \frac{T/2}{K + \sqrt{T}} \ge \frac{ \sqrt{T}}{2 (1+C)}. 
\qedhere
\]
\end{proof}

\section{Proofs for Section \ref{sec:unconstrained}}\label{app:unconstrained}

In order to prove Theorems \ref{thm:unconstrained} and \ref{thm:unconstraineddeterministic}, we first provide the following technical statement that allows us to unify much the analysis:

\begin{restatable}{lemma}{lemunconstrainedquality}\label{lem:unconstrainedquality}
Suppose that $\unconstrained$ is an unconstrained online linear optimization algorithm that outputs $w_t\in \R^d$ in response to costs $c_1,\dots,c_{t-1}\in \R^d$ satisfying $\|c_\tau\|\le 1$ for all $\tau$ and guarantees for some constants $A$ and $B$ for all $u\in \R^d$:
\begin{align*}
\regret_{\unconstrained}(u, \vec{c})
     &\le \epsilon+A\|u\|\sqrt{\sum_{t=1}^T \|c_t\|^2\log(\|u\|T/\epsilon+1)} + B\|u\|\log(\|u\|T/\epsilon+1),
\end{align*}
where $\epsilon$ is an arbitrary user-specified constant.
Further, suppose $\unconstrainedoned$ is an unconstrained online linear optimization algorithm that outputs $y_t\in \R$ in response to $g_1,\dots,g_{t-1}\in \R$ satisfying $|g_\tau|\le 1$ for all $\tau$ and guarantees for all $y_\star\in \R$:
\begin{align*}
    \sum_{t=1}^T g_t(y_t-y_\star)&\le \epsilon+A|y_\star|\sqrt{\sum_{t=1}^T g_t^2\log(|y_\star|T/\epsilon+1)} + B|y_\star|\log(|y_\star|T/\epsilon+1).
\end{align*}
Finally, suppose also that $\E\left[\sum_{t=1}^T \mathbbm{1}_{t}|\langle c_t, h_t\rangle|\right]\ge M\sqrt{1+\|c\|^2_{1:T}}-N$ and $\E\left[\sum_{t\in \bad} \mathbbm{1}_{t} |\langle c_t, h_t\rangle|\right]\le H$ and $\E\left[\sum_{t=1}^T \mathbbm{1}_{t}  \langle c_t, h_t\rangle^2\right] \le F\sqrt{1+\|c\|^2_{1:T}}$ for some constant $M,N, H, F$. Then both the deterministic and randomized version of Algorithm \ref{alg:unconstrained} guarantee:
\begin{align*}
    \E\left[ \regret_{\unconstrained}(u, \vec{c})
 \right]&\le  2\epsilon + B\|u\|\log(\|u\|T/\epsilon+1) + \frac{4A\|u\|(H+N)\sqrt{\log(\|u\|T/\epsilon+1)}}{M}\\
    &\quad\quad + \frac{2AB\|u\|\sqrt{\log(\|u\|T/\epsilon+1)}\log(2A\|u\|T\sqrt{\log(\|u\|T/\epsilon+1)}/(M\epsilon)+1)}{M}\\
    &\quad\quad + \frac{2A^3F\|u\|\sqrt{\log(\|u\|T/\epsilon+1)\log(2A\|u\|T\sqrt{\log(\|u\|T/\epsilon+1)}/(M\epsilon)+1)}}{M^2}.
\end{align*}
\end{restatable}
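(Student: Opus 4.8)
The key idea is to analyze the regret of Algorithm~\ref{alg:unconstrained} by using the decomposition $x_t = w_t - \mathbbm{1}_t h_t y_t$, so that
\[
\sum_{t=1}^T \langle c_t, x_t - u\rangle = \sum_{t=1}^T \langle c_t, w_t - u\rangle - \sum_{t=1}^T \mathbbm{1}_t y_t \langle h_t, c_t\rangle = \sum_{t=1}^T \langle c_t, w_t - u\rangle + \sum_{t=1}^T g_t y_t,
\]
recalling that $g_t = -\mathbbm{1}_t\langle h_t, c_t\rangle$. Now for \emph{any} target scaling $y_\star \in \R$ I would insert the comparator for $\unconstrainedoned$: $\sum_t g_t y_t = \sum_t g_t(y_t - y_\star) + y_\star \sum_t g_t$. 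The first piece is bounded by the regret guarantee of $\unconstrainedoned$. The second piece is $-y_\star \sum_t \mathbbm{1}_t \langle h_t, c_t\rangle$; splitting the sum over $t\notin\bad$ (where $\langle h_t,c_t\rangle \ge \alpha\|c_t\|^2 \ge 0$) and $t\in\bad$, and using the hypothesis $\E[\sum_t \mathbbm{1}_t|\langle c_t,h_t\rangle|] \ge M\sqrt{1+\|c\|^2_{1:T}} - N$ together with $\E[\sum_{t\in\bad}\mathbbm{1}_t|\langle c_t,h_t\rangle|] \le H$, I get in expectation that for $y_\star \ge 0$,
\[
-y_\star \,\E\Big[\sum_t \mathbbm{1}_t\langle h_t,c_t\rangle\Big] \le -y_\star\big(M\sqrt{1+\|c\|^2_{1:T}} - N - 2H\big).
\]
So the total expected regret is at most $\E[\regret_{\unconstrained}(u,\vec c)] + \E[\text{regret of }\unconstrainedoned] + \epsilon$-type terms $- y_\star(M\sqrt{1+\|c\|^2_{1:T}} - N - 2H)$, and I then choose $y_\star = \|u\|$ to make the $-y_\star M\sqrt{1+\|c\|^2_{1:T}}$ term cancel the $A\|u\|\sqrt{\sum_t\|c_t\|^2\log(\cdots)}$ term coming from $\regret_{\unconstrained}$ — this is exactly the "each subtraction decreases regret by $\alpha\|u\|\|c\|^2$" intuition from the text, realized by learning the scale $y_t$.

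The cancellation is not exact, so the second stage is bookkeeping the leftover. After choosing $y_\star=\|u\|$, the $\unconstrained$ regret bound contributes $\epsilon + A\|u\|\sqrt{\sigma_{1:T}\log(\|u\|T/\epsilon+1)} + B\|u\|\log(\|u\|T/\epsilon+1)$ where $\sigma_{1:T}=\sum_t\|c_t\|^2 \le \|c\|^2_{1:T}$; pairing $-\|u\| M\sqrt{1+\|c\|^2_{1:T}}$ against $A\|u\|\sqrt{\log(\|u\|T/\epsilon+1)}\cdot\sqrt{\|c\|^2_{1:T}}$ is where I need $M$ large enough — but since $M$ is just a hypothesis constant, the ratio $A\sqrt{\log(\cdots)}/M$ is precisely the factor appearing in the claimed bound, and I get a residual of order $\frac{A\|u\|(N+2H)\sqrt{\log(\cdots)}}{M}$ plus the $B\|u\|\log(\cdots)$ term. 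The $\unconstrainedoned$ regret at comparator $y_\star=\|u\|$ contributes $\epsilon + A\|u\|\sqrt{\sum_t g_t^2 \log(\|u\|T/\epsilon+1)} + B\|u\|\log(\|u\|T/\epsilon+1)$; here I bound $\sum_t g_t^2 = \sum_t \mathbbm{1}_t\langle h_t,c_t\rangle^2$ in expectation by $F\sqrt{1+\|c\|^2_{1:T}}$ via Jensen (the $\sqrt{\cdot}$ is concave, so $\E[\sqrt{\sum g_t^2}] \le \sqrt{\E[\sum g_t^2]} \le \sqrt{F}(1+\|c\|^2_{1:T})^{1/4}$), and the resulting term $A\|u\|\sqrt{F}(1+\|c\|^2_{1:T})^{1/4}\sqrt{\log(\|u\|T/\epsilon+1)}$ must again be absorbed into the negative $\|u\|M\sqrt{1+\|c\|^2_{1:T}}$ — since $(1+\|c\|^2_{1:T})^{1/4} \ll \sqrt{1+\|c\|^2_{1:T}}$ this absorption costs only a factor polynomial in $A\sqrt{F\log(\cdots)}/M$, landing the $\frac{A^3 F\|u\|\sqrt{\log(\cdots)\log(\cdots)}}{M^2}$ term in the statement.

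The main obstacle is keeping the self-referential logarithms under control: the $\unconstrainedoned$ guarantee has $\log(|y_\star|T/\epsilon+1)$ inside, but when I tune $y_\star=\|u\|$ and bound $\sum g_t^2$, the "effective" scale and cost magnitude are themselves functions of $A,\|u\|,\log(\|u\|T/\epsilon+1)$, so the $\log(2A\|u\|T\sqrt{\log(\|u\|T/\epsilon+1)}/(M\epsilon)+1)$ factor in the final bound arises from re-expressing these nested logs. Concretely, I would use the standard trick of choosing the $\epsilon$ passed to $\unconstrainedoned$ to be $\epsilon$ scaled appropriately, or bound $|y_\star| T/\epsilon$ by a suitable product so that the inner log never blows up beyond the stated expression. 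The arithmetic—collecting all the $\sqrt{\log}$ and $\log$ powers, confirming every leftover is either negative or matches one of the four terms in the claim—is routine but must be done carefully; the conceptual content is entirely the decomposition plus the choice $y_\star = \|u\|$, which is what makes a hint-free base learner, fed negated-and-scaled hints, into a logarithmic-regret learner.
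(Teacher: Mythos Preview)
Your decomposition and the overall structure are correct and match the paper: write $\sum_t\langle c_t,x_t-u\rangle=\sum_t\langle c_t,w_t-u\rangle+\sum_t g_t y_t$, insert a comparator $y_\star\ge 0$ for $\unconstrainedoned$, split $\sum_t\mathbbm{1}_t\langle h_t,c_t\rangle$ over $\bad$ and its complement, apply the $M,N,H,F$ hypotheses, and use Jensen on $\sqrt{\sum_t g_t^2}$. The gap is in your choice of $y_\star$.

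Setting $y_\star=\|u\|$ does \emph{not} cancel the dominant term. The $\unconstrained$ regret contributes $A\|u\|\sqrt{\log(\|u\|T/\epsilon+1)}\cdot\sqrt{\|c\|^2_{1:T}}$, while the negative piece is $-y_\star M\sqrt{1+\|c\|^2_{1:T}}=-\|u\|M\sqrt{1+\|c\|^2_{1:T}}$. These cancel only if $M\ge A\sqrt{\log(\|u\|T/\epsilon+1)}$, but $M$ is a fixed constant (in the applications $M=K$) while the log grows with $T$ and $\|u\|$; so for large $T$ you are left with an $\Omega(\sqrt{T})$ residual. Your subsequent claim that the leftover is $\frac{A\|u\|(N+2H)\sqrt{\log(\cdots)}}{M}$ is inconsistent with $y_\star=\|u\|$: with that choice the $(N+2H)$ term is just $\|u\|(N+2H)$, with no $A\sqrt{\log}/M$ factor.

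The fix, which the paper uses, is to set
\[
y_\star=\frac{2A\|u\|\sqrt{\log(\|u\|T/\epsilon+1)}}{M}.
\]
Then \emph{half} of the negative term, $-\tfrac{y_\star}{2}M\sqrt{\|c\|^2_{1:T}}$, exactly kills the $A\|u\|\sqrt{\log(\cdots)}\sqrt{\|c\|^2_{1:T}}$ contribution from $\unconstrained$. The other half is reserved to absorb the $\unconstrainedoned$ term $Ay_\star\sqrt{F\sqrt{1+\|c\|^2_{1:T}}\log(y_\star T/\epsilon+1)}$: writing $X=\sqrt{1+\|c\|^2_{1:T}}$ and taking $\sup_X\bigl\{Ay_\star\sqrt{FX\log(\cdots)}-\tfrac{y_\star}{2}MX\bigr\}$ gives the closed form $\frac{y_\star A^2 F\log(y_\star T/\epsilon+1)}{2M}$, which after substituting $y_\star$ yields the $A^3F/M^2$ term. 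This also explains the mysterious $\log\bigl(2A\|u\|T\sqrt{\log(\|u\|T/\epsilon+1)}/(M\epsilon)+1\bigr)$ in the statement: it is simply $\log(y_\star T/\epsilon+1)$ with the above value of $y_\star$ plugged in, not a rescaling of $\epsilon$ or a nested-log trick as you speculated.
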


\begin{proof}[Proof of Lemma \ref{lem:unconstrainedquality}]

Some algebraic manipulation of the regret definition yields:
\begin{align*}
    & \E[ \regret_{\unconstrained}(u, \vec{c})
 ]\le \E\left[\inf_{y_\star} \sum_{t=1}^T \langle c_t,w_t - u\rangle - y_\star \sum_{t=1}^T \mathbbm{1}_{t} \langle h_t,c_t\rangle - \sum_{t=1}^T \mathbbm{1}_{t} \langle h_t,c_t\rangle(y_t-y_\star)\right]\\
    &\le \E\left[\inf_{y_\star \ge 0} \sum_{t=1}^T \langle c_t,w_t - u\rangle - y_\star \sum_{t=1}^T \mathbbm{1}_{t} |\langle h_t,c_t\rangle| + 2y_\star \sum_{t\in \bad} \mathbbm{1}_{t} |\langle h_t,c_t\rangle| - \sum_{t=1}^T \mathbbm{1}_{t} \langle h_t,c_t\rangle(y_t-y_\star)\right].
\end{align*}
Now using the hypothesized bounds we have
\begin{align*}
    & \E\left[ \regret_{\unconstrained}(u, \vec{c}) \right] 
    \le \E\left[\inf_{y_\star \ge 0} \sum_{t=1}^T \langle c_t,w_t - u\rangle - y_\star M\sqrt{1+\|c\|^2_{1:T}} + 2y_\star H +y_\star N  - \sum_{t=1}^T \mathbbm{1}_{t} \langle h_t,c_t\rangle(y_t-y_\star)\right]\\
    &\le \inf_{y_\star\ge 0}\E\left[2\epsilon + A\|u\|\sqrt{\sum_{t=1}^T \|c_t\|^2\log(\|u\|T/\epsilon+1)} + B\|u\|\log(\|u\|T/\epsilon+1)\right.\\
    &\quad\quad - y_\star M\sqrt{1+\|c\|^2_{1:T}} + 2y_\star H +y_\star N
    \left.+A y_\star\sqrt{\sum_{t=1}^t g_t^2 \log(y_\star T/\epsilon+1)} + By_\star\log(y_\star T/\epsilon+1)\right]
    \intertext{using Jensen inequality,}
    &\le \inf_{y_\star\ge 0} 2\epsilon + A\|u\|\sqrt{\sum_{t=1}^T \|c_t\|^2\log(\|u\|T/\epsilon+1)} + B\|u\|\log(\|u\|T/\epsilon+1)
    - y_\star M\sqrt{1+\|c\|^2_{1:T}} \\
    &\quad\quad + 2y_\star H +y_\star N
    +A y_\star\sqrt{\E\left[\sum_{t=1}^t \mathbbm{1}_{t}\langle c_t, h_t\rangle^2\right] \log(y_\star T/\epsilon+1)} + By_\star\log(y_\star T/\epsilon+1)\\
    &\le \inf_{y_\star\ge 0} 2\epsilon + A\|u\|\sqrt{\sum_{t=1}^T \|c_t\|^2\log(\|u\|T/\epsilon+1)} + B\|u\|\log(\|u\|T/\epsilon+1)
    - y_\star M\sqrt{1+\|c\|^2_{1:T}} \\
    & \quad\quad + 2y_\star H +y_\star N
    +A y_\star\sqrt{F\sqrt{1+\|c\|^2_{1:T}} \log(y_\star T/\epsilon+1)} + By_\star\log(y_\star T/\epsilon+1)
    \intertext{with a little rearrangement,}
    &\le \inf_{y_\star\ge 0} 2\epsilon + A\|u\|\sqrt{\sum_{t=1}^T \|c_t\|^2\log(\|u\|T/\epsilon+1)} + B\|u\|\log(\|u\|T/\epsilon+1) - \frac{y_\star}{2} M\sqrt{\|c\|^2_{1:T}}\\
    &\quad\quad + 2y_\star H +y_\star N + By_\star\log(y_\star T/\epsilon+1)\\
    &\quad\quad+A y_\star\sqrt{F\sqrt{1+\|c\|^2_{1:T}} \log(y_\star T/\epsilon+1)} - \frac{y_\star}{2} M\sqrt{1+\|c\|^2_{1:T}}\\
    &\le \inf_{y_\star\ge 0} 2\epsilon + A\|u\|\sqrt{\sum_{t=1}^T \|c_t\|^2\log(\|u\|T/\epsilon+1)} + B\|u\|\log(\|u\|T/\epsilon+1) - \frac{y_\star}{2} M\sqrt{\|c\|^2_{1:T}}\\
    &\quad\quad + 2y_\star H +y_\star N + By_\star\log(y_\star T/\epsilon+1) 
    +\sup_{X} A y_\star\sqrt{FX\log(y_\star T/\epsilon+1)} - \frac{y_\star}{2} MX\\
    &\le \inf_{y_\star\ge 0} 2\epsilon + A\|u\|\sqrt{\sum_{t=1}^T \|c_t\|^2\log(\|u\|T/\epsilon+1)} + B\|u\|\log(\|u\|T/\epsilon+1) - \frac{y_\star}{2} M\sqrt{\|c\|^2_{1:T}}\\
    &\quad\quad + 2y_\star H +y_\star N + By_\star\log(y_\star T/\epsilon+1)
    +\frac{y_\star A^2 F\log(y_\star T/\epsilon+1)}{2 M}.
\end{align*}
Now, we set 
\begin{align*}
    y_\star = \frac{2A\|u\|\sqrt{\log(\|u\|T/\epsilon+1)}}{M}.
\end{align*}
This yields
\begin{align*}
    & \E[ \regret_{\unconstrained}(u, \vec{c}) ] \\
    &\le 2\epsilon + B\|u\|\log(\|u\|T/\epsilon+1) + 2y_\star H +y_\star N + By_\star\log(y_\star T/\epsilon+1)
    +\frac{y_\star A^2 F\log(y_\star T/\epsilon+1)}{2 M}\\
    &\le 2\epsilon + B\|u\|\log(\|u\|T/\epsilon+1) + \frac{4A\|u\|(H+N)\sqrt{\log(\|u\|T/\epsilon+1)}}{M}\\
    &\quad\quad + \frac{2AB\|u\|\sqrt{\log(\|u\|T/\epsilon+1)}\log(2A\|u\|T\sqrt{\log(\|u\|T/\epsilon+1)}/(M\epsilon)+1)}{M}\\
    &\quad\quad + \frac{2A^3F\|u\|\sqrt{\log(\|u\|T/\epsilon+1)\log(2A\|u\|T\sqrt{\log(\|u\|T/\epsilon+1)}/(M\epsilon)+1)}}{M^2}.
\qedhere
\end{align*}
\end{proof}

Now, to prove Theorem \ref{thm:unconstrained}, it suffices to instantiate the Lemma. We restate the Theorem below for convenience:

\thmunconstrained*
\begin{proof}
Define
\begin{align*}
    p_t = \min\left(1,\frac{K}{\alpha \sqrt{1+\|c\|^2_{1:t}}}\right),
\end{align*}
so that in the randomized version of Algorithm \ref{alg:unconstrained}, at round $t$, we ask for a hint with probability $p_{t-1}$. Clearly, the expected query cost is:
\begin{align*}
    \E\left[\sum_{t=1}^T \mathbbm{1}_{t}\langle c_t, h_t\rangle\right] = \sum_{t=1}^T \alpha p_{t-1} \|c_t\|^2
    \le K\sum_{t=1}^T\frac{\|c_t\|^2}{\sqrt{\|c\|^2_{1:t}}} 
    \le 2K\sqrt{\|c\|^2_{1:T}}.
\end{align*}

Now, to bound the regret we consider two cases. First, if $1+\|c\|^2_{1:T}\le \frac{K^2}{\alpha^2}$, then we have:
\begin{align*}
    \E[ \regret_{\unconstrained}(u, \vec{c}) ]&\le \E\left[\sum_{t=1}^T \langle c_t,  w_t - u\rangle - \sum_{t=1}^T \mathbbm{1}_{t}\langle c_t, h_t\rangle y_t\right ]
    \le \E\left[\sum_{t=1}^T \langle c_t,  w_t - u\rangle + \sum_{t=1}^Tg_t(y_t-0) \right]\\
    &\le 2\epsilon+A\|u\|\sqrt{\sum_{t=1}^T \|c_t\|^2\log(\|u\|T/\epsilon+1)} + B\|u\|\log(\|u\|T/\epsilon+1)\\
    &\le 2\epsilon+\frac{A\|u\|K\sqrt{\log(\|u\|T/\epsilon+1)}}{\alpha} + B\|u\|\log(\|u\|T/\epsilon+1),
\end{align*}
and so the result follows. Thus, we may assume $1+\|c\|^2_{1:T}> \frac{K^2}{\alpha^2}$. In this case, we will calculate values for $M$, $H$, and $F$ to use in tandem with Lemma \ref{lem:unconstrainedquality}. First,
\begin{align*}
    \E\left[\sum_{t=1}^T \mathbbm{1}_{t}\langle c_t, h_t\rangle^2\right] \le  \sum_{t=1}^T  p_{t-1} \|c_t\|^2
     \le \frac{K}{\alpha}\sum_{t=1}^T\frac{\|c_t\|^2}{\sqrt{\|c\|^2_{1:t}}}
     \le \frac{2K}{\alpha}\sqrt{1+\|c\|^2_{1:T}}.
\end{align*}
So that we may take $F=\frac{2K}{\alpha}$. Next, note that $p_T = \frac{K}{\alpha \sqrt{1+\|c\|^2_{1:T}}}$ by our casework assumption. Therefore:
\begin{align*}
    -\alpha p_T \|c\|^2_{1:T} \le \alpha - \alpha p_T(1+\|c\|^2_{1:T})
    \le \alpha -K\sqrt{\|c\|^2_{1:T}},
\end{align*}
so that we may take $M=K$ and $N=\alpha$. Finally,

\begin{align*}
    \sum_{t\in \B} p_t |\langle c_t, h_t\rangle|
    \le K\sum_{t\in \B} \frac{\|c_t\|\|h_t\|}{\alpha \sqrt{\|c_t\|^2_{1:t}}} 
    \le\frac{K}{\alpha} \sqrt{\sum_{t\in\B} \frac{\|c_t\|^2}{\|c_t\|^2_{1:t}}\sum_{t\in \B}\|h_t\|^2}
    \le \frac{K}{\alpha}\sqrt{\sum_{t\in \B}\|h_t\|^2\log(1+\|c\|^2_{1:T})},
\end{align*}
so that we may take $H=\frac{K}{\alpha}\sqrt{\sum_{t\in \B}\|h_t\|^2\log(1+\|c\|^2_{1:T})}$. Then Lemma \ref{lem:unconstrainedquality} implies

\begin{align*}
    \E[ \regret_{\unconstrained}(u, \vec{c}) ] &\le 2\epsilon + B\|u\|\log(\|u\|T/\epsilon+1) + \frac{4A\|u\|(H+\alpha)\sqrt{\log(\|u\|T/\epsilon+1)}}{M}\\
    &\quad\quad + \frac{2AB\|u\|\sqrt{\log(\|u\|T/\epsilon+1)}\log(2A\|u\|T\sqrt{\log(\|u\|T/\epsilon+1)}/(M\epsilon)+1)}{M}\\
    &\quad\quad + \frac{2A^3F\|u\|\sqrt{\log(\|u\|T/\epsilon+1)\log(2A\|u\|T\sqrt{\log(\|u\|T/\epsilon+1)}/(M\epsilon)+1)}}{M^2}\\
    &\le 2\epsilon + B\|u\|\log(\|u\|T/\epsilon+1)+ \frac{4A\|u\|\sqrt{\log(\|u\|T/\epsilon+1)\sum_{t\in \B}\|h_t\|^2\log(1+\|c\|^2_{1:T})}}{\alpha}\\
    &\quad\quad+ \frac{4A\|u\|\alpha\sqrt{\log(\|u\|T/\epsilon+1)}}{K}\\
    &\quad\quad + \frac{2AB\|u\|\sqrt{\log(\|u\|T/\epsilon+1)}\log(2A\|u\|T\sqrt{\log(\|u\|T/\epsilon+1)}/(K\epsilon)+1)}{K}\\
    &\quad\quad + \frac{2A^3\|u\|\sqrt{\log(\|u\|T/\epsilon+1)\log(2A\|u\|T\sqrt{\log(\|u\|T/\epsilon+1)}/(K\epsilon)+1)}}{K\alpha}.
\end{align*}
Simplifying the expression yields
\begin{align*}
    & \E[ \regret_{\unconstrained}(u, \vec{c}) ] \\
    &\le 2\epsilon+\tilde O\left(\frac{\|u\|(\frac{\log(\|u\|T/\epsilon)^{3/2}\log\log(T\|u\|/\epsilon)}{K}+\sqrt{\log(\|u\|T/\epsilon)\sum_{t\in \B}\|h_t\|^2\log(1+\|c\|^2_{1:T})})}{\alpha}\right).
    \qedhere
    \end{align*}
\end{proof}

\subsection{Deterministic version}

Before providing the proof of Theorem \ref{thm:unconstraineddeterministic}, we need the following auxiliary statement.

\begin{lemma}\label{lem:det_unc_enough_hints}
Suppose $\B=\emptyset$. Then for all $t$, the deterministic version of Algorithm \ref{alg:unconstrained} guarantees:
\begin{align*}
    \sqrt{\|c\|^2_{1:T-1}} -K-1-\frac{K}{2\alpha} \le \sum_{t=1}^T \mathbbm{1}_{t} \langle c_t, h_t\rangle \le K\sqrt{1+\|c\|^2_{1:T-1}}.
\end{align*}
\end{lemma}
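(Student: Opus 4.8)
The plan is to track the running hint value $Q_t := \sum_{\tau=1}^t \mathbbm{1}_\tau \iprod{c_\tau, h_\tau}$ (so $Q_T$ is exactly the quantity to bound), using two elementary facts throughout: $0 \le \iprod{c_\tau,h_\tau} \le \|c_\tau\|\,\|h_\tau\| \le 1$ (Cauchy--Schwarz, and both vectors lie in the unit ball), so $Q$ is nondecreasing with each jump at most $1$; and, since $\bad=\emptyset$, every queried round satisfies $\iprod{c_\tau,h_\tau} \ge \alpha\|c_\tau\|^2$. The querying rule of the deterministic version of Algorithm~\ref{alg:unconstrained} is precisely ``$\mathbbm{1}_t=1$ iff $1 + Q_{t-1} \le K\sqrt{1+\|c\|^2_{1:t-1}}$''. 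The two inequalities are then proved by complementary ``last special round'' arguments.

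\emph{Upper bound.} Let $s$ be the last round with $\mathbbm{1}_s=1$ (if no round is queried, $Q_T=0$ and there is nothing to prove). The rule fired at $s$, so $1 + Q_{s-1} \le K\sqrt{1+\|c\|^2_{1:s-1}}$; combining with $Q_T = Q_s = Q_{s-1} + \iprod{c_s,h_s} \le Q_{s-1}+1$ and $s-1 \le T-1$ gives $Q_T \le K\sqrt{1+\|c\|^2_{1:s-1}} \le K\sqrt{1+\|c\|^2_{1:T-1}}$.

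\emph{Lower bound.} Let $s$ be the last round with $\mathbbm{1}_s=0$. If there is no such round, every round is queried; querying round $1$ (where $Q_0=0$ and $\|c\|^2_{1:0}=0$) forces $K\ge 1$, and then $Q_T = \sum_{t=1}^T \iprod{c_t,h_t} \ge \alpha\|c\|^2_{1:T} \ge \alpha\|c\|^2_{1:T-1}$; writing $x=\sqrt{\|c\|^2_{1:T-1}}$ and using $\alpha x^2 - x \ge -\tfrac1{4\alpha}$ yields $Q_T \ge x - \tfrac1{4\alpha} \ge x - K - 1 - \tfrac{K}{2\alpha}$. Otherwise the rule \emph{failed} at $s$, so $Q_{s-1} > K\sqrt{1+\|c\|^2_{1:s-1}} - 1$; since every round after $s$ is queried, $Q_T = Q_{s-1} + \sum_{t=s+1}^T \iprod{c_t,h_t} \ge Q_{s-1} + \alpha\bigl(\|c\|^2_{1:T}-\|c\|^2_{1:s}\bigr)$, and $\|c\|^2_{1:s} \le \|c\|^2_{1:s-1}+1$ gives
\[ Q_T > K\sqrt{1+a} + \alpha(B-a) - 1 - \alpha, \qquad a := \|c\|^2_{1:s-1} \le B := \|c\|^2_{1:T}. \]
Using $\sqrt{\|c\|^2_{1:T-1}} \le \sqrt{B} \le \sqrt{B-a} + \sqrt{a} \le \sqrt{B-a} + \sqrt{1+a}$, it suffices to verify $(K-1)\sqrt{1+a} + \bigl(\alpha(B-a)-\sqrt{B-a}\bigr) + K + \tfrac{K}{2\alpha} - \alpha \ge 0$, which follows from $\alpha y - \sqrt{y} \ge -\tfrac1{4\alpha}$ (with $y=B-a$) once $K\ge 1$.

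\emph{Main obstacle.} The bookkeeping above is routine; the one delicate step is the closing estimate in the lower bound. The slack $-K-1-\tfrac{K}{2\alpha}$ must absorb the worst case, which is $\|c\|^2_{1:T}$ of order $(K/\alpha)^2$ --- precisely where $\alpha(B-a)-\sqrt{B-a}$ is as negative as $-\tfrac1{4\alpha}$ and only a ``warm-up'' amount of hint value has accrued. The device that makes the constants line up is the subadditive split of $\sqrt{B}$, which pits the $K\sqrt{1+a}$ produced by the failed rule against a $\sqrt{1+a}$ term and isolates a clean $\alpha y - \sqrt{y} \ge -\tfrac1{4\alpha}$. (For $K<1$ the inequality is essentially vacuous or requires a finer accounting of how $Q_t$ shadows $K\sqrt{1+\|c\|^2_{1:t}}$ during the warm-up; this is not needed downstream, where $K$ is taken of at least constant order.)
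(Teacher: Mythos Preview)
Your argument is correct for the lemma \emph{as stated} (under the proviso $K\ge 1$ that you explicitly flag, and with $\alpha\le 1$ as assumed throughout the paper), but it follows a genuinely different route from the paper's and, as a consequence, yields a weaker conclusion than the paper actually establishes.

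\textbf{Upper bound.} Your ``last queried round'' argument is essentially the paper's one-line observation written out in full; the two agree here.

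\textbf{Lower bound.} Here the proofs diverge. The paper proceeds by induction on $t$, showing that $Z_t := 1+Q_t$ stays above $K\sqrt{1+\|c\|^2_{1:t}}$ minus a ``warm-up'' correction of the form $K\sum \tfrac{\|c_{t'}\|^2}{2\sqrt{\|c\|^2_{1:t'}}}$, accumulated only over early rounds where $\sqrt{\|c\|^2_{1:t'}}$ is still small; that correction is then bounded by $K/(2\alpha)$. Your approach is instead a direct, non-inductive ``last non-queried round'' argument: the failed query rule at $s$ pins down $Q_{s-1}$, the subsequent $\alpha$-good queries contribute $\alpha(B-a)$, and the subadditive split $\sqrt{B}\le\sqrt{B-a}+\sqrt{1+a}$ together with the scalar inequality $\alpha y-\sqrt{y}\ge -\tfrac{1}{4\alpha}$ closes the books.

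\textbf{What each approach buys.} Your argument is shorter and avoids inductive bookkeeping. However, the paper's induction actually proves the stronger lower bound $K\sqrt{\|c\|^2_{1:T-1}}-K-1-\tfrac{K}{2\alpha}$; the lemma statement appears to have dropped the leading $K$, and it is this $K$-amplified form that is invoked in the proof of Theorem~\ref{thm:unconstraineddeterministic} (to set $M=K$ in Lemma~\ref{lem:unconstrainedquality}). Your last-round decomposition does \emph{not} extend to the $K$-amplified bound: rerunning your calculation with target $K\sqrt{B}$ forces the estimate $\alpha y - K\sqrt{y}\ge -\tfrac{K^2}{4\alpha}$, and the residual $K+\tfrac{K}{2\alpha}-\alpha-\tfrac{K^2}{4\alpha}$ becomes negative once $K$ is moderately large. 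So while your proof is valid for the displayed inequality, matching what the paper needs downstream would require either the paper's inductive tracking or a different strengthening of your argument.
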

\begin{proof}
Define $Z_t = 1+\sum_{t=1}^T \mathbbm{1}_{t} \langle c_t, h_t\rangle$ with $Z_0=1$.
We will instead prove the equivalent bound:
\begin{align*}
    K\sqrt{\|c\|^2_{1:T-1}} -K- \frac{K}{2\alpha} \le Z_T \le 1+K\sqrt{1+\|c\|^2_{1:T-1}}.
\end{align*}
The upper bound is immediate from the definition of $Z_T$ and the fact that $\langle c_t,h_t\rangle \le 1$.
For the lower bound, we will prove a slightly different statement that we will later show implies the desired result: 
\begin{align*}
    \mbox{ for all $t\ge 0$, } Z_t \ge K\sqrt{1+\|c\|^2_{1:t}} -K\sum_{t'\le t| \sqrt{\|c\|^2_{1:t'}}\le \frac{1}{2\alpha}} \frac{\|c_{t'}\|^2}{2\sqrt{\|c\|^2_{1:t'}}}.
\end{align*}
We proceed by induction. The base case for $t=0$ is clear from definition of $Z_t$. Suppose the statement holds for some $t$. Then consider two cases, either $Z_t<K\sqrt{1+\|c\|^2_{1:t}}$ or not. If $Z_t\ge K\sqrt{1+\|c\|^2_{1:t}}$, then $Z_{t+1}=Z_t\ge K\sqrt{1+\|c\|^2_{1:t}}\ge K\sqrt{1+\|c\|^2_{1:t+1}}-K$ and so the statement holds. Alternatively, suppose $Z_t<K\sqrt{1+\|c\|^2_{1:t}}$. Then:

\begin{align*}
    Z_{t+1} &= Z_t + \langle c_{t+1},h_{t+1}\rangle\\
    &\ge K\sqrt{1+\|c\|^2_{1:t}} - K-\sum_{t'\le t| \sqrt{\|c\|^2_{1:t}}\le \frac{K}{2\alpha}} \frac{\|c_{t'}\|^2}{2\sqrt{\|c\|^2_{1:t'}}} + \alpha \|c_{t+1}\|^2\\
    &\ge K\sqrt{1+\|c\|^2_{1:t+1}} - \frac{K\|c_{t+1}\|^2}{2\sqrt{1+\|c\|^2_{1:t}}} - K-\sum_{t'\le t| \sqrt{\|c\|^2_{1:t}}\le \frac{K}{2\alpha}} \frac{\|c_{t'}\|^2}{2\sqrt{\|c\|^2_{1:t'}}} + \alpha \|c_{t+1}\|^2\\
    &\ge K\sqrt{1+\|c\|^2_{1:t+1}} - \frac{K\|c_{t+1}\|^2}{2\sqrt{\|c\|^2_{1:t+1}}} - K-\sum_{t'\le t| \sqrt{\|c\|^2_{1:t'}}\le \frac{K}{2\alpha}} \frac{\|c_{t'}\|^2}{2\sqrt{\|c\|^2_{1:t'}}} + \alpha \|c_{t+1}\|^2\\
    &\ge \sqrt{1+\|c\|^2_{1:t+1}}- K-\sum_{t'\le t+1| \sqrt{\|c\|^2_{1:t'}}\le \frac{K}{2\alpha}} \frac{\|c_{t'}\|^2}{2\sqrt{\|c\|^2_{1:t'}}},
\end{align*}
so that the induction is complete.

Finally, observe that if $\tau$ is the largest index such that $\sqrt{\|c\|^2_{1:t}}\le \frac{K}{2\alpha}$, then
\begin{align*}
    \sum_{t'\le t+1| \sqrt{\|c\|^2_{1:t'}}\le \frac{K}{2\alpha}} \frac{\|c_{t'}\|^2}{2\sqrt{\|c\|^2_{1:t'}}}
    \le \sum_{t'=1}^\tau \frac{\|c_{t'}\|^2}{2\sqrt{\|c\|^2_{1:t'}}}
    \le \sqrt{\|c\|^2_{1:\tau}}
    \le \frac{K}{2\alpha}.
    \qquad \qedhere
\end{align*}
\end{proof}

Now we can prove Theorem \ref{thm:unconstraineddeterministic}:

\thmunconstraineddeterministic*
\begin{proof}
From Lemma \ref{lem:det_unc_enough_hints} we have that the query cost is at most $K\sqrt{\|c\|^2_{1:T}}$. To bound the regret, we will appeal to Lemma \ref{lem:unconstrainedquality}, which requires finding values for $M,N,H,F$. First, again by Lemma \ref{lem:det_unc_enough_hints}, we have: 
\begin{align*}
    K\sqrt{1+\|c\|^2_{1:T}} -3K-1-\frac{K}{2\alpha}\le K\sqrt{\|c\|^2_{1:T-1}} - K-1-\frac{K}{2\alpha}\le \sum_{t=1}^T \mathbbm{1}_{t}\langle c_t, h_t\rangle.
\end{align*}
So that we may set $M=K$ and $N=3K+1+\frac{K}{2\alpha}$. Next, since $\B=\emptyset$, $H=0$. Finally, since all hints are $\alpha$-good, we have
\begin{align*}
    \sum_{t=1}^T \mathbbm{1}_{t}\langle c_t, h_t\rangle^2&\le \sum_{t=1}^T  \mathbbm{1}_{t}\langle c_t,h_t\rangle\le K\sqrt{\|c\|^2_{1:T}},
\end{align*}
so that we may take $F=K$. Therefore, noticing that the expected regret is the actual regret since the algorithm is deterministic, we have

\begin{align*}
    \regret_{\unconstrained}(u, \vec{c}) &\le  2\epsilon + B\|u\|\log(\|u\|T/\epsilon+1) + \frac{4A\|u\|(H+N)\sqrt{\log(\|u\|T/\epsilon+1)}}{M}\\
    &\quad\quad + \frac{2AB\|u\|\sqrt{\log(\|u\|T/\epsilon+1)}\log(2A\|u\|T\sqrt{\log(\|u\|T/\epsilon+1)}/(M\epsilon)+1)}{M}\\
    &\quad\quad + \frac{2A^3F\|u\|\sqrt{\log(\|u\|T/\epsilon+1)\log(2A\|u\|T\sqrt{\log(\|u\|T/\epsilon+1)}/(M\epsilon)+1)}}{M^2}\\
    &\le 2\epsilon + B\|u\|\log(\|u\|T/\epsilon+1) + 4A\|u\|\left(\frac{4}{\alpha} + \frac{1}{K}\right)\sqrt{\log(\|u\|T/\epsilon+1)}\\
    &\quad\quad + \frac{2AB\|u\|\sqrt{\log(\|u\|T/\epsilon+1)}\log(2A\|u\|T\sqrt{\log(\|u\|T/\epsilon+1)}/(K\epsilon)+1)}{K}\\
    &\quad\quad + \frac{2A^3\|u\|\sqrt{\log(\|u\|T/\epsilon+1)\log(2A\|u\|T\sqrt{\log(\|u\|T/\epsilon+1)}/(K\epsilon)+1)}}{K}\\
    &\le 2\epsilon + O\left(\frac{\|u\|\sqrt{\log(\|u\|T/\epsilon+1)}}{\alpha} + \frac{\|u\|\log^{3/2}(\|u\|T/\epsilon)\log\log(\|u\|T/\epsilon)}{K}\right).
    \qedhere
\end{align*}

\end{proof}

\end{document}